\journal{\ldots}
\newtheorem{proposition}{Proposition}
\newtheorem{definition}{Definition}
\newtheorem{theorem}{Theorem}
\newtheorem{corollary}{Corollary}
\newtheorem{lemma}{Lemma}
\newtheorem{fact}{Fact}
\newtheorem{example}{Example}
\newtheorem{remark}{Remark}
\newcommand{\B}{\mathcal{B}}
\newcommand{\A}{\mathcal{A}}
\newcommand{\verteq}{\rotatebox{90}{=}}
\begin{document}

\begin{frontmatter}

\title{
%Graded Acceptability in Abstract Argumentation
%Graded Acceptability in Argumentation Theory
%Graded Argumentation Theory
%On the Graded Acceptability of Arguments: \\
%\large{Abstract Semantics and Applications to Instantiated Argumentation}\tnoteref{mytitlenote}
On the Graded Acceptability of Arguments \\ in Abstract and Instantiated Argumentation\tnoteref{mytitlenote}
}
\tnotetext[mytitlenote]{The article extends earlier work presented at IJCAI'15 \cite{grossi15graded}. We thank the participants of IJCAI'15, the Imperial College Logic Seminar (Nov., 2013) and the Japan National Institute of Informatics Seminar (Oct. 2013), where parts of this work were presented, for many helpful comments and suggestions. We are also indebted to Matthias Thimm for many helpful comments on an earlier draft of this paper. Davide Grossi acknowledges support for this research by EPRSC under grant EP/M015815/1. The bulk of this research was carried out while  Grossi was affiliated to the Department of Computer Science, University of Liverpool.}

%% Group authors per affiliation:
\author{Davide Grossi}
\address{University of Groningen}
%\fntext[myfootnote]{Since 1880.}

\author{Sanjay Modgil}
\address{King's College London}

%% or include affiliations in footnotes:
%\author[mymainaddress,mysecondaryaddress]{Elsevier Inc}
%\ead[url]{www.elsevier.com}

%\author[mysecondaryaddress]{Global Customer Service\corref{mycorrespondingauthor}}
%\cortext[mycorrespondingauthor]{Corresponding author}
%\ead{support@elsevier.com}

%\address[mymainaddress]{1600 John F Kennedy Boulevard, Philadelphia}
%\address[mysecondaryaddress]{360 Park Avenue South, New York}

%%%%%%%%%%%%%%%%%%%%%%%%%%%%%%%%%%%%%%%%%%%%%

\begin{abstract}
The paper develops a formal theory of the degree of justification of arguments, which relies solely on the structure of an argumentation framework, and which can be successfully interfaced with approaches to instantiated argumentation. The theory is developed in three steps. First, the paper introduces a graded generalization of the two key notions underpinning Dung's semantics: self-defense and conflict-freeness. This leads to a natural generalization of Dung's semantics, whereby standard extensions are weakened or strengthened depending on the level of self-defense and conflict-freeness they meet. The paper investigates the fixpoint theory of these semantics, establishing existence results for them.
Second, the paper shows how graded semantics readily provide an approach to argument rankings, offering a novel contribution to the recently growing research programme on ranking-based semantics.  Third, this novel approach to argument ranking is applied and studied in the context of instantiated argumentation frameworks, and in so doing is shown to account for a simple form of accrual of arguments within the Dung paradigm. Finally, the theory is compared in detail with existing approaches.

%Third, this novel approach to argument ranking is applied by showing how it can arbitrate between competing preferred extensions, and it is further illustrated in the context of instantiated argumentation, and compared in detail with existing approaches.
\end{abstract}

%%%%%%%%%%%%%%%%%%%%%%%%%%%%%%%%%%%%%%%%%%%%%%

%\begin{keyword}
%\texttt{elsarticle.cls}\sep \LaTeX\sep Elsevier \sep template
%\MSC[2010] 00-01\sep  99-00
%\end{keyword}

\end{frontmatter}

%\linenumbers

%%%%%%%%%%%%%%%%%%%%%%%%%%%%%%%%%%%%%%%%%%%%%%%%

\section{Introduction}\label{Sec:Introduction}

 Argumentation is a form of reasoning that makes explicit the reasons for the
conclusions that are drawn, and how conflicts between reasons are resolved. Recent years
have witnessed intensive study of both logic-based and  human orientated  models of argumentation and their
use in formalising agent reasoning, decision making, and inter-agent dialogue \cite{ArgInAI, RahSim}. Much of this work builds on Dung's seminal theory of abstract argumentation
\cite{dung95acceptability}.
A Dung argumentation framework ($AF$) \cite{dung95acceptability} is essentially a directed graph relating arguments by binary, directed forms of conflict, called attacks. The sceptically or credulously justified arguments are those in the intersection, respectively union, of sets---called extensions---of `acceptable' arguments evaluated under various semantics (see \cite{baroni11introduction} for an overview). Extensions are evaluated based on two core principles. Firstly,  a set of arguments should not contain internal attacks, that is, it should be {\em conflict-free}.  Secondly,  it should defend itself in the sense that any argument $a$ in the set is either un-attacked, or if attacked by some argument $b$, there is then an argument in the set that defends $a$ by attacking $b$ (in which case $a$ is said to be defended by, or {\em acceptable with respect to}, the set of arguments).
Arguments and attacks may be seen as primitive or assumed
%are assumed
to be defined by an instantiating set of sentences in natural language or in a formal logical language. The former case thus provides for characterisations of more human-orientated uses of argument in reasoning, while
in the latter case,
the claims of the justified arguments identify the
non-monotonic inferences from the instantiating set of logical formulae, thus providing for dialectical characterisations of non-monotonic logics \cite{dung95acceptability}.

Dung's theory has been extended in a number of directions. In particular, `exogenously' given information about the relative strength of arguments has been used to determine which attacks succeed as defeats, so that the acceptable arguments are evaluated with respect to the arguments related by defeats rather than attacks. In this way one can effectively arbitrate amongst credulously justified conflicting arguments.
Examples include \cite{AmCay,ModgilAIJ,Modgil2013361} that make use of a preference relation over arguments (where in the case of \cite{ModgilAIJ}, preferences are expressed by arguments that attack attacks amongst arguments) and \cite{BC03} that makes use of an ordering over the values promoted by  arguments. Other notable developments include approaches that associate probabilities with arguments \cite{Hunter2017, Li2012}, and weights on attacks so  that extensions do not necessarily comply with the conflict-freeness requirement on Dung extensions;  arguments in an extension may attack each other provided that the summative weight of attacks does not exceed a given `inconsistency budget' \cite{dunne11weighted}.\footnote{See also \cite{Janssen} which account for the relative strength of attacks amongst arguments.}

\paragraph{Context: graduality in argumentation}

It has long been recognized---since at least \cite{BesnardHunter01} and \cite{CLS04}---that one of the drawbacks of Dung's theory of abstract argumentation is the limited level of granularity the theory offers in differentiating the strength or status of arguments (essentially three, but cf. \cite{wu10labelling}). Consider the following informal example:

\begin{example}\label{ExampleGuilty} Suppose an argument $I$ concluding the presumed innocence of a suspect. $I$ is attacked by an argument $G$ which consists of two sub-arguments $G_1$ and $G_2$ that respectively conclude that a suspect had \emph{opportunity} and \emph{motive}, where $G$ defeasibly extends these sub-arguments to conclude that the suspect is guilty. Suppose an argument $I_1$ (in support of an alibi) that attacks $G_1$ on the assumption that the suspect does not have an alibi. Suppose then an additional argument $I_2$ that attacks an assumption in $G_2$.
The level to which $I$ is defended, and so said to be justified, is intuitively increased in the case that we have counter-arguments $I_1$ and $I_2$ that argue against the suspect having motive \emph{and} opportunity,
 as compared with when one only has the counter-argument $I_1$.
\end{example}

The above example highlights one amongst a number of intuitions that are formalised by the above mentioned \cite{BesnardHunter01} and \cite{CLS04}, and more recently \cite{AmgoudNaim,Amgoud:2016,conf/jelia/MattT08}, in which the numbers of attackers and defenders are used to give a more fine grained assignment of status to (and hence ranking of) arguments. While these approaches are defined wth respect to $AF$s, the status of arguments is not determined using the standard `Dungian' concepts of sets of arguments (extensions) and  arguments defended by these sets; rather, measures of the strengths of arguments based on the numbers of attackers and/or defenders are propagated through the $AF$ graph\footnote{The exception being \cite{conf/jelia/MattT08} in which the strength of arguments is evaluated by reference to two person games in which a proponent defends an argument against counter-attacks by an opponent.}. These approaches, which we refer to here as `propagation based approaches'  (see \cite{bonzon16comparative} for a recent comparative overview), have also been developed to account for exogenously given information about the strength of arguments, which provide the initial measures that are then adjusted based on how they are propagated through the graph. Notable examples of the latter include \cite{gabbay-rodrigues:13,Gabbay2015} and \cite{conf/ijcai/LeiteM11}.

\paragraph{Paper Contribution}

The central aim of this paper is to show that a more fine grained assignment of status to arguments that does not rely on exogenous information, and thus only on the numbers of attackers and defenders of arguments, can be formalised as a natural generalisation of the Dungian notions of conflict free sets of arguments and the defense of arguments by sets of arguments.
%Like the above approaches, the current paper aims at introducing a more fine-grained, gradual, form of acceptability in abstract argumentation which, however, should relate in a clear way to the standard acceptability-based semantics originally developed by Dung.
This aim is achieved in the following way.
{\em First}, our starting point is the classical definition of an admissible set of arguments as one that is conflict free and that defends all its contained arguments \cite{dung95acceptability}. We show that the logical structure of this definition naturally generalises so as to yield more fine grained graded notions of conflict freeness and defense that account for the number of attackers and defenders,  thereby obtaining a graded variant of the concept of admissibility.
{\em Second}, this graded form of admissibility serves as a basis for defining graded variants of all the classic semantics of abstract argumentation studied in \cite{dung95acceptability}: complete, grounded, stable and preferred. Dung's definitions of the standard semantics can all be retrieved as special cases of our graded variants, showing that the form of graduality this paper studies is rooted in a principled way in the classical theory of abstract argumentation. These graded semantics are, intuitively, ways of interpreting the standard Dung semantics in `stricter' or `looser' ways. For instance, the grounded semantics can be interpreted more `�strictly' by requiring that all attackers be counter-attacked by at least two arguments, instead of just one as in the classic case. So each Dung semantics now comes  equipped with a family of strengthenings and weakenings, which we call {\em graded semantics}.
{\em Third}, these strengthenings and weakenings define a natural (partial) ordering dictated by set-inclusion: a stricter semantics will define sets of arguments which are subsets of the sets defined by a weaker one. This natural ordering then induces an ordering on the arguments themselves, thereby defining a ranking over the arguments in the framework. Such a ranking enables arbitration amongst arguments without recourse to exogenous preference information.
 As this ranking is induced from a generalization of Dung standard semantics, our approach is at the outset methodologically distinct from propagation-based approaches, where argument rankings are defined directly by reference to the argument graph.  {\em Fourth}, we study the application of graded semantics and their induced rankings to two key instantiations of Dung $AF$s: classical logic instantiations that, under the standard Dung semantics, yield a dialectical characterisation of non-monotonic inference in Preferred Subtheories \cite{bre89}, and instantiations that accommodate more human orientated uses of argumentation through  the use of schemes and critical questions \cite{Wal96}. In so doing, we seek to substantiate some of the intuitions captured by our generalisation of Dung semantics,
%to the graded case,
and show how the graded semantics capture a simple form of counting based accrual of arguments, which has traditionally been regarded as being incompatible with Dung's theory \cite{PrakAccrual}.

%\sm{{\em Fourth} ... Instantiations (make point that although often regarded as primitive notions - important also to study instantiations in order to relate to dialectical characterisations of non-monotonic inference and human uses of argument (natural language sentences as via use of schemes and CQ - see Section ?) - yet propagation based, and other ranking approaches, do not study logical instantiations - the pint is, that by relating to instantiated dialectical characterisations of non-montonic inference, opens up new insights in development of non-monotonic logics, which we will briefly discuss in conclusions), arbitrate amongst credulous without exogenous, accrual.}.

%To summarise, our key contribution consists in enabling a ranking-based approach to the status of arguments which is %rooted in the classical Dungian theory through the development of graded generalisations of the standard Dung semantics, %and which does not rely on exogenous information.
%We believe this to be of significance given that Dung's theory of argumentation provides rational criteria for the evaluation of %arguments that enable dialectical characterisations of well known non-monotonic inference relations %%%%%%%%%%%%%%\cite{dung95acceptability,g+m00,Modgil2013361,young16}, whilst also accommodating more human orientated modes of %argument and debate \cite{AddValue}.

\paragraph{Outline of the paper}
The paper is structured in three parts.
\fbox{Part 1} concerns the development of the abstract theory of graded argumentation. It starts with Section \ref{Sec:Background}, where we review Dung's theory, giving prominence to its fixpoint-theoretic underpinnings, which have remained relatively under-investigated in the literature, and that we thus consider to be of some independent interest.
Section \ref{Sec:Graded Acceptability} then generalises Dung's notions of conflict freeness and  defense,  yielding grading variants of these notions
in terms of the number of arguments attacking and defending any given  $a$ whose acceptability with respect to a given set of arguments is at issue. This yields a ranking among types of conflict freeness and defense.
%of arguments according to the degree to which a set defends these arguments.
Section \ref{Sec:GradedSemantics} then generalises Dung's standard semantics,
%; that is to say, the notions of admissible, complete, grounded, preferred and stable
so that extensions are graded with respect to the attacks and counter-attacks on their contained arguments. These semantics---which we call {\em graded}---are shown to generalise Dung's theory and are studied providing constructive existence results.
\fbox{Part 2} first shows, in Section \ref{Sec:Ranking}, how the new graded semantics yield a natural way of ranking arguments according to how strongly they are justified under different graded semantics, thereby enabling endogenous arbitration among credulously justified arguments. Then, Section \ref{Sec:Applications} illustrates application of these type of rankings to \emph{ASPIC+} instantiations of $AF$s \cite{Modgil2013361} that formalise stereotypical patterns of argumentation encoded in schemes and critical questions \cite{Wal96}, thus accounting for more human-orientated uses of argument,  and  \emph{ASPIC+} instantiations of $AF$s that provide dialectical characterisations of non-monotonic inference in Preferred Subtheories \cite{bre89}. Both types of instantiation are then shown to capture a simple form of counting based accrual, whereby multiple arguments in support of the same conclusion mutually strengthen each other.
%and in so doing allows for arbitration amongst credulously justified conflicting arguments, but without recourse to the use of exogenous information.
%their degree of justification under a given semantics, so yielding more fine-grained inference relations for logical theories instantiating Dung frameworks, and yielding %preferences over arguments without recourse to exogenous information.
%  and focus on the exposition of the motivation and implications of our framework).
In \fbox{Part 3}, Section \ref{Sec:RelatedWork} develops a thorough comparison of our approach to the existing approaches to graduality and rankings in argumentation, leveraging the systematization recently introduced in \cite{bonzon16comparative}. This allows us to place more precisely graded argumentation in the growing landscape of ranking-based semantics.
%Section \ref{Sec:RelatedWork} then compares our work with propagation-based approaches, and in particular the extent to which our approach satisfies properties of the propagation-based approaches that have proposed %as a comparative framework for these approaches \cite{bonzon16comparative}.
%In Section \ref{Sec:Future} we highlight directions for future work; notably, how our graded approach can be further generalised to account for differential weights on attacks.
We conclude in Section \ref{Sec:Future} outlining some avenues for future research in graded argumentation.

%%%%%%%%%%%%%%%%%%%%%%%%%%%%%%%%%%%%%%%%%%%%%%%%

\section{Preliminaries: Abstract Argumentation} \label{Sec:Background}

This preliminary section reviews key concepts and results from Dung's abstract argumentation theory \cite{dung95acceptability}. The presentation we provide gives prominence to the fixpoint theory underpinning Dung's theoretical framework. After \cite{dung95acceptability} the fixpoint theory of abstract argumentation has remained relatively under-investigated in the literature. It is, however, the most natural angle from which to pursue the objectives of this paper. We are unaware of any comprehensive exposition to date of the fixpoint theory of abstract argumentation, and so hope this preliminary section is of  independent interest.

Since this paper will provide a generalization of Dung's original theory, all results presented in this section can actually be obtained as direct corollaries of the results we will establish later in Section \ref{Sec:GradedSemantics}. This, we argue, should be a desirable feature for any theory of graduality in argumentation which bases itself on Dung's original proposal.
For completeness of the exposition, direct proofs of the results dealt with in this section can be found in Section \ref{appendix:proofs}.

\subsection{Basic Definitions}

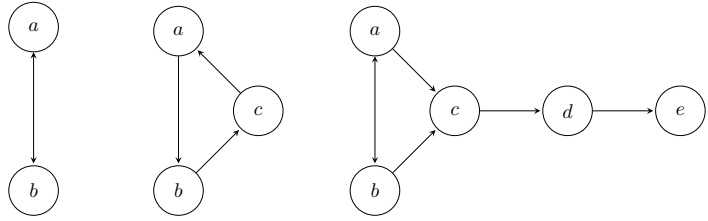
\begin{figure}
\begin{center}
\scalebox{0.75}{
%\begin{minipage}[tr]{0.40\textwidth}
%\centering
\begin{tikzpicture}[node distance=2.9cm,shorten >=1pt,>=stealth,auto]

\node[state]	(a)                      {$a$};
\node[state]	(b)  [below of= a] {$b$};

\path[<->] (a) edge	node {} (b);

\end{tikzpicture}

\hspace{1.5cm}
\begin{tikzpicture}[node distance=2cm,shorten >=1pt,>=stealth,auto]

\node[state]	(c)     {$c$};
\node[state]	(a)  [above left of= c]  {$a$};
\node[state]	(b)  [below left of= c] {$b$};

\path[->] (a) edge	node {} (b);
\path[->] (c) edge	node {} (a);
\path[->] (b) edge	node {} (c);

\end{tikzpicture}

\hspace{1cm}
\begin{tikzpicture}[node distance=2cm,shorten >=1pt,>=stealth,auto]

\node[state]	(c)     {$c$};
\node[state]	(a)  [above left of= c]  {$a$};
\node[state]	(b)  [below left of= c] {$b$};

\node[state]	(d)  [right of= c] {$d$};
\node[state]	(e)  [right of= d] {$e$};

\path[<->] (a) edge	node {} (b);
\path[->] (a) edge	node {} (c);
\path[->] (b) edge	node {} (c);
\path[->] (c) edge	node {} (d);
\path[->] (d) edge	node {} (e);

\end{tikzpicture}
%\end{minipage}
}
\end{center}
\caption{Argumentation frameworks}
\label{figure:graphs}
\end{figure}

\begin{definition}[Frameworks]\label{definition:attack_graph}
An {\em argumentation framework} (AF) $\Delta$ is a tuple $\tuple{\A, \ar}$ where $\A \neq \emptyset$, and $\ar$ $\subseteq$ $\A^2$ is a binary attack relation on $\A$. Notation
%The set of all attack graphs on a given set $A$ is denoted $\ARG(A)$. The set of all attack graphs is denoted $\ARG$.
$x \ar y$ denotes that $x$ attacks $y$, and $X \ar x$ denotes that $\exists y \in X$ s.t. $y \ar x$. Similarly, $x \ar X$ denotes that $\exists y \in X$ s.t. $x \ar y$. For a given $\Delta$ we write $\overline{x}$ to denote $\set{y \in \Delta \mid y \rightarrow x}$ (the direct attackers of $x$), and $\overline{\overline{x}}$ to denote $\set{z \in \Delta \mid z \rightarrow y, y \in \overline{x}}$ (the direct defenders of $x$). Also, as is customary, $+$ denotes the transitive closure of a given relation, so that $b \ar^+ a$ stands for `there exists a path of attacks from $b$ to $a$'.
Finally, an AF such that for each $x$, $\overline{x}$ is finite, is called {\em finitary}, whereas an AF with a finite number of arguments $\A$ is called {\em finite}, else {\em infinite}.

%Given an argument $a$, we denote by $\invrange{\Frame}{a}$ the set of arguments attacking $a$: $\set{b \in A \mid b \ar %a}$.
%For two arguments $a, b$ in a given attack graph $\Frame$, we say that $b$ is accessible from $a$ if $a \al^+ b$, where $\al^+$ denotes the transitive closure %of the being-attacked relation. It is inaccessible, otherwise.
\end{definition}
We will sometimes refer to $AF$s as {\em attack graphs}.\footnote{Although note that all definitions in this paper equally apply to `defeat' graphs  which assume a binary defeat relation on arguments, obtained through use of preferences in deciding which attacks succeed as defeats.} Figure \ref{figure:graphs} depicts three $AF$s. An argument $a \in \A$ is said to be acceptable w.r.t. $X \subseteq \A$, if any argument attacking $a$ is attacked by some argument in $X$, in which case $X$ is said to defend $a$. An $AF$'s characteristic (also called `defense') function, applied to some $X \subseteq \A$, returns the arguments defended by $X$ \cite{dung95acceptability} (henceforth, `$\pw$' denotes powerset): %We call this function the defense function.

\begin{definition}[Defense Function]\label{definition:defense}

The defense function $\cf{\Delta}: \pw{\A} \map \pw{\A}$ for $\Delta = \tuple{\A, \ar}$ is defined as follows. For any $X \subseteq \A$:\\[-15pt]
\begin{eqnarray*}
\cf{\Delta}(X) & = & \set{x \in \A \mid \forall y \in \A: \IF y \ar x \THEN X \ar y}\\[-18pt]
\end{eqnarray*}
%Since no confusion arises we will normally drop the subscript $\Delta$ in $\cf{\Delta}$.
Where no confusion arises we may drop the subscript $\Delta$ in $\cf{\Delta}$.
\end{definition}
%One can define the extensions, and hence the justified arguments, of an $AF$ $\Delta$ under Dung's semantics,
%in terms of the fixpoints ($X= \cf{\Delta}(X)$) or post-fixpoints ($X \subseteq \cf{\Delta}(X)$) of the defense function, and what we call the neutrality function (introduced in %\cite{pollock87defeasible}):
An argument $a \in \A$ is not attacked by a set $X \subseteq \A$ if no argument in $X$ attacks $a$. One can define a function which, applied to some $X \subseteq \A$ in an $AF$, returns the arguments that are not attacked by $X$. This function was introduced by Pollock in \cite{pollock87defeasible} for his theory of defeasible reasoning, and we refer to it here as the `neutrality function'.

\begin{definition}[Neutrality Function]\label{definition:neutrality}
The neutrality function $\cff{\Delta} : \pw{\A} \map \pw{\A}$ for $\Delta = \tuple{\A, \ar}$ is defined as follows. For any $X \subseteq \A$:
%\begin{eqnarray*}
\[
\cff{\Delta}(X)  =  \set{x \in \A \mid \NOT X \ar x}.
\]
%Since no confusion arises we will normally drop the subscript $\Delta$ in $\cff{\Delta}$.
Again, where no confusion arises we may drop the subscript $\Delta$ in $\cff{\Delta}$.
%\end{eqnarray*}
%The definitions of $n$-fold iteration, stream, and stabilization are like in Definition \ref{definition:defense}.
\end{definition}

One final bit of terminology. In what follows we will often use the notion of function iteration for $\cf{}$ and $\cff{}$ which we define in the standard inductive way, for $F \in \set{\cf{}, \cff{}}$: $F^0(X) = X$; $F^{k+1}(X) = F(F^k(X))$.

\begin{example}[Defense and neutrality in Figure \ref{figure:graphs}]
The functions applied to the symmetric graph of Figure \ref{figure:graphs} (left) yield the following equations:
\[
\begin{array}{ccccccc}
\cf{}(\emptyset) & = & \emptyset & & \cff{}(\emptyset) & = & \set{a, b} \\
\cf{}(\set{a}) & = & \set{a} & & \cff{}(\set{a}) & = & \set{a} \\
\cf{}(\set{b}) & = & \set{b} & & \cff{}(\set{b}) & = & \set{b} \\
\cf{}(\set{a,b}) & = & \set{a,b} & & \cff{}(\set{a,b}) & = & \emptyset
\end{array}
\]
Notice that the output of $\cff{}$ on $\emptyset$ corresponds to the whole set of arguments, as no arguments can be attacked by $\emptyset$. Notice also that while $\set{a}$ and $\set{b}$ are included in $\cff{}(\set{a})$ and $\cff{}(\set{b})$, $\set{a,b}$ is not.
\end{example}

One can define the extensions of an $AF$ $\Delta$ under Dung's semantics,
in terms of the fixpoints ($X= \cf{\Delta}(X)$ and $X= \cff{\Delta}(X)$) or post-fixpoints ($X \subseteq \cf{\Delta}(X)$ and $X \subseteq \cff{\Delta}(X)$) of the defense and neutrality functions, as recapitulated in Table \ref{table:basics}.
The justified arguments are then defined under various semantics:
%\footnote{Henceforth, the subscript $\Delta$ indexing $\cf{}$ and $\cff{}$ will often be omitted.}
\begin{definition}[Justification under Semantics] \label{def:DungSemantics} Let $\Delta$ = $\tuple{\A,\ar}$.% $X \subseteq \A$.
%Then:
%\begin{itemize}
 % $X$ is a {\em conflict-free extension} of $\Delta$ iff $X \subseteq \cff{\Delta}(X)$;
  %\item[] $X$ is self-defended in $\Delta$ & iff & $X \subseteq \cf{\Delta}(X)$
 % $X$ is an {\em admissible extension} of $\Delta$ iff  $X$ is conflict-free and $X \subseteq \cf{}(X)$;
 % $X$ is a {\em complete extension} of $\Delta$ iff  $X \subseteq \cff{}(X)$ and $X = \cf{}(X)$;
 % $X$ is a {\em stable extension} of $\Delta$ iff $X = \cff{}(X)$;
 % $X$ is the {\em grounded extension} of $\Delta$ iff  $X$ is the smallest complete extension of $\Delta$; % = \lfp.\cf{\Delta}$
 % $X$ is a {\em preferred extension} of $\Delta$ iff  $X$ is a largest complete extension of $\Delta$.
%  \noindent
Then for semantics $S \in \{$grounded, stable, preferred$\}$\footnote{Typically, the justified arguments are not defined w.r.t. the complete semantics, which subsume each of grounded, stable and preferred.}, $a \in \A$ is {\em credulously}, respectively {\em sceptically, justified}
under $S$, if $a$ is in at least one, respectively all, $S$ extensions of $\Delta$. %These notions are recapitulated in Table \ref{table:basics}.
%\sm{Removed `complete' as this is rarely, if ever, cited as a semantics under which arguments' justified %status are evaluatd.}
\end{definition}

\begin{table}[t]
%\hspace*{-1.5cm}
%\begin{center}
\begin{tabular}{lcl}
\hline
$X$ is conflict-free in $\Frame$ & iff & $X \subseteq \cff{\Frame}(X)$ \\
$X$ is self-defended in $\Frame$ & iff & $X \subseteq \cf{\Frame}(X)$ \\
$X$ is admissible in $\Frame$ & iff & $X \subseteq \cff{\Frame}(X)$ and $X \subseteq \cf{\Frame}(X)$ \\
$X$ is a complete extension in $\Frame$ & iff & $X \subseteq \cff{\Frame}(X)$ and $X = \cf{\Frame}(X)$ \\
$X$ is a stable extension of $\Frame$& iff & $X = \cff{\Frame}(X)$ \\
$X$ is the grounded set in $\Frame$ & iff &  $X$ is the smallest complete ext. of $\Frame$ ($X = \lfp.\cf{\Frame}$)\\
%$X$ is credulously admissible in $\Frame$ & iff & $\forall a \in X: \exists Y$ s.t. $Y$ is admissible and $a \in %Y$ \\
$X$ is a preferred extension of $\Frame$& iff & $X$ is a largest complete extension of $\Frame$ \\
\hline
\end{tabular}
%\end{center}
\caption{Classical notions of abstract argumentation theory from \cite{dung95acceptability}.}
\label{table:basics}
\end{table}

Finally, we recapitulate some well-known properties, first established in \cite{dung95acceptability}, of the defense and neutrality functions that will be referred to later:
\begin{fact} \label{fact:simple}
Let $\Delta = \tuple{\A, \ar}$ be an $AF$ and $X, Y \subseteq \A$. The following holds:
\begin{eqnarray*}
X \subseteq Y & \IMPLIES & \cf{}(X) \subseteq \cf{}(Y) \\
X \subseteq Y & \IMPLIES & \cff{}(Y) \subseteq \cff{}(X) \\
\cf{}(X) & = & \cff{}(\cff{}(X))
\end{eqnarray*}
\end{fact}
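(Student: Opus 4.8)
The plan is to prove all three items by directly unfolding the definitions of the defense function $\cf{}$ and the neutrality function $\cff{}$; no fixpoint machinery is required. For the first item, I would fix $X \subseteq Y$ and an arbitrary $x \in \cf{}(X)$ and show $x \in \cf{}(Y)$: by definition every attacker $y$ of $x$ is counter-attacked by some $z \in X$, and since $X \subseteq Y$ we get $z \in Y$, so $Y \ar y$; as $y$ ranged over all attackers of $x$, this yields $x \in \cf{}(Y)$. The second item is the contrapositive of the (trivial) monotonicity of the predicate ``$\,\cdot \ar x$'' in its set argument: if $x \in \cff{}(Y)$ then no argument of $Y$ attacks $x$, hence a fortiori no argument of the smaller set $X$ attacks $x$, so $x \in \cff{}(X)$.

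The third item, $\cf{}(X) = \cff{}(\cff{}(X))$, is the only one needing a little care with the nesting of quantifiers and negations, and I would spell it out explicitly. Unfolding: $x \in \cff{}(\cff{}(X))$ iff no argument of $\cff{}(X)$ attacks $x$, i.e.\ iff for every $y$ with $y \ar x$ we have $y \notin \cff{}(X)$. Now $y \in \cff{}(X)$ abbreviates $\NOT (X \ar y)$, i.e.\ $\NOT\,\exists z \in X.\, z \ar y$, so $y \notin \cff{}(X)$ is equivalent to $X \ar y$ (the two negations cancel). Substituting, $x \in \cff{}(\cff{}(X))$ iff for every $y$, $y \ar x$ implies $X \ar y$, which is precisely the defining condition for $x \in \cf{}(X)$. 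Hence the two sets coincide.

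There is no real obstacle here: all three are elementary set-theoretic unfoldings of Definitions \ref{definition:defense} and \ref{definition:neutrality}. The single place where one could slip is in the quantifier--negation bookkeeping of the last identity — in particular remembering that the negation of ``$X \ar y$'' (an existential statement) is again ``$X \ar y$'' when it appears under the inner negation of $\cff{}$ — so I would present that step in full and keep the first two items brief.
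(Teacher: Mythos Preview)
Your proposal is correct and matches the paper's approach: the paper does not prove Fact \ref{fact:simple} directly (it cites it as well-known from \cite{dung95acceptability}), but its proof of the graded generalization (Fact \ref{fact:properties_dn}) proceeds by exactly the same definitional unfolding you give, including the chain-of-equalities argument for $\cf{}(X) = \cff{}(\cff{}(X))$.
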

That is, function $\cf{}$ is monotonic, function $\cff{}$ is antitonic, and the composition of $\cff{}$ with itself, which we will also denote $\cff{} \circ \cff{}$, is function $\cf{}$. For example, in Figure \ref{figure:graphs} (right), we have that $\cff{}(\set{a})$ = $\set{a,d,e}$, and $\cff{}(\set{a,d,e}) = \set{a,d} = \cf{}(\set{a})$.

%Recall also that for a finitary $AF$, iterating (up to $\omega$) the application of $\cf{\Delta}$ from $\emptyset$
%yields the least fixed point of $\cf{\Delta}$ (grounded extension) \cite{dung95acceptability}.

\begin{fact}[$\omega$-continuity\footnote{Cf. \cite[Lemma 28]{dung95acceptability}.}] \label{fact:continuous}
If $\Frame$ is finitary, then $\cf{\Frame}$ is (upward-)continuous for any $X \subseteq \A$, i.e., for any upward directed set $D \in \pw{\pw{\A}}$ of finite subsets of $\A$:\footnote{We recall that an upward directed set $D \in \pw{\pw{\A}}$ is a set of sets such that any two elements $X$ and $Y$ in $D$ have an upper bound in $D$, that is, there also exists a superset $Z \supseteq X \cup Y$ in $D$. A downward directed set is defined dually in the obvious way.}
\begin{align}
\cf{\Frame}\left(\bigcup_{X \in D} X \right) & = \bigcup_{X \in D} \cf{\Frame}(X). \label{eq:upward}
\end{align}
Similarly, $\cf{\Frame}$ is (downward-)continuous for any $X \subseteq \A$, i.e., for any downward directed set $D \in \pw{\pw{\A}}$:
\begin{align}
\cf{\Frame}\left(\bigcap_{X \in D} X\right) & = \bigcap_{X \in D} \cf{\Frame}(X). \label{eq:downward}
\end{align}
\end{fact}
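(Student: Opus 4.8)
The plan is to prove the two identities \eqref{eq:upward} and \eqref{eq:downward} one at a time. In each the ``$\subseteq$''- or ``$\supseteq$''-inclusion that respects directions is a triviality coming from monotonicity of $\cf{\Frame}$ (Fact~\ref{fact:simple}), and the opposite inclusion is where finitariness of $\Frame$ does all the work, via a small compactness step. Throughout I take $D$ to be non-empty, as is standard for (downward/upward) directed sets.

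For \eqref{eq:upward}: the inclusion $\bigcup_{X\in D}\cf{\Frame}(X)\subseteq\cf{\Frame}\big(\bigcup_{X\in D}X\big)$ is immediate, since each $X\subseteq\bigcup_{X\in D}X$ and $\cf{\Frame}$ is monotone. For the converse I would take $a\in\cf{\Frame}\big(\bigcup_{X\in D}X\big)$ and inspect its attacker set $\overline{a}$ (the case $\overline{a}=\emptyset$ being trivial given $D\neq\emptyset$). By definition of $\cf{\Frame}$, each $y\in\overline{a}$ is attacked by some argument of $\bigcup_{X\in D}X$, hence by an argument of some member $X_y\in D$. Since $\Frame$ is finitary, $\overline{a}$ is finite, so $\{X_y\mid y\in\overline{a}\}$ is a finite sub-family of $D$; iterating the directedness property finitely many times produces an upper bound $Z\in D$ with $X_y\subseteq Z$ for all $y\in\overline{a}$. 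Then $Z$ attacks every member of $\overline{a}$, i.e.\ $a\in\cf{\Frame}(Z)\subseteq\bigcup_{X\in D}\cf{\Frame}(X)$.

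For \eqref{eq:downward}: the inclusion $\cf{\Frame}\big(\bigcap_{X\in D}X\big)\subseteq\bigcap_{X\in D}\cf{\Frame}(X)$ is again monotonicity. For the converse I would argue by contradiction: assume $a\in\bigcap_{X\in D}\cf{\Frame}(X)$ but $a\notin\cf{\Frame}\big(\bigcap_{X\in D}X\big)$. The latter gives some $y\in\overline{a}$ with $\overline{y}\cap\bigcap_{X\in D}X=\emptyset$, while the former gives $\overline{y}\cap X\neq\emptyset$ for every $X\in D$ (because $a\in\cf{\Frame}(X)$). Now $\{\overline{y}\cap X\mid X\in D\}$ is itself downward directed (if $Z\subseteq X\cap X'$ with $Z\in D$, then $\overline{y}\cap Z\subseteq(\overline{y}\cap X)\cap(\overline{y}\cap X')$), and, since $\Frame$ is finitary, its members are non-empty subsets of the \emph{finite} set $\overline{y}$. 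A downward directed family of non-empty subsets of a finite set has a $\subseteq$-least element $M$, which is non-empty; but $M\subseteq\overline{y}\cap\bigcap_{X\in D}X=\emptyset$, a contradiction.

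The only genuine obstacle I anticipate is being disciplined about exactly where finitariness enters — the finiteness of $\overline{a}$ in the first part and of $\overline{y}$ in the second — since this is precisely the point at which an a priori infinite, pointwise choice of witnesses collapses to a single witness inside $D$; both identities fail for non-finitary frameworks, so no step may slip in a hidden infinite limit. Everything else is routine: the trivial inclusions, the finite induction that a finite sub-family of a directed set has an upper bound, and the elementary fact that a downward directed family of subsets of a finite set has a $\subseteq$-least member. (One could alternatively derive both identities from $\cf{\Frame}=\cff{\Frame}\circ\cff{\Frame}$ (Fact~\ref{fact:simple}), using that $\cff{\Frame}$ sends arbitrary unions to intersections and, on finitary frameworks, downward directed intersections to unions; but the direct route above is shorter and needs no auxiliary lemmas.)
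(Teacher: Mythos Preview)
Your proposal is correct and, for the upward-continuity half, matches the paper's proof essentially line for line: the paper also dispatches one inclusion by monotonicity and obtains the other by using finitariness of $\overline{a}$ together with upward directedness to find a single $Z\in D$ defending $a$.

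For the downward half your route differs slightly from the paper's. The paper argues, rather tersely, that by finitariness there is a finite $X\in D$ defending $x$ and that this $X$ must already be contained in $\bigcap_{X\in D}X$ (otherwise some $Y\in D$ below $X$ would fail to defend $x$). You instead fix an undefended attacker $y$ and analyse the family $\{\overline{y}\cap X\mid X\in D\}$: since $\overline{y}$ is finite this is a finite downward directed family of non-empty sets and so has a least element, yielding the contradiction. Both arguments hinge on exactly the same use of finitariness (finiteness of the relevant attacker/defender set), but your packaging is a bit more explicit and sidesteps the somewhat compressed step in the paper's version where the existence of a \emph{finite} $X\in D$ is invoked without further justification.
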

The above fact establishes important properties of the behaviour of the defense function with respect to sequences of sets of arguments and their limits. As we will also  later see in the graded generalization of Dung's theory, these properties are key in the construction of extensions through the iteration of the defense function.

\putaway{%%%%%%%%%%%%%%%%%%%%%%%%%%%%  PUTAWAY STARTS
\subsection{Rudiments of fixpoint theory of Dung's extensions}

In light of Facts \ref{fact:simple} and \ref{fact:continuous} we can rely on general results from order theory to establish the existence of the least fixpoint ($\lfp$) of the characteristic function, that is, the existence of the grounded extension. The monotonicity of $\cf{\Frame}$ guarantees the existence of the least fixpoint of $\cf{\Frame}$ as the intersection of all pre-fixpoints of $\cf{\Frame}$:
\begin{align}
\lfp.\cf{\Frame} & =  \bigcap\set{X \subseteq \A \mid \cf{\Frame}(X) \subseteq X}. & \mbox{     (Knaster-Tarski Theorem)} \label{formula:knaster}
\end{align}

Given a set of arguments $X$, the $n$-fold iteration of $\cf{\Frame}$ is denoted $\cf{\Frame}^n$ for $0 \leq n < \omega$ and its (countably) infinite iteration is denoted $\cf{\Frame}^\omega$. For a given $X$, an infinite iteration generates an infinite sequence, or stream, $\cf{\Frame}^0(X), \cf{\Frame}^1(X), \cf{\Frame}^2(X), \ldots$. A stream is said to stabilize if and only if there exists $0 \leq n < \omega$ such that $\cf{\Frame}^{n}(X) = \cf{\Frame}^{n +1}(X)$. Such set $\cf{\Frame}^{n}(X)$ is then called the limit of the stream. The monotonicity and continuity of the characteristic function, in finitary frameworks, guarantee together that such least fixpoint can be computed `from below' through a stream $\cf{\Frame}^0(X), \cf{\Frame}^1(X), \cf{\Frame}^2(X), \ldots$:
\begin{align}
\lfp.\cf{\Frame} & =  \bigcup_{0 \leq n < \omega}\cf{\Frame}^n(\emptyset) \label{formula:computation}
\end{align}
The reader is referred to \cite{davey90introduction} for a detailed presentation of these results.
%\footnote{It is also worth noticing that \eqref{formula:knaster} requires $f$ to be defined on a complete lattice, while the second %theorem requires just complete partial orders. In our case, the set algebra on $\pw{A}$ obviously satisfies both constraints.}
We will come back later in some more detail to equation \eqref{formula:computation}, which is the stepping stone of some of the results the paper presents.
} %%%%%%%%%%%%%%%%%%%%%%%%%%%%%%%%  PUTAWAY ENDS

\subsection{The Fixpoint Theory of Acceptability and Conflict-freeness}

We show how any admissible set of arguments can be saturated to a complete extension through a process of fixpoint approximation.
This establishes a general result concerning the computation of complete extensions in (finitary) attack graphs which, to the best of our knowledge, has never been reported in the literature. It is, however, a generalization of well-known existing results such as \cite[Lemma 46, Theorem 47]{dung95acceptability} (cf. also \cite{lifschitz96foundations}).

\subsubsection{Construction of Fixpoints from Admissible Sets}

Fix a framework $\Frame$ and take a set $X$ such that $X \subseteq \cf{}(X)$ and $X \subseteq \cff{}(X)$ (i.e., an admissible set). By iterating $\cf{}$, consider the stream of sets
$
\cf{}^0(X), \cf{}^1(X), \ldots,
$
and
$
\cf{}^0(\cff{}(X)), \cf{}^1(\cff{}(X)), \ldots.
$
Since $X$ is admissible, $\cf{}$ is monotonic and $\cff{}$ antitonic (Fact \ref{fact:simple}), the first stream (see the lower stream in Figure \ref{figure:decomposition1})  is non-decreasing and the second stream (see the upper stream in Figure \ref{figure:decomposition1}) is non-increasing, with respect to set inclusion. In finite attack graphs, these streams must therefore stabilize reaching a limit at state $|\A| + 1$. In infinite but finitary attack graphs, we will see that the limit can be reached at $\omega$. We will see (Lemma \ref{theorem:approxim8}) that the limits of these streams correspond to the {\em smallest} fixpoint of $\cf{}$ {\em containing} the admissible set $X$ and, respectively, the {\em largest} fixpoint of $\cf{}$ which is {\em contained} in $\cff{}(X)$. We denote the first one by $\lfp_X.\cf{}$ and the second one by $\gfp_X.\cf{}$. Intuitively, the two sets denote the smallest superset of $X$ which is equal to the set of arguments it defends\footnote{Theorem \ref{lemma:smallest} will show this set is also conflict-free and it is therefore the smallest complete extension containing $X$.} and, respectively, the largest set which is not attacked by $X$ and which is equal to the set of arguments it defends.\footnote{Note that such a set is not necessarily conflict-free. E.g., consider $\Frame =  \langle \set{a,b,c}$, $\set{(b,c),(c,b)} \rangle$, that is $b \ar c$ and $c \ar b$. Then $\cff{}(\set{a}) = \set{a,b,c}$ and $\cf{}(\set{a,b,c})  = \set{a,b,c}$. But clearly it is not the case that $\set{a,b,c} \subseteq \cff{}{\set{a,b,c}}$, that is, $\set{a,b,c}$ is not conflict-free.} The construction is illustrated in Figure \ref{figure:decomposition1} below.

\begin{example}
Consider the cycle of length three in Figure \ref{figure:graphs} (center), and take the admissible set $\emptyset$. By applying the above construction we obtain immediately $\cf{}(\emptyset) = \emptyset$ as limit of the lower stream, and $\cff{}(\emptyset) = \set{a,b,c} = \cf{}(\set{a,b,c})$ as limit of the upper stream.  $\emptyset$ is the smallest self-defended set  containing $\emptyset$, $\set{a, b, c}$ is the largest self-defended set contained in $\cff{}(\emptyset)$.
\end{example}

We prove now the correctness of the above construction by showing that the limits of the above streams correspond indeed to the desired fixpoints. First of all the following important lemma shows how conflict-freeness is preserved by the above process of iteration of the defense function:
\begin{lemma} \label{lemma:preserve_cf}
Let $\Frame$ be a finitary attack graph and $X \subseteq \A$ be admissible. Then for any $n$ s.t. $0 \leq n < \omega$,
$$
X \subseteq \cf{\Frame}^n(X) \subseteq \cff{\Frame}(\cf{\Frame}^n(X)) \subseteq \cff{\Frame}(X).
$$
\end{lemma}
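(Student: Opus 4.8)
The plan is to prove the chain of inclusions
$$
X \subseteq \cf{}^n(X) \subseteq \cff{}(\cf{}^n(X)) \subseteq \cff{}(X)
$$
by induction on $n$, using only the monotonicity of $\cf{}$, the antitonicity of $\cff{}$, and the identity $\cf{} = \cff{} \circ \cff{}$ from Fact \ref{fact:simple}. The first and last inclusions are the easy book-keeping; the middle inclusion $\cf{}^n(X) \subseteq \cff{}(\cf{}^n(X))$ — which says that each stage of the stream remains conflict-free — is the real content.

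For the base case $n = 0$ the four terms are $X \subseteq X \subseteq \cff{}(X) \subseteq \cff{}(X)$, which holds since $X$ is admissible (hence conflict-free, $X \subseteq \cff{}(X)$). For the inductive step, assume the chain holds for $n$. First, $X \subseteq \cf{}^n(X) \subseteq \cf{}^{n+1}(X)$: the left inclusion is the inductive hypothesis, and applying the monotone map $\cf{}$ to $X \subseteq \cf{}^n(X)$ gives $\cf{}(X) \supseteq X$ is not quite what we need, so instead I would note from admissibility $X \subseteq \cf{}(X)$ and from monotonicity $\cf{}^n(X) \subseteq \cf{}^{n+1}(X)$ inductively, which together with $X \subseteq \cf{}^n(X)$ yields $X \subseteq \cf{}^{n+1}(X)$. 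Next, the last inclusion $\cff{}(\cf{}^{n+1}(X)) \subseteq \cff{}(X)$ follows from antitonicity of $\cff{}$ applied to $X \subseteq \cf{}^{n+1}(X)$.

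The crux is the middle inclusion $\cf{}^{n+1}(X) \subseteq \cff{}(\cf{}^{n+1}(X))$. Write $Y = \cf{}^n(X)$, so the inductive hypothesis gives $Y \subseteq \cff{}(Y)$, and we must show $\cf{}(Y) \subseteq \cff{}(\cf{}(Y))$. Here I would exploit $\cf{} = \cff{} \circ \cff{}$: from $Y \subseteq \cff{}(Y)$ and antitonicity of $\cff{}$ we get $\cff{}(\cff{}(Y)) \subseteq \cff{}(Y)$, i.e. $\cf{}(Y) \subseteq \cff{}(Y)$. Now apply the monotone map $\cf{} = \cff{}\circ\cff{}$ — equivalently apply $\cff{}$ twice — to the inclusion $\cf{}(Y) \subseteq \cff{}(Y)$, or more directly: from $\cf{}(Y) \subseteq \cff{}(Y)$ and antitonicity, $\cff{}(\cff{}(Y)) \subseteq \cff{}(\cf{}(Y))$, and the left side is $\cf{}(Y)$ again, giving exactly $\cf{}(Y) \subseteq \cff{}(\cf{}(Y))$ as desired. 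Chaining: $X \subseteq \cf{}^{n+1}(X) \subseteq \cff{}(\cf{}^{n+1}(X)) \subseteq \cff{}(X)$.

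The main obstacle — really the only subtlety — is organizing which inclusion feeds into which application of $\cf{}$ or $\cff{}$ so that the directions of monotonicity/antitonicity line up correctly; there are no limits or continuity arguments needed here, since $n$ is finite, so the hypothesis that $\Frame$ is finitary is not actually used in this lemma (it is presumably carried along for uniformity with the surrounding development). I would double-check the base case uses only conflict-freeness of $X$, and that the step uses admissibility of $X$ only through $X \subseteq \cf{}(X)$ (to propagate $X \subseteq \cf{}^{n}(X)$) and through $X \subseteq \cff{}(X)$ (for the base case), while the propagation of conflict-freeness through the stream is an intrinsic property of the operators $\cf{}, \cff{}$ independent of $X$'s admissibility.
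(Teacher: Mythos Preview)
Your proof is correct, and in fact your argument for the crucial middle inclusion is slicker than the paper's. The paper proves $\cf{}^{n+1}(X) \subseteq \cff{}(\cf{}^{n+1}(X))$ by an element-level contradiction: assuming $x,y \in \cf{}(\cf{}^n(X))$ with $y \ar x$, it uses the definition of $\cf{}$ to find $z \in \cf{}^n(X)$ with $z \ar y$, then again to find $w \in \cf{}^n(X)$ with $w \ar z$, contradicting conflict-freeness of $\cf{}^n(X)$. Your argument avoids element-chasing entirely: from the IH $Y \subseteq \cff{}(Y)$ and antitonicity you get $\cf{}(Y) = \cff{}^2(Y) \subseteq \cff{}(Y)$, and applying $\cff{}$ once more gives $\cf{}(Y) = \cff{}^2(Y) \subseteq \cff{}(\cf{}(Y))$. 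This is a purely order-theoretic proof that uses only Fact~\ref{fact:simple} and would go through verbatim for \emph{any} antitone operator $\cff{}$ with $\cf{} := \cff{}^2$; the paper's argument, by contrast, unpacks the concrete definition of defense. Your observation that finitariness is never invoked is also correct---the paper's own proof does not use it either.
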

That is, each $\cf{\Frame}^n(X)$ in the stream of iteration of the defense function from an admissible set $X$ is a conflict-free set. The lemma can be seen as a reformulation of \cite[Lemma 10]{dung95acceptability}, known as Dung's {\em fundamental lemma}.\footnote{It also generalises \cite[Lemma 46]{dung95acceptability} to the case of $X$ admissible, instead of $X =\emptyset$.}

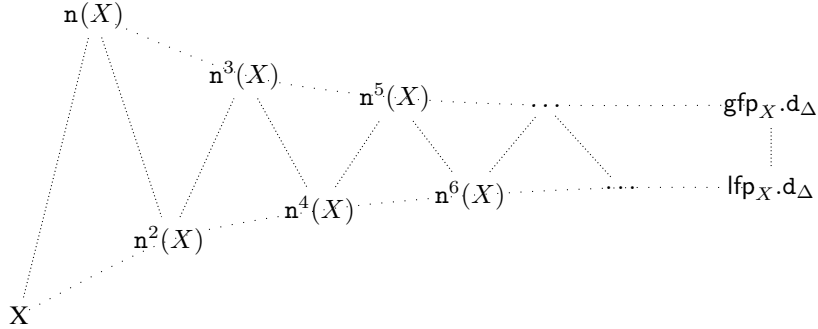
\begin{figure}[t]
\centering
\begin{tikzpicture}

\draw

(0,0) node (0) {X}
(1,4) node (1) {$\cff{}(X)$}

(2,1) node (2) {$\cff{}^2(X)$}
(3,3.2) node (3) {$\cff{}^3(X)$}

(4,1.4) node (4) {$\cff{}^4(X)$}
(5,2.9) node (5) {$\cff{}^5(X)$}

(6,1.6) node (6) {$\cff{}^6(X)$}
(7,2.8) node (7) {$\ldots$}
(10,2.8) node (9) {$\gfp_X.\cf{\Frame}$}

(8,1.7) node (8) {$\ldots$}
(10,1.7) node (10) {$\lfp_X.\cf{\Frame}$}
;

\draw[densely dotted] (0) -- (1);
\draw[densely dotted] (1) -- (2);
\draw[densely dotted] (2) -- (3);
\draw[densely dotted] (3) -- (4);
\draw[densely dotted] (4) -- (5);
\draw[densely dotted] (5) -- (6);
\draw[densely dotted] (6) -- (7);
\draw[densely dotted] (7) -- (8);
\draw[densely dotted] (10) -- (9);

\draw[thin,loosely dotted] plot[smooth] coordinates {(0,0) (2,1) (4,1.4) (6,1.6) (8,1.7) (10,1.7)};
\draw[thin,loosely dotted] plot[smooth] coordinates {(1,4) (3,3.2) (5,2.9) (7,2.8) (10,2.8)};

\end{tikzpicture}
\caption{Streams generated by indefinite iteration of $\cff{}$ applied to an admissible set $X$ (in symbols, $\cff{}^\omega(X)$). Position with respect to the horizontal axis indicates the number of iterations (growing from left to right), while position with respect to the vertical axis indicates set theoretic inclusion. The lower stream $X, \cff{}^2(X), \cff{}^4(X), \ldots$ stabilizes to $\lfp_X.\cf{\Frame}$. The upper stream $\cff{}(X), \cff{}^3(X), \cff{}^5(X), \ldots$ stabilizes to $\gfp_X.\cf{\Frame}$.}
\label{figure:decomposition1}
\end{figure}

We can show that the above streams obtained through the process of iteration of the defense function construct the desired fixpoints:
%In general, we have the following theorem:
\begin{lemma} \label{theorem:approxim8}
Let $\Frame$ be a finitary attack graph and $X \subseteq \A$ be admissible:
\begin{align}
\lfp_X.\cf{\Frame} & = \bigcup_{0 \leq n < \omega} \cf{\Frame}^n(X) \label{eq:below} \\
\gfp_X.\cf{\Frame} & = \bigcap_{0 \leq n < \omega} \cf{\Frame}^n(\cff{\Frame}(X)) \label{eq:above}
\end{align}
\end{lemma}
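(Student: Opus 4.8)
The plan is to prove the two equations \eqref{eq:below} and \eqref{eq:above} separately, treating them as dual statements. For \eqref{eq:below}, I would first observe that, since $X$ is admissible, $X \subseteq \cf{\Frame}(X)$, and monotonicity of $\cf{\Frame}$ (Fact \ref{fact:simple}) gives a non-decreasing chain $X \subseteq \cf{\Frame}(X) \subseteq \cf{\Frame}^2(X) \subseteq \cdots$; this is an upward directed set $D$ of subsets of $\A$. If $\Frame$ is finitary then by upward-continuity of $\cf{\Frame}$ (Fact \ref{fact:continuous}, equation \eqref{eq:upward})\footnote{One must first note the chain consists of finite sets, or else invoke the continuity result in a form applicable to arbitrary directed families; in finite frameworks this is immediate, and for the finitary case the continuity argument of \cite[Lemma 28]{dung95acceptability} applies.} we get $\cf{\Frame}\bigl(\bigcup_{n} \cf{\Frame}^n(X)\bigr) = \bigcup_n \cf{\Frame}^{n+1}(X) = \bigcup_n \cf{\Frame}^n(X)$, so the union is a fixpoint of $\cf{\Frame}$ containing $X$. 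It remains to show it is the \emph{smallest} such fixpoint: if $Y = \cf{\Frame}(Y)$ and $X \subseteq Y$, then by induction on $n$ using monotonicity, $\cf{\Frame}^n(X) \subseteq \cf{\Frame}^n(Y) = Y$, hence $\bigcup_n \cf{\Frame}^n(X) \subseteq Y$. This identifies the union with $\lfp_X.\cf{\Frame}$ as defined.

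For \eqref{eq:above}, the situation is dual. Set $Z = \cff{\Frame}(X)$. By admissibility and Lemma \ref{lemma:preserve_cf} we have $\cf{\Frame}(Z) = \cf{\Frame}(\cff{\Frame}(X)) \subseteq \cff{\Frame}(X) = Z$ — actually the cleanest route is to use Fact \ref{fact:simple}'s identity $\cf{\Frame} = \cff{\Frame}\circ\cff{\Frame}$ together with antitonicity: from $X \subseteq \cff{\Frame}(X) = Z$ we get $\cff{\Frame}(Z) \subseteq \cff{\Frame}(X) = Z$, and applying $\cff{\Frame}$ once more, $\cf{\Frame}(Z) = \cff{\Frame}(\cff{\Frame}(Z)) \supseteq \cff{\Frame}(Z)$; combined with monotonicity this pins down that $Z \supseteq \cf{\Frame}(Z)$, i.e. $Z$ is a pre-fixpoint. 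Hence $Z \supseteq \cf{\Frame}(Z) \supseteq \cf{\Frame}^2(Z) \supseteq \cdots$ is a non-increasing chain, i.e. a downward directed family, and by downward-continuity of $\cf{\Frame}$ (equation \eqref{eq:downward}) its intersection $\bigcap_n \cf{\Frame}^n(Z)$ is a fixpoint of $\cf{\Frame}$. To see it is the largest fixpoint contained in $Z$: if $Y = \cf{\Frame}(Y)$ and $Y \subseteq Z$, then by induction $Y = \cf{\Frame}^n(Y) \subseteq \cf{\Frame}^n(Z)$ for all $n$, so $Y \subseteq \bigcap_n \cf{\Frame}^n(Z) = \gfp_X.\cf{\Frame}$.

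I expect the main obstacle to be the correct handling of the continuity hypotheses in the \emph{finitary but infinite} case. The equations involve iteration up to $\omega$ only, so one must verify that $\omega$ steps genuinely suffice for both streams to reach their respective fixpoints — i.e. that no transfinite iteration is needed. For the lower (increasing) stream this is exactly upward-continuity applied to the $\omega$-chain; for the upper (decreasing) stream it is downward-continuity. The subtlety is that Fact \ref{fact:continuous} as stated restricts attention to directed sets of \emph{finite} subsets for the upward case, so I would either argue that the relevant chains can be taken to consist of finite sets (immediate in finite frameworks, and reducible in the finitary case since each $\cf{\Frame}^n(X)$ with $X$ finite is finite when the framework is finitary), or appeal directly to the argument behind \cite[Lemma 28]{dung95acceptability}. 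A secondary, more routine point is to make sure the notations $\lfp_X.\cf{\Frame}$ and $\gfp_X.\cf{\Frame}$ are being matched to the correct extremal-fixpoint characterisations introduced in the preceding paragraph; once the two minimality/maximality arguments above are in place this is immediate. Everything else — the monotonicity inductions, the use of Fact \ref{fact:simple} — is bookkeeping.
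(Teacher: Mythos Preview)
Your overall plan is sound and in fact cleaner than the paper's own proof, but there is one local gap and one genuine methodological difference worth flagging.

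The ``cleanest route'' you propose for showing that $Z = \cff{\Frame}(X)$ is a pre-fixpoint of $\cf{\Frame}$ does not close: from $X \subseteq Z$ you correctly obtain $\cff{\Frame}(Z) \subseteq Z$ and then $\cf{\Frame}(Z) = \cff{\Frame}(\cff{\Frame}(Z)) \supseteq \cff{\Frame}(Z)$, but these two inclusions together do not yield $\cf{\Frame}(Z) \subseteq Z$. The right move uses the \emph{other} half of admissibility, $X \subseteq \cf{\Frame}(X)$: antitonicity of $\cff{\Frame}$ (Fact~\ref{fact:simple}) gives $\cff{\Frame}(\cf{\Frame}(X)) \subseteq \cff{\Frame}(X)$, and since $\cf{\Frame} = \cff{\Frame}\circ\cff{\Frame}$ the left-hand side is $\cf{\Frame}(\cff{\Frame}(X)) = \cf{\Frame}(Z)$. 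This is precisely how the paper argues it, and it is also what your initial appeal to Lemma~\ref{lemma:preserve_cf} (case $n=1$) amounts to once unpacked; so your first route was already correct.

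Your extremality arguments, on the other hand, differ from the paper's. The paper establishes both the minimality in \eqref{eq:below} and the maximality in \eqref{eq:above} by contradiction---for \eqref{eq:above} via a somewhat roundabout argument that invokes Lemma~\ref{lemma:preserve_cf} and shows that $\cff{\Frame}(Y)$ would then be a fixpoint violating the already-established minimality of \eqref{eq:below}. Your direct inductions (if $Y = \cf{\Frame}(Y)$ with $X \subseteq Y$, resp.\ $Y \subseteq Z$, then $\cf{\Frame}^n(X) \subseteq Y$, resp.\ $Y \subseteq \cf{\Frame}^n(Z)$, for every $n$) are shorter, self-contained, and avoid that detour entirely. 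The paper's route has the expository benefit of tying the two streams of Figure~\ref{figure:decomposition1} together, but yours is the standard Kleene-style argument and is preferable as a proof.
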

Notice that since $\cff{}^2 = \cf{}$ (Fact \ref{fact:simple}), a stream generated by the indefinite iteration of the defense function can actually be viewed as a stream generated by the indefinite iteration of the neutrality function. So equations \eqref{eq:below} and \eqref{eq:above} of Theorem \ref{theorem:approxim8} can be rewritten as follows:
\begin{align}
\lfp_X.\cf{\Frame} & =  \bigcup_{0 \leq n < \omega} (\cff{\Frame}^2)^n(X) \\
\gfp_X.\cf{\Frame} & =  \bigcap_{0 \leq n < \omega} (\cff{\Frame}^2)^n(\cff{\Frame}(X))
\end{align}
In this light, Lemmas \ref{lemma:smallest} and \ref{theorem:approxim8} capture several of the key features of the stream generated by the indefinite iteration of the neutrality function on an admissible set $X$.
First, the stream can be split into two parts, the part consisting of even and, respectively, odd iterations of $\cff{\Frame}$.
Second, the stream of even iterations converges to a limit which is the smallest complete set including $X$, and the stream of odd iteration converges to a limit which is the largest self-defended set contained in $\cff{\Frame}(X)$ (that is, not attacked by $X$).\footnote{Again the finitariness assumption in the theorem could be lifted by making use of transfinite induction. Cf. Remark \ref{remark:ordinal} below.} Notice that such a set is just free of conflict with respect to $X$, but it is not necessarily conflict-free, and hence it is not necessarily a complete extension.
Third, the two streams can actually be viewed as streams of the defense function $\cf{\Frame}$ applied to $X$ and, respectively, to $\cff{\Frame}(X)$.
Fourth, the two parts grow towards each other as the stream of even iterations is increasing, while the one of odd iterations is decreasing. See Figure \ref{figure:decomposition1} for an illustration.

\begin{remark}
The proof of Lemma \ref{theorem:approxim8} relies in an essential manner on the finitariness assumption on the underlying framework. The assumption simplifies the proof but, it should be stressed, could be lifted.
For infinite graphs which are not finitary, the lemma could be proved by resorting to  transfinite induction:
\begin{eqnarray*}
\cf{\Frame}^0(\emptyset) & = & X \\
\cf{\Frame}^{\alpha + 1}(\emptyset) & = & \cf{\Frame}(\cf{\Frame}^{\alpha}(X) \\
\cf{\Frame}^{\lambda} & = & \bigcup_{\alpha < \lambda} \cf{\Frame}^{\alpha}(X) \ \ \mbox{        (for $\lambda$ arbitrary limit ordinal)}.
\end{eqnarray*}
By the monotonicity of $\cf{\Frame}$ it can then be shown that there exists an ordinal $\alpha$ of cardinality at most $|A|$ such that: $\lfp_X.\cf{\Frame} = \cf{\Frame}^{\alpha}(X)$. A proof of this statement in the general setting of complete partial orders can be found in \cite[Ch. 3]{venema08lectures}. Transfinite induction relies on the Axiom of Choice (or equivalent formulations such as Zorn's Lemma or the Well-Ordering Principle), which is known to be required for the existence results of the standard Dung semantics (cf. \cite{baumann15infinite}).
 \label{remark:ordinal}
 \end{remark}

%%%%%%%%%%%%%%%%%%%%%%%%%%%%%%%%%%%%%%

\subsubsection{Construction of Complete Extensions}

With the above results in place, one can then show how complete extensions can be constructed through a process of fixpoint approximation.
\begin{theorem} \label{lemma:smallest}
Let $\Frame$ be a finitary $AF$ and $X \subseteq \A$ be admissible. Then the limit
$\bigcup_{0 \leq n < \omega} \cf{\Frame}^n(X)$
%$\lfp_X.\cf{\Frame}$
is the smallest complete extension of $\Frame$ that includes $X$.
\end{theorem}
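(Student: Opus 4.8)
The plan is to write $Y^{*} := \bigcup_{0 \leq n < \omega} \cf{\Frame}^{n}(X)$ and verify three things in turn: that $Y^{*}$ is a complete extension, that $X \subseteq Y^{*}$, and that $Y^{*}$ is contained in every complete extension that includes $X$. Before anything else I would record the one structural fact that drives the argument: since $X$ is admissible we have $X \subseteq \cf{\Frame}(X)$, and since $\cf{\Frame}$ is monotone (Fact \ref{fact:simple}), the stream $X = \cf{\Frame}^{0}(X) \subseteq \cf{\Frame}^{1}(X) \subseteq \cf{\Frame}^{2}(X) \subseteq \cdots$ is a $\subseteq$-chain. Consequently any finite subset of $Y^{*}$ — in particular any two of its elements — already lies in a single stage $\cf{\Frame}^{k}(X)$.

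Next I would check the two clauses characterising a complete extension in Table \ref{table:basics}. For conflict-freeness: if $a,b \in Y^{*}$ with $a \ar b$, then by the chain remark both lie in some common $\cf{\Frame}^{k}(X)$, contradicting Lemma \ref{lemma:preserve_cf}, which tells us $\cf{\Frame}^{k}(X) \subseteq \cff{\Frame}(\cf{\Frame}^{k}(X))$; hence $Y^{*} \subseteq \cff{\Frame}(Y^{*})$. For the fixpoint clause $Y^{*} = \cf{\Frame}(Y^{*})$ I would invoke Lemma \ref{theorem:approxim8}, which identifies $Y^{*}$ with $\lfp_{X}.\cf{\Frame}$, a fixpoint of $\cf{\Frame}$ by construction; alternatively one can argue directly, using that $\{\cf{\Frame}^{n}(X)\}_{n}$ is upward directed and applying $\omega$-continuity (Fact \ref{fact:continuous}) to get $\cf{\Frame}(Y^{*}) = \bigcup_{n} \cf{\Frame}^{n+1}(X) = Y^{*}$, the last step because $\cf{\Frame}^{0}(X) = X \subseteq \cf{\Frame}^{1}(X)$. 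This is the place where the finitariness hypothesis is consumed (cf. Remark \ref{remark:ordinal}). Together the two clauses make $Y^{*}$ a complete extension, and $X \subseteq Y^{*}$ is immediate since $X = \cf{\Frame}^{0}(X)$.

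For minimality I would run an induction on the stream. Let $Z$ be any complete extension with $X \subseteq Z$, so $Z = \cf{\Frame}(Z)$. Then $\cf{\Frame}^{0}(X) = X \subseteq Z$, and if $\cf{\Frame}^{n}(X) \subseteq Z$ then monotonicity of $\cf{\Frame}$ gives $\cf{\Frame}^{n+1}(X) = \cf{\Frame}(\cf{\Frame}^{n}(X)) \subseteq \cf{\Frame}(Z) = Z$. Hence $\cf{\Frame}^{n}(X) \subseteq Z$ for all $n$, so $Y^{*} \subseteq Z$, which is exactly minimality.

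The genuine work has already been done in Lemmas \ref{lemma:preserve_cf} and \ref{theorem:approxim8}; what remains is assembly. The only delicate point is the passage from ``every finite stage $\cf{\Frame}^{n}(X)$ is conflict-free'' to ``the union $Y^{*}$ is conflict-free'': this is false for arbitrary unions of conflict-free sets and works here only because the stream is a $\subseteq$-chain, which in turn rests squarely on the admissibility assumption $X \subseteq \cf{\Frame}(X)$ (drop it and the construction is not even monotone). I would therefore state that step with care, and otherwise keep the exposition brief since the heavy lifting is inherited from the preceding lemmas.
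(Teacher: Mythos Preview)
Your proof is correct and follows essentially the same route as the paper: invoke Lemma~\ref{theorem:approxim8} for the fixpoint property (and smallest-fixpoint status) and Lemma~\ref{lemma:preserve_cf} for conflict-freeness, then assemble. Your write-up is in fact more careful than the paper's on two points the paper leaves implicit: the passage from ``each stage is conflict-free'' to ``the union is conflict-free'' via the chain property, and the minimality argument (which the paper absorbs into the phrase ``smallest such fixpoint'' from Lemma~\ref{theorem:approxim8}, whereas you give the direct induction).
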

The theorem establishes that, in finitary frameworks, any complete set can be computed via a process of iteration at $\omega$ of the defense function, starting with some admissible set. It is a novel simple generalization of the earlier result in \cite{dung95acceptability} for the case of $X$ = $\emptyset$.
This process starts by including the arguments that have no attackers or that belong to an initial admissible set, then including those arguments that are defended by the first set of arguments included, and so on.\footnote{Cf. Remark \ref{remark:ordinal}.} At an intuitive level, the theorem states that the indefinite iteration of the defense from an admissible $X$ can be considered as a formalization of the process whereby an agent constructs a rational argumentative position---a complete extension---starting from $X$.

%%%%%%%%%%%%%%%%%%%%%%%%

\subsubsection{Construction of Other Dung Extensions} \label{Sec:pref}

Theorem \ref{lemma:smallest} also yields  a constructive proof of existence (in finitary graphs) for the grounded extension.
By setting $X = \emptyset$ (the trivially admissible set), the theorem returns the known result for the construction of the grounded extension $\lfp.\cf{\Frame} =  \bigcup_{0 \leq n < \omega}\cf{\Frame}^n(\emptyset) $ \cite[Th. 47]{dung95acceptability}. In this section we show how the theorem relates to the other classical Dung extensions.

\smallskip

Specific conditions can be identified which guarantee that the indefinite iteration of the defense function constructs preferred and stable extensions from a given admissible set $X$.
It is fairly easy to see that if the chosen admissible set $X$ of $\Frame$ is `big enough' in the precise sense that it contains enough arguments to be able, from some argument in $X$, to reach any argument in the graph via the attack relation, i.e., if $\A \subseteq \set{a \mid \exists b \in X: b \ar^+ a}$, then the stream of iterations of $\cf{\Frame}$ from $X$ converges to a complete extension containing $X$ (by Theorem \ref{lemma:smallest}), but this extension is now maximal as all arguments in $\Frame$ can be reached from $X$.

The condition under which Theorem \ref{lemma:smallest} constructs stable extensions is particularly interesting. If the streams of even and odd iterations of the neutrality function (recall Figure \ref{figure:decomposition1}) converge to the same limit, then the process of fixpoint approximation defines a stable extension:
\begin{fact}
Let $\Frame$ be a finitary $AF$ and $X \subseteq \A$ be admissible. If
$\bigcup_{0 \leq n < \omega} \cf{\Frame}^n(X) = \bigcap_{0 \leq n < \omega} \cf{\Frame}^n(\cff{\Frame}(X))$, then both sets coincide with
%$\lfp_X.\cf{\Frame}$
the unique stable extension of $\Frame$ that includes $X$.
\end{fact}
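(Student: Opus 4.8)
The plan is to write $E$ for the common value of the two limits, so that by Lemma~\ref{theorem:approxim8} we have $E = \lfp_X.\cf{\Frame} = \gfp_X.\cf{\Frame}$, and then to establish two things: (i) $E$ is a stable extension of $\Frame$ with $X \subseteq E$, and (ii) it is the \emph{only} stable extension including $X$. The ingredients are just the extremal characterisations already available: $E = \lfp_X.\cf{\Frame}$ is, by Theorem~\ref{lemma:smallest}, the smallest complete extension of $\Frame$ including $X$ (so in particular $X \subseteq E$, $E$ is conflict-free, hence $E \subseteq \cff{\Frame}(E)$, and $E = \cf{\Frame}(E)$), while $E = \gfp_X.\cf{\Frame}$ is, by construction, the largest fixpoint of $\cf{\Frame}$ contained in $\cff{\Frame}(X)$.

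For (i) the only thing left to show is the reverse inclusion $\cff{\Frame}(E) \subseteq E$, since together with conflict-freeness this yields $E = \cff{\Frame}(E)$, i.e.\ stability (Table~\ref{table:basics}). I would argue that $\cff{\Frame}(E)$ is itself a fixpoint of $\cf{\Frame}$ contained in $\cff{\Frame}(X)$, and then invoke maximality of $\gfp_X.\cf{\Frame} = E$. That $\cff{\Frame}(E)$ is a fixpoint of $\cf{\Frame}$ follows from $\cf{\Frame} = \cff{\Frame}\circ\cff{\Frame}$ (Fact~\ref{fact:simple}): since $\cff{\Frame}(\cff{\Frame}(E)) = \cf{\Frame}(E) = E$, we get $\cf{\Frame}(\cff{\Frame}(E)) = \cff{\Frame}(\cff{\Frame}(\cff{\Frame}(E))) = \cff{\Frame}(E)$. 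That it sits inside $\cff{\Frame}(X)$ follows from $X \subseteq E$ and the antitonicity of $\cff{\Frame}$ (Fact~\ref{fact:simple}), giving $\cff{\Frame}(E) \subseteq \cff{\Frame}(X)$. Hence $\cff{\Frame}(E) \subseteq \gfp_X.\cf{\Frame} = E$, as needed, so $E$ is stable.

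For (ii), take any stable extension $E'$ with $X \subseteq E'$ and squeeze $E$ between $E'$ and itself. First, $E' = \cff{\Frame}(E')$ gives, applying $\cff{\Frame}$ once more, $E' = \cff{\Frame}(\cff{\Frame}(E')) = \cf{\Frame}(E')$, so $E'$ is a fixpoint of $\cf{\Frame}$; moreover $E' \subseteq \cff{\Frame}(E') $ holds trivially, so $E'$ is a complete extension including $X$, whence $E = \lfp_X.\cf{\Frame} \subseteq E'$ by minimality. Conversely, from $X \subseteq E'$ and antitonicity, $E' = \cff{\Frame}(E') \subseteq \cff{\Frame}(X)$, so $E'$ is a fixpoint of $\cf{\Frame}$ inside $\cff{\Frame}(X)$, whence $E' \subseteq \gfp_X.\cf{\Frame} = E$ by maximality. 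Therefore $E' = E$.

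I do not anticipate a genuine obstacle: the proof is a short manipulation of the monotonicity/antitonicity facts of Fact~\ref{fact:simple} plus the two extremal characterisations, and finitariness enters only through its use in Lemma~\ref{theorem:approxim8} and Theorem~\ref{lemma:smallest}. The one point that deserves care is being explicit that $\gfp_X.\cf{\Frame}$ is the \emph{largest} fixpoint of $\cf{\Frame}$ below $\cff{\Frame}(X)$ (and $\lfp_X.\cf{\Frame}$ the \emph{smallest} complete extension above $X$), since both the stability argument and the uniqueness argument are exactly the maximality/minimality squeezes enabled by these characterisations.
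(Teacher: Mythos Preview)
Your proof is correct and in fact more complete than the paper's own. The paper only argues part (i), and tersely: it invokes Theorem~\ref{lemma:smallest} to get that $E$ is the smallest complete extension containing $X$, then asserts without further justification that the coincidence of the two limits makes $E$ a fixpoint of $\cff{\Frame}$, hence stable. Your argument for (i) unpacks exactly this step---showing that $\cff{\Frame}(E)$ is a fixpoint of $\cf{\Frame}$ sitting below $\cff{\Frame}(X)$ and then invoking the maximality of $\gfp_X.\cf{\Frame}=E$---which is precisely the content the paper leaves implicit.

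The paper does not prove uniqueness at all; your squeeze argument for (ii), using the minimality of $\lfp_X.\cf{\Frame}$ and the maximality of $\gfp_X.\cf{\Frame}$, supplies what the statement claims but the paper's proof omits. So your route is the same in spirit (both hinge on $E$ being a fixpoint of $\cff{\Frame}$) but strictly more thorough.
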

\begin{proof}
By Theorem \ref{lemma:smallest}, $\bigcup_{0 \leq n < \omega} \cf{\Frame}^n(X)$ is the smallest complete extension containing $X$. However, as $\bigcup_{0 \leq n < \omega} \cf{\Frame}^n(X) = \bigcap_{0 \leq n < \omega} \cf{\Frame}^n(\cff{\Frame}(X))$ by assumption, the set $\bigcup_{0 \leq n < \omega} \cf{\Frame}^n(X)$ is therefore also a fixpoint of the neutrality function, and therefore a stable extension.
\end{proof}
This observation is, to the best of our knowledge, novel and provides a characterization of the existence of a stable extension that includes a given admissible set.
%If the streams of even and odd iterations of the neutrality function (recall Figure \ref{figure:decomposition1}) converge %to the same limit, that is, if the smallest complete set containing $X$ and the largest self-defended set contained in $%\cff{\Frame}(X)$ coincide, then those limits are a fixpoint of the neutrality function itself and, therefore, both streams %construct the (unique) stable extension $Y = \cff{\Frame}(Y)$ containing $X$.

%\DG{DG: State the above as Fact.}

\begin{example}[Construction of complete extensions in Figure \ref{figure:graphs}]
Consider the rightmost $AF$. Starting with the admissible set $\set{a}$, the non-decreasing stream
$$
\set{a}, \set{a,d}, \set{a,d}, \ldots
$$
converges after one step to the smallest complete extension $\lfp_{\set{a}}.\cf{} = \set{a,d}$ containing $\set{a}$. The non-increasing stream
$
\set{a,d,e}, \set{a,d}, \set{a,d}, \ldots
$
converges to $\gfp_{\set{a}}.\cf{} = \set{a,d}$, i.e., to the same set  which is also the largest fixpoint of $\cf{}$ included in $\cff{}(\set{a}) = \set{a,d,e}$. As $\set{a,d}$ is also conflict-free, it is a preferred extension. Notice also that if we were to start with $\emptyset$, the resulting streams would be $\emptyset, \emptyset, \ldots$ and $\set{a,b,c,d,e},$ $ \set{a,b,c,d,e}, \ldots$. The first one constructs the grounded extension $\emptyset$ of the $AF$, and the second the largest fixpoint of $\cf{}$ in the $AF$, that is, $\set{a,b,c,d,e}$.
\end{example}

%%%%%%%%%%%%%%%%%%%%%%%%%%%%%%%%%%%%%%%%%%%%%%%%%%%%%

\begin{figure}[t]
\begin{center}
\includegraphics[scale=0.25]{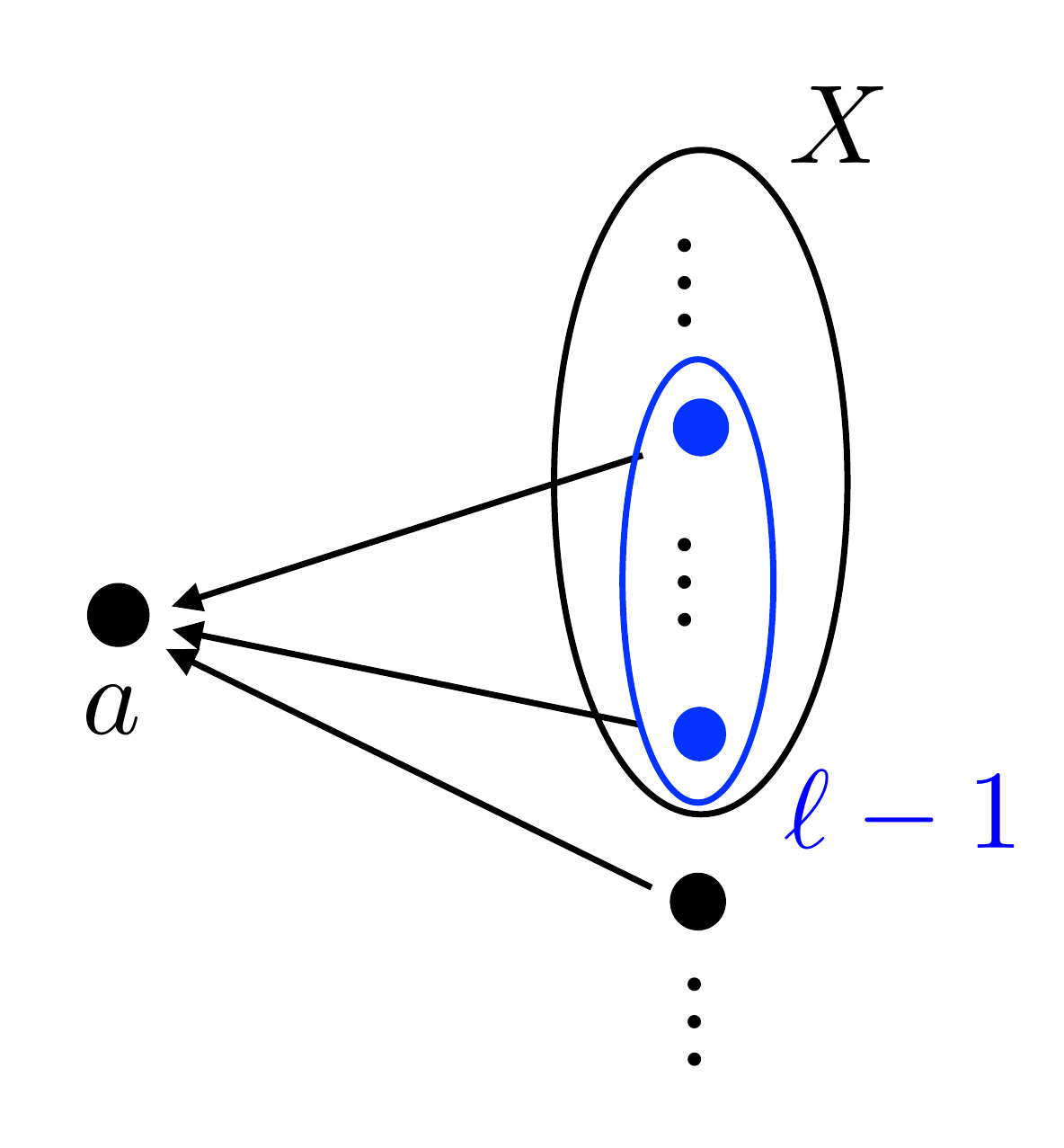}
\hspace{1cm}
\includegraphics[scale=0.25]{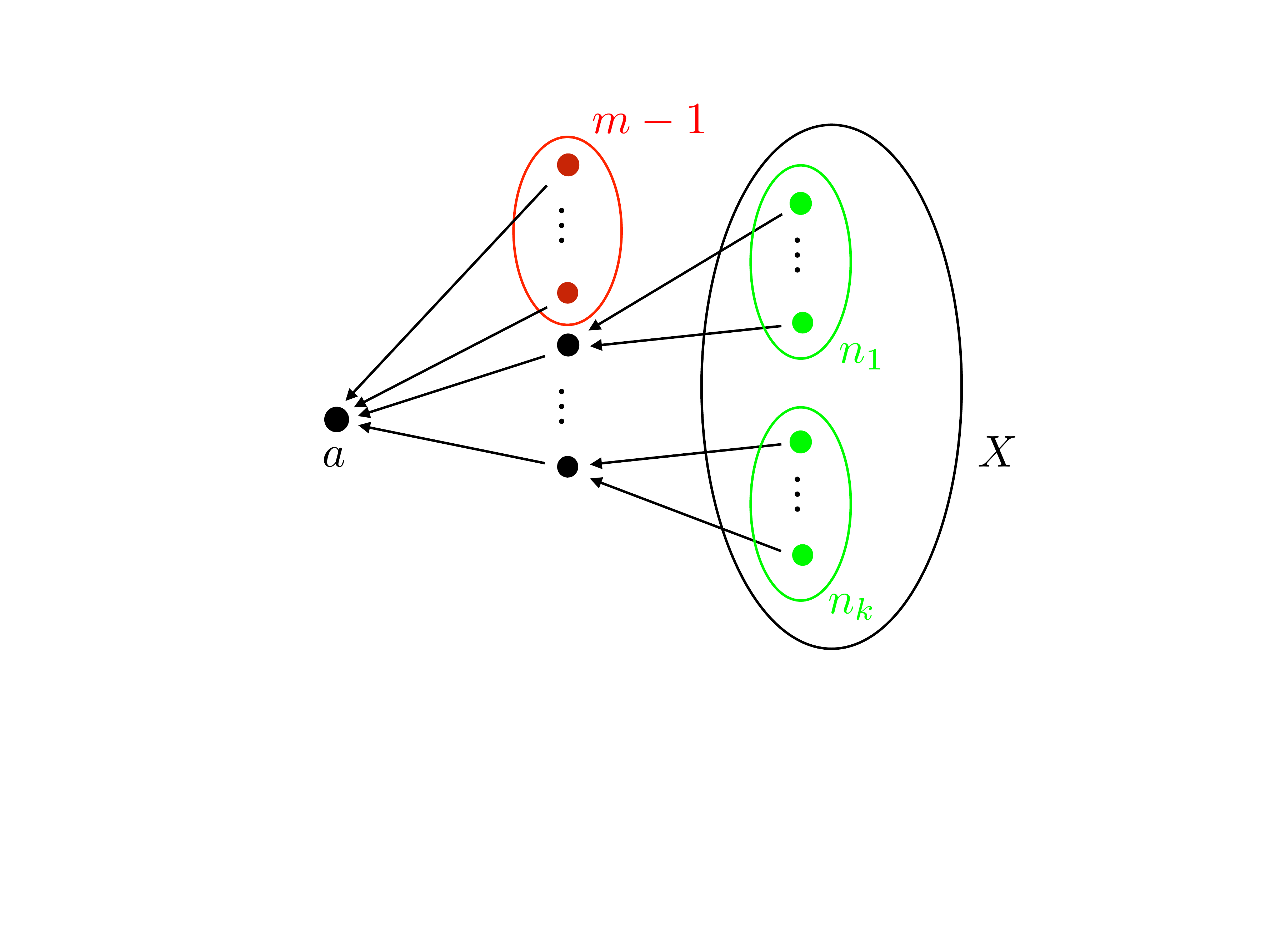}
\end{center}
\caption{Variants of neutrality (left) and defense (right). The lower integer $\ell$ is, the more neutral is the set of arguments $X$ with respect to $a$ (left). The lower integer $m$ is and the higher integer $n = \min(n_1, \ldots, n_k)$ is, the better is $a$ defended by $X$ (right). Standard neutrality and defense correspond to $\ell = m = n = 1$.}
\label{figure:motivation2}
\end{figure}

\section{Graded Acceptability}\label{Sec:Graded Acceptability}

We now turn to the main contribution of this paper: a graded generalisation of Dung's acceptability semantics.
We first introduce the intuitions behind our generalisation, and then define and study graded variants of Dung's defense and neutrality functions, which capture the proposed intuitions. These functions will then be used later (Section \ref{Sec:GradedSemantics}) to define and study a family of graded variants of Dung's semantics, and the rankings they enable (Section \ref{Sec:Ranking}).

%%%%%%%%%%%%%%%%%%%%%%

\subsection{Introducing Graded Acceptability: Intuitions}\label{Sec:GradedAcceptabilityIntuitions}

The central tenet of argumentation theory is that any individual argument cannot, in and of itself, constitute definitive grounds for believing that a claim is true. Rather, the status of an epistemic claim as true justified belief is not established  by an individual argument, but   through the dialectical consideration of counter-arguments and defenders of these counter-arguments \cite{SMReasoner}. Similarly in practical reasoning, the status of (a claim representing) a decision option supported by an individual practical argument is not considered to be the option that simply maximises a given objective, but rather the best option contingent on having dis-preferred alternative options and refuted challenges made to the
assumptions made in support of the argument. Pragmatically however, we operate under the assumption that the claim is true/the decision option is the best available, to the extent that as of yet we know of no good reason to suppose otherwise. Argumentatively, a claim can be considered established only in as much as it is the claim of a justified argument included in a network of interrelated arguments and counter-arguments. Dung's abstract argumentation theory captures these principles by assuming {\em sets of} arguments, rather than individual arguments, as the units of analysis, and studying formal criteria (semantics) for sets of arguments to be acceptable. Apart from trivial cases (unattacked arguments), arguments are acceptable only as members of a set of acceptable arguments. The graded theory of acceptability that we aim at, captures a notion of graduality while at  the same time retaining the notion of a set of arguments as the central unit of analysis.
%The way we achieve this aim is based on the intuitions we present below.

\subsubsection{Graded neutrality}

%The very same intuition can be applied to the quantification involved in the notion of neutrality\sm{Could be more explicit about how the two notions are connected ? Again, once we %acknowledge that an attack on an argument does not completely negate confidence in the argument's claim, then this implies that one also accommodates conflicting arguments in  %an acceptable set of arguments, but again, up to a given threshold (the $m$ value) (This also points  to the later discussion that the $m$ and $n$ parameters are connected in the %sense that a choice of values that implies failure of FL  would be insensible). Also could mention here WAS and how their global approach (i.e., total number of attacks  tolerance, %rather than no. of attacks  w.r.t. given  argument) }:

According to the standard definition of neutrality (Definition \ref{definition:neutrality}) a set $X$ is neutral with respect to $a$ if $a$ is not attacked by {\em any} argument in $X$.
A less demanding criterion of neutrality of $X$ with respect to $a$ would require that there exists {\em at most one} attacker of $a$ in $X$, a yet less demanding one (or at least not `as demanding as') would require that there exist {\em at most two} attackers of $a$ in $X$, and so on. Intuitively, these weakened neutrality criteria capture the idea that one (two, three, \ldots) attackers are not enough to rule out the co-acceptability of $a$ and its attacking arguments.\footnote{Using terminology from logic, this may be viewed as an argumentative form of paraconsistency.  In more `human orientated' argumentation formalisms (e.g., \cite{dunne11weighted} and developments thereof), this may be viewed as accounting for an attacking argument not establishing definitive grounds for its claim (as discussed at the beginning of Section \ref{Sec:GradedAcceptabilityIntuitions}), and hence not definitively ruling out the claim of the attacked argument.}
One then obtains a natural way to generalise the neutrality function (Definition \ref{definition:sensitive-neutrality} below) by making explicit a numerical level $\ell$ of neutrality of a set of arguments with respect to a given argument, as depicted in Figure \ref{figure:motivation2} (left).\footnote{Weighted Argument Systems \cite{dunne11weighted} propose a somewhat similar idea, whereby an inconsistency budget sets a threshold on the number of attacks that can be tolerated within a given set. However, notice that our notion of a threshold set, yielded by graded neutrality, is local in the sense that it pertains to the incoming attacks \emph{on each individual argument}. We will later compare this and other related approaches in more detail (Section \ref{Sec:RelatedWork}).} So we say that $a$ is $\ell$-neutral with respect to $X$ whenever there are at most $\ell-1$ attackers of $a$ in $X$.

\subsubsection{Graded defense}

According to the standard notion of defense, an argument $a$ is defended by a set of arguments $X$ whenever {\em every} attacker of $a$ is attacked by {\em some} argument in $X$. The quantification pattern (`for all', `some') involved in this definition offers again a natural handle to generalise the notion of defense. If {\em all but at most one} attackers of $a$ are attacked by at least one argument in $X$, the quality of such defense (and hence the extent to which $a$ is acceptable w.r.t. $X$) can reasonably be considered `lower' than in the case in which \emph{all} arguments are counter-attacked by at least one argument in $X$. But the former quality of defense is still `higher' than (or at least not `as low as') the case in which {\em all but at most two} attackers are counter-attacked by at least one argument  in $X$, and so on.
Similarly, if all attackers of $a$ are counterattacked by {\em at least two} arguments in $X$, then the quality of this defense can reasonably be considered `higher' than in standard acceptability, but `lower' than (or at least not `as high as') the case in which all attackers of $a$ are counterattacked by {\em at least three} arguments in $X$. Combining these intuitions---depicted in Figure \ref{figure:motivation2} (right)---one obtains a way to generalise the defense function (Definition \ref{definition:sensitive} below) by making explicit, through numeric grades ($m$ and $n$) of the above type, how well a set $X$ defends an argument $a$. So we say that $X$ $mn$-defends $a$ whenever there are at most $m-1$ attackers of $a$, which are not counterattacked by at least $n$ arguments in $X$.

This notion of graded defense is related in a natural way to the above notion of graded neutrality: %the set of arguments which have at most $m-1$ attackers that are not counter-attacked by at least $n$ arguments from $X$ (i.e.,
the set of arguments that are $mn$-defended by $X$, is the set of arguments which is not attacked by at least $m$ arguments, that are not in turn attacked by at least $n$ arguments in $X$ (i.e., arguments that are $m$-neutral with respect to the set of arguments that are $n$-neutral with respect to $X$). In other words, the notion of tolerance towards attack (graded neutrality) can be iterated to obtain a notion of graded defense. Fact \ref{fact:properties_dn} will establish this claim formally. We illustrate the above intuitions with a few examples.
%Keep in mind that an argumentation framework represents {\em all} information available for the evaluation of the status of its arguments.

\begin{example}\label{Ex:intuitions}
In Figures \ref{Motivating1}i) -- \ref{Motivating1}iv), the encircled set $Xi$ defends $ai$ ($i = 1 \ldots 4$) under Dung's Definition \ref{definition:defense}. However we can differentiate these cases based on the number of attackers and defenders of $ai$.
For instance, $X2$ more strongly defends $a2$ than $X1$ defends $a1$, as $a2$ is defended by two arguments whereas $a1$ is defended by one argument (i.e., the standard of defense that allows at most 0 attackers to not be defended by 2 arguments is met by $X2$'s defense of $a2$ but not by $X1$'s defense of $a1$). We will later, in Example \ref{Ex:DEJ}, reference the defense of $a3$ by $X3$ and of $a4$ by $X4$ to illustrate that neither can be said to be a more strong defense than the other.
While neither $X5$ or $X6$ defend $a5$, respectively $a6$, under Dung's Definition \ref{definition:defense}, observe that
$X5$'s defense of  $a5$ is  stronger than $X6$'s defense of $a6$. The former
 meets a standard of defense that allows at most one attacker ($d5$) not to be defended by at least one defender (which goes hand in hand with accommodating the co-acceptability of $a5$ with at most one undefended attacker; i.e., $d5$ is $2$-neutral with respect to $X5$). This standard is not met by $X6$'s defense of $a6$, since $a6$ is attacked by two undefended attacks (from $d6$ and $e6$). In the latter case, a weaker standard of defense is met, which again goes hand in hand with accommodating the co-acceptability of $a6$ with its two undefended attackers. These notions then naturally generalise so that one can discriminate standards of defense based only on the number of attackers. Allowing at most one attacker not to be defended by two arguments, is a standard of defense met by $X1$s defense of $a1$, but not $X3$'s defense of $a3$ (the  former defense thus being stronger than the latter).
\end{example}

\begin{figure}[h!!]
\centering
\includegraphics[scale=0.5]{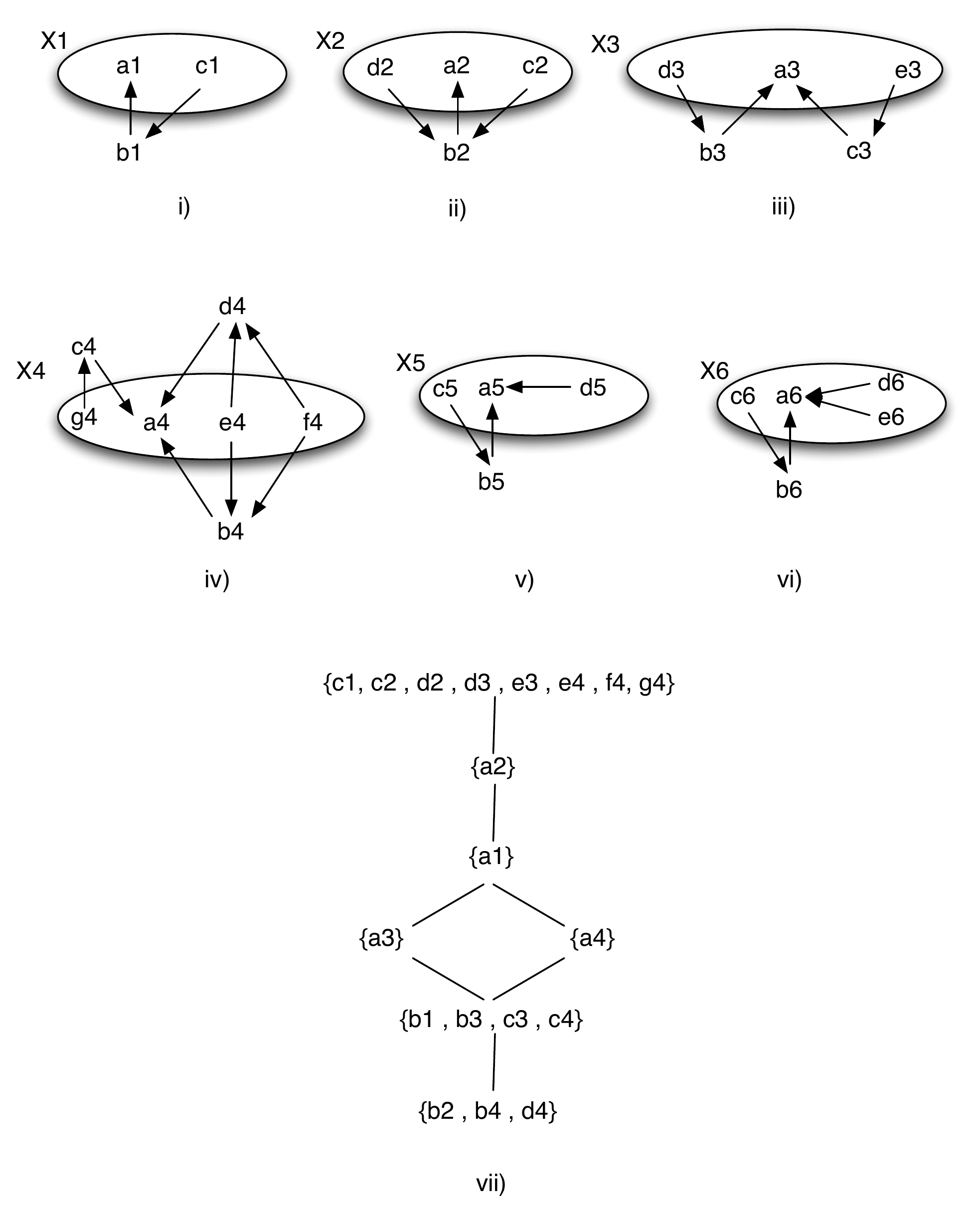}
\caption{i)--vi) $X_{i\in\{1\ldots6\}}$ defending $a_1,\ldots a_6$. vii) is the Hasse diagram of a preorder on the set of all arguments from i)--v) relevant for Example \ref{Ex:DEJ}.}
\label{Motivating1}
\end{figure}

%%%%%%%%%%%%%%%%%%%%%%%

\subsection{Graded Defense and Neutrality Functions}\label{Sec:DefiningDefense}
%\sm{Something not quite right in following motivation as we don't want that n = 0 and we want to focus on a single minimal n such that m-1 arguments are not attacked by n arguments ? So:} \\[3pt]
Let us move now to the formal definitions of graded defense and neutrality. Take an argument $a$ and a set of arguments $X$. Let $k$ be the number of $a$'s attackers ($b_1, \ldots, b_k$) and, for each $b_i$ ($1 \leq i \leq k$) let $n_i$ be the (non zero) number  of attackers of $b_i$ (i.e., defenders of $a$) in $X$. Finally, let $n$ be the minimum among the $n_i$s, i.e., $n = \min(\set{n_i}_{1 < i \leq k})$. We can now count the number $m$ ($\leq k$) of attackers of $a$, which are counter-attacked by at least $n$ arguments in $X$. Integers  $m$ and $n$ therefore encode
 information about how strongly $a$ is defended by $X$, in the sense that they express a maximum number (i.e., $m - 1$) of attackers of $a$ which are not counterattacked by a minimum given number (i.e., $n$) of arguments in $X$. We can now generalise Definition \ref{definition:defense} as follows:

\begin{definition}[Graded defense] \label{definition:sensitive}
Let $\Delta = \tuple{\A, \ar}$ be an $AF$ and let $m$ and $n$ be two positive integers ($m,n > 0$).
The graded defense function $\cf{\substack{m \\ n}} : \pw{\A} \map \pw{\A}$ for $\Delta$ is defined as follows. For any $X \subseteq \A$:
%\begin{eqnarray*}
\begin{align*}
%\hspace{-0.2cm}
\cfmn{m}{n}(X) & = \set{x \in \A \mid \nexistsn{y}{m}:  [~y \ar x \AND \nexistsn{z}{n}:  [~z \ar y \AND z \in X~]~]}
%\cff{\ell}(X) & = & \set{x \in A \mid \nexistsn{y}{m}: y \ar x \AND y \in X}
\end{align*}
%\end{eqnarray*}
where $\exists_{\geq n} x$, for integers $n$ (`there exist at least $n$ arguments $x$') are the standard first-order logic counting quantifiers.\footnote{Cf. \cite{dalen80logic}. The definition can be reformulated without counting quantifiers as follows:
$$
\cf{\substack{m \\ n}}(X) = \set{ x \in \A  \ST |\set{y \in \overline{x}  \ST |\set{z \in \overline{y} \cap X  }| < n }| < m }
$$
where we write $\overline{x}$ to denote $\set{y \in \A \mid y \rightarrow x}$.
}
In the rare cases in which we need to make $\Delta$ explicit we write $\cfmn{m}{n}^\Delta$.
\end{definition}
So, $\cfmn{m}{n}(X)$ is the set of arguments (in the given framework) which have at most $m-1$ attackers that are not counter-attacked by at least $n$ arguments in $X$.

\begin{example}
In Figure \ref{Motivating1}, $a1 \in \cf{\substack{1 \\ 1}}(X1)$ and $a2 \in \cf{\substack{1 \\ 1}}(X2)$ since in both cases the following holds: at most $0$
%it is not the case that one or more
arguments attacking $a1$, respectively $a2$, are not attacked by at least one argument in $X1$, respectively $X2$. However if we increment $n$ by $1$ we have that: $a2 \in \cf{\substack{1 \\ 2}}(X2)$  but $a1 \notin \cf{\substack{1 \\ 2}}(X1)$ since \emph{it is} the case that at least one argument attacking $a1$ is not attacked by at least two arguments in $X1$. Intuitively, this standard of defense allows for up to $0$ attackers to not be counter-attacked by two defenders, a standard met by $X2$'s defense of $a2$, but not by $X1$'s defense of $a1$.
Continuing with Figure \ref{Motivating1},
%\sm{Moreover,
$a5 \in \cf{\substack{3 \\ 1}}(X5)$ and $a6 \in \cf{\substack{3 \\ 1}}(X6)$, since in both cases the standard of defense that allows for no more than 2 unattacked arguments is met. However,  $a5 \in \cf{\substack{2 \\ 1}}(X5)$ and $a6 \notin \cf{\substack{2 \\ 1}}(X6)$ since this standard of defense accommodates up to a maximum of 1 unattacked attackers, and in the latter case there is more than one unattacked attacker of $a6$. Finally,
  $a1 \notin \cf{\substack{1 \\ 2}}(X1)$ and $a3 \notin \cf{\substack{1 \\ 2}}(X3)$ since the $\cf{\substack{1 \\ 2}}$ standard of defense requires that all attackers of $a1$ ($a3$) are attacked by at least two arguments. However,   $a1 \in \cf{\substack{2 \\ 2}}(X1)$ and $a3 \notin \cf{\substack{2 \\ 2}}(X3)$, since the standard of defense allowing at most one attacker not to be defended by two counter-attackers is met by $a1$ but not by $a3$.
Notice that in this last case the two arguments $a1$ and $a3$ are discriminated based on the number of their attackers.
\end{example}

%\begin{figure}[t]
%\centering
%\includegraphics[width=3.65in]{}
%\caption{}
%\label{FigureExtra}
%\end{figure}

By the same logic, Definition \ref{definition:neutrality} can be generalised as follows:
\begin{definition}[Graded neutrality function] \label{definition:sensitive-neutrality}
Let $\Delta = \tuple{\A, \ar}$ be an $AF$ and let $\ell$ be any positive integer.
The graded neutrality function $\cff{l} : \pw{\A} \map \pw{\A}$ for $\Delta$ is defined as follows. For any $X \subseteq \A$:
%\begin{eqnarray*}
\begin{align*}
\cff{\ell}(X)  & =  \set{x \in \A \mid \nexistsn{y}{\ell}: y \ar x \AND y \in X}.
\end{align*}
%\end{eqnarray*}
\end{definition}
So, given a set of arguments $X$, $\cff{\ell}(X)$ denotes the set of arguments which have at most $\ell - 1$ attackers in $X$.\footnote{Equivalently, graded neutrality can be defined as follows, without the use of counting quantifiers:
$$
\cff{\ell}(X)   =  \set{x \in \A \ST |\overline{x} \cap X|<\ell}.
$$
}

\begin{example}\label{IllustratingNeutrality}
In Figure \ref{Motivating1}, $n_2(X_5) = \{a5,b5,c5,d5\}$. Notice that $n_2(\{a5,b5,c5,d5\}) = \{b5,c5,d5\}$.
Also,
$n_2(X_6) = \{b6,c6,d6,e6\}$.

\end{example}

\subsection{Properties of Graded Defense and Neutrality}

The following two facts show that the graded defense and neutrality functions
 are generalisations of the standard functions defined in Definitions \ref{definition:defense} and \ref{definition:neutrality}, and that such generalisations remain well-behaved in the sense that they retain many of the key features of their standard variants.

\begin{fact} \label{fact:properties_dn}
For any $AF$ $\Delta = \tuple{\A, \ar}$, $m$, $n$ and $\ell$ positive integers:
\begin{eqnarray}
\cf{\substack{1 \\ 1}}(X) & = & \cf{}(X) \label{eq:defense} \\
\cff{1}(X) & = & \cff{}(X)  \label{eq:neutrality} \\
X \subseteq Y & \IMPLIES & \cff{\ell}(Y) \subseteq \cff{\ell}(X) \label{formula:antitonicm} \\
X \subseteq Y & \IMPLIES &  \cf{\substack{m \\ n}}(X) \subseteq  \cf{\substack{m \\ n}}(Y) \label{formula:monotonicmn} \\
\cff{m}(\cff{n}(X)) & = & \cf{\substack{m \\ n}}(X) \label{eq:twofold}
\end{eqnarray}
\end{fact}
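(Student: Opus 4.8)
The plan is to prove Fact \ref{fact:properties_dn} by treating each of the five equations in turn, since they are essentially a matter of unfolding the definitions of the graded defense and neutrality functions (Definitions \ref{definition:sensitive} and \ref{definition:sensitive-neutrality}) and comparing quantifier patterns. For \eqref{eq:defense} and \eqref{eq:neutrality}, I would simply substitute $m = n = 1$ (resp. $\ell = 1$) into the definitions: the counting quantifier $\nexistsn{y}{1}$ is just $\neg\exists y$, so $\cff{1}(X) = \set{x \mid \neg\exists y: y \ar x \AND y \in X} = \set{x \mid \NOT X \ar x} = \cff{}(X)$, matching Definition \ref{definition:neutrality}; and $\cf{\substack{1\\1}}(X) = \set{x \mid \neg\exists y: [y \ar x \AND \neg\exists z: (z \ar y \AND z \in X)]}$, which after pushing the negation inward reads ``for all $y$, if $y \ar x$ then $X \ar y$'', i.e. exactly $\cf{}(X)$ of Definition \ref{definition:defense}.

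For the antitonicity of $\cff{\ell}$, \eqref{formula:antitonicm}, I would use the reformulation in the footnote: $x \in \cff{\ell}(Y)$ iff $|\overline{x} \cap Y| < \ell$. Assuming $X \subseteq Y$, we have $\overline{x} \cap X \subseteq \overline{x} \cap Y$, hence $|\overline{x} \cap X| \le |\overline{x} \cap Y| < \ell$, so $x \in \cff{\ell}(X)$; this gives $\cff{\ell}(Y) \subseteq \cff{\ell}(X)$. For the monotonicity of $\cf{\substack{m\\n}}$, \eqref{formula:monotonicmn}, the cleanest route is to first establish \eqref{eq:twofold} and then derive monotonicity from it: since $\cff{m}$ and $\cff{n}$ are both antitone (by \eqref{formula:antitonicm}), their composition $\cff{m} \circ \cff{n}$ is monotone, and by \eqref{eq:twofold} this composition is $\cf{\substack{m\\n}}$. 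So I would prove \eqref{eq:twofold} before \eqref{formula:monotonicmn} and then get the latter for free, mirroring how Fact \ref{fact:simple} handles the standard case.

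The substantive step is \eqref{eq:twofold}, $\cff{m}(\cff{n}(X)) = \cf{\substack{m\\n}}(X)$, which formalises the ``iterate tolerance to get defense'' intuition discussed in the text. The plan is to unfold both sides using the counting-quantifier-free reformulations. On the right, $x \in \cf{\substack{m\\n}}(X)$ iff the number of $y \in \overline{x}$ with $|\overline{y} \cap X| < n$ is less than $m$. On the left, $x \in \cff{m}(\cff{n}(X))$ iff $|\overline{x} \cap \cff{n}(X)| < m$, i.e. the number of $y \in \overline{x}$ with $y \in \cff{n}(X)$ is less than $m$; and $y \in \cff{n}(X)$ iff $|\overline{y} \cap X| < n$. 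So both sides count exactly the same set of attackers $y$ of $x$ — namely those $y$ whose number of attackers lying in $X$ is below $n$ — and require that count to be below $m$. Hence the two sets are equal. I expect this identity to be the main (and only real) obstacle, and even there the difficulty is purely bookkeeping: one must be careful that the inner quantifier in Definition \ref{definition:sensitive} ranges over $z \ar y$ with $z \in X$, matching $\overline{y} \cap X$, and that the "$z$ is nonzero/need not exist" subtlety from the prose (where $n_i$ was assumed nonzero) does not actually restrict the formal definition — the formal $\cf{\substack{m\\n}}$ places no such nonemptiness constraint, so the correspondence with $\cff{m} \circ \cff{n}$ is exact. Once \eqref{eq:twofold} is in hand, \eqref{formula:monotonicmn} follows immediately as described, completing the proof.
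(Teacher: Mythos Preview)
Your proposal is correct and essentially mirrors the paper's own proof: \eqref{eq:defense} and \eqref{eq:neutrality} by specialising the counting quantifiers to $\nexistsn{y}{1} = \neg\exists y$, and \eqref{eq:twofold} by a direct unfolding of Definitions \ref{definition:sensitive} and \ref{definition:sensitive-neutrality}. The only cosmetic difference is that the paper states \eqref{formula:antitonicm} and \eqref{formula:monotonicmn} as ``direct consequences of the definitions'' without spelling them out, whereas you explicitly derive \eqref{formula:monotonicmn} from \eqref{eq:twofold} together with the antitonicity \eqref{formula:antitonicm} (composition of two antitone maps is monotone); both routes are immediate and equivalent.
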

\begin{proof}

Equation \eqref{eq:defense} follows from the fact that Definition \ref{definition:defense} can be retrieved from Definition \ref{definition:sensitive} by setting $n = m = 1$. Similarly \eqref{eq:neutrality} follows from the fact that Definition \ref{definition:neutrality} can be retrieved from Definition \ref{definition:sensitive-neutrality} by setting $\ell = 1$.
Equation \eqref{eq:twofold} follows from Definitions \ref{definition:sensitive} and \ref{definition:sensitive-neutrality} by the following series of equations:
\begin{eqnarray*}
\cff{m}(\cff{n}(X)) & = & \cff{m}(\set{y \in A \mid \nexistsn{z}{n}: [~z \ar y \AND z \in X] }) \\
                              & = & \set{x \in A \mid \nexistsn{y}{m}: [~y \ar x \AND \nexistsn{z}{n}: [~z \ar y \AND z \in X]]} \\
                              & = & \cfmn{m}{n}(X)
\end{eqnarray*}
Formulae \eqref{formula:antitonicm} and \eqref{formula:monotonicmn} are direct consequences of Definitions \ref{definition:sensitive} and \ref{definition:sensitive-neutrality}.
\end{proof}
%\sm{Illustrate with an example, how intuitively, parameterised defense obtained by iterating neutrality function
%?}.
%
%(\eqref{eq:defense} and \eqref{eq:neutrality}), and that such generalisations remain well-behaved: graded defense is monotonic \eqref{formula:monotonicmn}; graded neutrality is antitonic \eqref{formula:antitonicm};  graded defense is the two-fold iteration of graded neutrality \eqref{eq:twofold}; and graded defense is continuous \eqref{eq:scott} and \eqref{eq:scottt}.

Equation \eqref{eq:defense} reformulates $\cf{}(X)$ as the set of arguments for which it is not the case that there are one or more attackers, which are not counter-attacked by one or more arguments in $X$; that is, no attacker is not attacked by some argument in $X$. So does Equation \eqref{eq:neutrality}  for $\cff{}(X)$. The remaining formulae generalise Fact \ref{fact:simple} to the graded setting. In particular, graded defense is monotonic \eqref{formula:monotonicmn}, graded neutrality is antitonic \eqref{formula:antitonicm}, and
 equation \eqref{eq:twofold} shows that, as in the standard case, the defense function is the two-fold iteration of the neutrality function (as in the standard case we may use the notation $\cff{n} \circ \cff{m}$ to denote this composition).

Importantly, the continuity of the defense function is also preserved in the graded setting:
\begin{fact}[$\omega$-continuity of graded defense] \label{fact:graded_continuous}
If $\Frame$ is finitary, then function $\cfmn{m}{n}$ is \mbox{(upward-)} continuous for any $X \subseteq \A$, $m$ and $n$ positive integers. I.e., for any upward directed set $D$ of finite subsets of $\A$:
\begin{align}
\cfmn{m}{n}(\bigcup_{X \in \mathcal{D}} X) & = \bigcup_{X \in \mathcal{D}} \cfmn{m}{n}(X) \label{eq:scott}
\end{align}
Similarly, $\cfmn{m}{n}$ is (downward-)continuous for any $X \subseteq \A$, $m$ and $n$ positive integers. I.e., for any downward directed set $D$ of finite subsets of $\A$:
\begin{align}
\cfmn{m}{n}(\bigcap_{X \in \mathcal{D}} X) & = \bigcap_{X \in \mathcal{D}} \cfmn{m}{n}(X) \label{eq:scottt}
\end{align}
\end{fact}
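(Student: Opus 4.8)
The plan is to leverage the decomposition $\cfmn{m}{n} = \cff{m} \circ \cff{n}$ from equation \eqref{eq:twofold}, so that continuity of graded defense follows from suitable (anti-)continuity properties of the graded neutrality function, exactly as in the standard case (Fact \ref{fact:continuous}). So the first step is to establish the key lemma about $\cff{\ell}$: for a finitary $\Frame$ and any upward directed set $D$ of finite subsets of $\A$, one has $\cff{\ell}(\bigcup_{X\in D} X) = \bigcap_{X\in D}\cff{\ell}(X)$, and dually $\cff{\ell}(\bigcap_{X\in D} X) = \bigcup_{X\in D}\cff{\ell}(X)$ for downward directed $D$. For the first identity, the inclusion $\subseteq$ follows from antitonicity \eqref{formula:antitonicm} ($X \subseteq \bigcup D$ gives $\cff{\ell}(\bigcup D)\subseteq\cff{\ell}(X)$ for each $X$). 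For $\supseteq$, suppose $x \notin \cff{\ell}(\bigcup D)$, i.e.\ $x$ has at least $\ell$ attackers in $\bigcup D$; pick $\ell$ such attackers $y_1,\dots,y_\ell$, each lying in some $X_{i}\in D$; by upward directedness there is a single $Z\in D$ with $\{y_1,\dots,y_\ell\}\subseteq Z$ (iterating the binary directedness condition finitely many times), so $x\notin\cff{\ell}(Z)$, hence $x\notin\bigcap_{X\in D}\cff{\ell}(X)$. The dual identity is handled symmetrically, this time using finitariness: if $x\notin\bigcup_{X\in D}\cff{\ell}(X)$ then for every $X\in D$, $x$ has $\geq\ell$ attackers in $X$; since $\overline{x}$ is finite and $D$ is downward directed, a standard argument shows some fixed $\ell$-subset of $\overline{x}$ survives in every member of $D$, hence lies in $\bigcap D$, so $x\notin\cff{\ell}(\bigcap D)$; the reverse inclusion is again immediate from antitonicity.

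\textbf{Assembling the result.} With the neutrality lemma in hand, the two continuity equations \eqref{eq:scott} and \eqref{eq:scottt} follow by composing. For \eqref{eq:scott}: if $D$ is upward directed then, applying the second (downward) neutrality identity is not what we want directly; rather, observe that $\cff{n}$ maps the upward directed $D$ to the downward directed family $\{\cff{n}(X)\}_{X\in D}$ (downward directed by \eqref{formula:antitonicm}: an upper bound $Z\supseteq X\cup Y$ in $D$ yields $\cff{n}(Z)\subseteq\cff{n}(X)\cap\cff{n}(Y)$), and moreover $\cff{n}(\bigcup_{X\in D}X)=\bigcap_{X\in D}\cff{n}(X)$ by the first neutrality identity. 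Then applying the first neutrality identity is again not directly applicable since the family is downward directed, so we instead apply the dual identity: $\cff{m}(\bigcap_{X\in D}\cff{n}(X))=\bigcup_{X\in D}\cff{m}(\cff{n}(X))$. Chaining, $\cfmn{m}{n}(\bigcup_{X\in D}X)=\cff{m}(\cff{n}(\bigcup_{X\in D}X))=\cff{m}(\bigcap_{X\in D}\cff{n}(X))=\bigcup_{X\in D}\cff{m}(\cff{n}(X))=\bigcup_{X\in D}\cfmn{m}{n}(X)$. The argument for \eqref{eq:scottt} is entirely symmetric, starting from a downward directed $D$, using that $\{\cff{n}(X)\}_{X\in D}$ is then upward directed.

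\textbf{Where the difficulty lies.} The routine parts — antitonicity bookkeeping and the straightforward inclusions — are immediate from Fact \ref{fact:properties_dn}. The one genuinely delicate point, and the place where the finitariness hypothesis is essential, is the downward neutrality identity, specifically showing that when every $X\in D$ contains at least $\ell$ attackers of $x$, some common $\ell$-subset of $\overline{x}$ lies in $\bigcap D$. The subtlety is that the particular $\ell$ attackers present may vary with $X$; this is resolved by noting $\overline{x}$ is finite (finitariness), so $\overline{x}$ has only finitely many $\ell$-subsets, and a pigeonhole/compactness argument over the directed set $D$ pins down one such subset that is cofinal, hence — by downward directedness — actually present in every member and therefore in the intersection. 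I should also double-check the edge cases $\ell=1$ (reducing to Fact \ref{fact:continuous}) and the vacuous directed sets, but these cause no trouble. Everything else is a mechanical unfolding of Definitions \ref{definition:sensitive} and \ref{definition:sensitive-neutrality} together with \eqref{eq:twofold}, \eqref{formula:antitonicm}, and \eqref{formula:monotonicmn}.
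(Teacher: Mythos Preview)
Your proof is correct, and the decomposition via $\cfmn{m}{n}=\cff{m}\circ\cff{n}$ together with the two anti-continuity identities for $\cff{\ell}$ goes through. One small caveat: in the downward neutrality step you invoke a ``pigeonhole/compactness'' argument to find a fixed $\ell$-subset of $\overline{x}$ surviving in every member of $D$. The cleaner way to phrase this is direct: for each $y\in\overline{x}\setminus\bigcap D$ choose $X_y\in D$ with $y\notin X_y$; by finitariness there are finitely many such $y$, so downward directedness yields $Z\in D$ below all the $X_y$; then $\overline{x}\cap Z\subseteq\overline{x}\cap\bigcap D$, and since $|\overline{x}\cap Z|\geq\ell$ you are done. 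This avoids the slightly imprecise ``cofinal'' language. Also note that when you apply the downward identity to the family $\{\cff{n}(X)\}_{X\in D}$, these sets need not be finite; your lemma still applies because its proof only uses finitariness of $\Frame$, not finiteness of the members of the directed family---worth stating explicitly.

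The paper's own argument is different in organisation: it does not factor through $\cff{\ell}$ at all, but simply repeats the proof of Fact~\ref{fact:continuous} verbatim for $\cfmn{m}{n}$, using monotonicity \eqref{formula:monotonicmn} for the easy inclusions and finitariness for the hard ones (e.g.\ for \eqref{eq:scott}, if $x\in\cfmn{m}{n}(\bigcup D)$ then the finitely many attackers of $x$ that are counter-attacked by $\geq n$ arguments in $\bigcup D$ each have their $n$ witnesses collected into a single $Z\in D$ by upward directedness). Your route is more modular and makes explicit why the composition of two antitone maps is continuous; the paper's route is shorter and avoids proving the auxiliary neutrality identities. Both rely on exactly the same two ingredients---antitonicity/monotonicity and finitariness---so neither buys extra generality.
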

\begin{proof}[Sketch of proof]
The argument used to prove Fact \ref{fact:continuous} carries through in exactly the same manner, exploiting the monotonicity of $\cfmn{m}{n}$ \eqref{formula:monotonicmn} and the finitariness assumption over $\Frame$.
%\rightleft Trivial. \leftright Assume $a \in \cfmn{m}{n}(\bigcup_{X \in \mathcal{D}} X)$. By finitariness there exists $X \in \mathcal{D}$ s.t. it contains all arguments that attack some of $a%$'s attackers. Hence $a \in \bigcup_{X \in \mathcal{D}} \cfmn{m}{n}(X)$.  {\bf TO BE IRONED}
\end{proof}

Finally, we establish some properties showing how the values for the defence and neutrality functions are affected by varying the parameters $m$ and $n$.
\begin{fact} \label{fact:accrual_relations}
For any $AF$ $\Delta = \tuple{\A, \ar}$, $X \subseteq \A$, and $\ell$, $m$ and $n$ positive integers:
\begin{eqnarray}
\cff{\ell}(X) & \subseteq &\cff{\ell+1}(X) \label{eq:increase1} \\
\cfmn{m}{n}(X) & \subseteq & \cfmn{m+1}{n}(X) \label{eq:increase2}  \\
\cfmn{m}{n}(X) & \supseteq & \cfmn{m}{n+1}(X) \label{eq:increase3}
\end{eqnarray}
\end{fact}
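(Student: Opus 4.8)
The plan is to argue directly from the cardinality reformulations of the two functions recorded in the footnotes to Definitions \ref{definition:sensitive} and \ref{definition:sensitive-neutrality}, namely
\[
\cff{\ell}(X) = \set{x \in \A \mid |\overline{x} \cap X| < \ell}, \qquad \cfmn{m}{n}(X) = \set{x \in \A \mid |\set{y \in \overline{x} \mid |\overline{y}\cap X| < n}| < m},
\]
and to bootstrap the two statements about $\cfmn{m}{n}$ from the statement about $\cff{\ell}$, using the identity $\cfmn{m}{n} = \cff{m}\circ\cff{n}$ of \eqref{eq:twofold} and the antitonicity of $\cff{m}$ of \eqref{formula:antitonicm}.

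First I would establish \eqref{eq:increase1} outright: if $x \in \cff{\ell}(X)$ then $|\overline{x}\cap X| < \ell \le \ell+1$, so $x \in \cff{\ell+1}(X)$ — raising the tolerance threshold can only enlarge the set of arguments that meet it. Then \eqref{eq:increase2} follows by composition: apply \eqref{eq:increase1} with $\ell = m$ to the set $\cff{n}(X)$ to get $\cff{m}(\cff{n}(X)) \subseteq \cff{m+1}(\cff{n}(X))$, and rewrite both sides via \eqref{eq:twofold} as $\cfmn{m}{n}(X) \subseteq \cfmn{m+1}{n}(X)$. (Equivalently, and just as quickly: a set of insufficiently-counter-attacked attackers of size $< m$ has size $< m+1$.)

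For \eqref{eq:increase3} the inclusion reverses, and this is the only point calling for a moment's attention. By \eqref{eq:increase1}, $\cff{n}(X) \subseteq \cff{n+1}(X)$; applying the antitone map $\cff{m}$ via \eqref{formula:antitonicm} then gives $\cff{m}(\cff{n+1}(X)) \subseteq \cff{m}(\cff{n}(X))$, i.e.\ $\cfmn{m}{n+1}(X) \subseteq \cfmn{m}{n}(X)$ by \eqref{eq:twofold}, which is exactly \eqref{eq:increase3}. Intuitively, demanding each attacker of $x$ be counter-attacked by at least $n+1$ rather than $n$ arguments of $X$ can only increase the count of attackers deemed insufficiently attacked, so fewer arguments survive the bound $< m$.

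There is essentially no obstacle here: the statement is routine once \eqref{eq:twofold} and \eqref{formula:antitonicm} are available, and the only thing worth flagging is that raising $n$ flips the monotonicity, which is precisely why \eqref{eq:increase3} is stated with $\supseteq$ rather than $\subseteq$. A fully self-contained alternative, avoiding \eqref{eq:twofold}, would note that $\set{y \in \overline{x} \mid |\overline{y}\cap X| < n} \subseteq \set{y \in \overline{x} \mid |\overline{y}\cap X| < n+1}$ and compare cardinalities with $m$ directly; I would mention this only as a remark.
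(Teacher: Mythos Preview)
Your proof is correct and follows essentially the same route as the paper: establish \eqref{eq:increase1} directly from the definition, then derive \eqref{eq:increase2} and \eqref{eq:increase3} by combining \eqref{eq:increase1} with the decomposition \eqref{eq:twofold} and, for the reversed inclusion, the antitonicity \eqref{formula:antitonicm}. The only cosmetic difference is that the paper phrases \eqref{eq:increase1} via the contrapositive on counting quantifiers rather than via the cardinality reformulation, which is immaterial.
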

\begin{proof}
Recall the definition of the neutrality function (Definition \ref{definition:sensitive-neutrality}). To establish \eqref{eq:increase1} it suffices to notice that the property expresses the contrapositive of the following statement: if there exist at least $\ell+1$ attackers in $X$ then there exist at least $\ell$ attackers in $X$.
Property \eqref{eq:increase2} then follows  directly by \eqref{eq:increase1} above and \eqref{eq:twofold} (Fact \ref{fact:properties_dn}), through the following series of relations:
\begin{align*}
\cfmn{m}{n}(X) & = \cff{m}(\cff{n}(X)) \\
                         & \subseteq \cff{m+1}(\cff{n}(X)) = \cfmn{m+1}{n}(X).
\end{align*}
A similar argument applies to establish \eqref{eq:increase3}, which follows by \eqref{eq:increase1} above, \eqref{eq:twofold}, and the antitonicity of $\cff{}$ (Fact \ref{fact:properties_dn}):
\begin{align*}
\cfmn{m}{n}(X) & = \cff{m}(\cff{n}(X)) \\
                         & \supseteq \cff{m}(\cff{n+1}(X)) = \cfmn{m}{n+1}(X).
\end{align*}
This completes the proof.
\end{proof}
Intuitively, \eqref{eq:increase1} states that the set of arguments attacked by at most $\ell$ arguments in $X$ is included in the set of arguments attacked by at most $\ell+1$ arguments in $X$. This establishes an ordering, in terms of logical strength, among the values of different neutrality functions: the lower is $\ell$ the stricter is the value of $\cff{l}$ applied to a same set of arguments $X$. Properties \eqref{eq:increase2} and \eqref{eq:increase3} then follow by combining this simple fact with the fact that $\cfmn{m}{n}$ is the composition of $\cff{m}$ with $\cff{n}$ \eqref{eq:twofold}.
%
%\blu{Davide: The example below could be removed}
%\begin{example}\label{Ex:defense} Formula \eqref{eq:increase2} is illustrated with reference to the framework in Fig.\ref{MotivatingDEJ}v):
%\begin{itemize}
%  \item[] $\cfmn{1}{1}(\{c1,c2\})$ = $\{c1,c2,b2\}$ $\subseteq$
%  \item[] $\cfmn{2}{1}(\{c1,c2\})$ = $\{c1,c2,a,b2\}$ $\subseteq$
%  \item[] $\cfmn{3}{1}(\{c1,c2\})$ = $\{c1,c2,a,b1,b2\}$
%  \end{itemize} and Formula \eqref{eq:increase3} is illustrated with reference to Figure \ref{MotivatingDEJ}ii):
%
%\begin{itemize}
%  \item[] $\cfmn{1}{1}(\{c1,c2,a\})$ = $\{c1,c2,a\}$ $\supseteq$
%  \item[] $\cfmn{1}{2}(\{c1,c2,a\})$ = $\{c1,c2,a\}$ $\supseteq$
%    \item[] $\cfmn{1}{3}(\{c1,c2,a\})$ = $\{c1,c2\}$
%\end{itemize}
%
%\end{example}

%%%%%%%%%%%%%%%%%%

%%%%%%%%%%%%%%%%%%%%%%%%%%%%%%%%%%%%%%%

\subsection{Comparing Graded Defense and Neutrality Functions} \label{Sec:RankingGradedDF} %\label{Sec:Degrees}

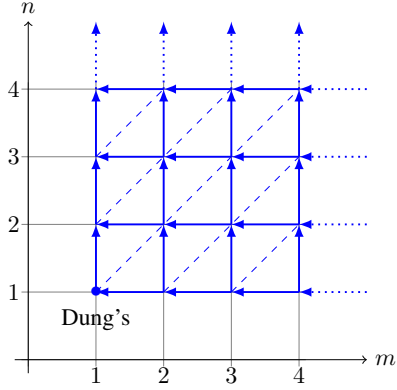
\begin{figure}[t]

\begin{center}
\scalebox{.9}{
\begin{tikzpicture}[domain=0:4]
\draw[very thin,color=gray] (-0.1,-0.1) grid (4.1,4.1);

\draw[->] (-0.2,0) -- (5,0) node[right] {$m$};
\draw[->] (0,-0.2) -- (0,5) node[above] {$n$};

\draw  (1,0)  node [below] {$1$};
\draw  (2,0)  node [below] {$2$};
\draw  (3,0)  node [below] {$3$};
\draw  (4,0)  node [below] {$4$};
\draw (1,1) node [blue] {\textbullet};
\draw (1,0.6) node {Dung's};

\draw  (0,1)  node [left] {$1$};
\draw  (0,2)  node [left] {$2$};
\draw  (0,3)  node [left] {$3$};
\draw  (0,4)  node [left] {$4$};

\draw[blue,thick, -latex] (1,1) -- (1,2);
\draw[blue,thick, -latex] (1,2) -- (1,3);
\draw[blue,thick, -latex] (1,3) -- (1,4);
\draw[blue,thick,dotted, -latex] (1,4) -- (1,5);

\draw[blue,thick, -latex] (2,1) -- (2,2);
\draw[blue,thick, -latex] (2,2) -- (2,3);
\draw[blue,thick, -latex] (2,3) -- (2,4);
\draw[blue,thick,dotted, -latex] (2,4) -- (2,5);

\draw[blue,thick, -latex] (3,1) -- (3,2);
\draw[blue,thick, -latex] (3,2) -- (3,3);
\draw[blue,thick, -latex] (3,3) -- (3,4);
\draw[blue,thick,dotted, -latex] (3,4) -- (3,5);

\draw[blue,thick, -latex] (4,1) -- (4,2);
\draw[blue,thick, -latex] (4,2) -- (4,3);
\draw[blue,thick, -latex] (4,3) -- (4,4);
\draw[blue,thick,dotted, -latex] (4,4) -- (4,5);

\draw[blue,thick, -latex] (2,1) -- (1,1);
\draw[blue,thick, -latex] (3,1) -- (2,1);
\draw[blue,thick, -latex] (4,1) -- (3,1);
\draw[blue,thick,dotted, -latex] (5,1) -- (4,1);

\draw[blue,thick, -latex] (2,2) -- (1,2);
\draw[blue,thick, -latex] (3,2) -- (2,2);
\draw[blue,thick, -latex] (4,2) -- (3,2);
\draw[blue,thick,dotted, -latex] (5,2) -- (4,2);

\draw[blue,thick, -latex] (2,3) -- (1,3);
\draw[blue,thick, -latex] (3,3) -- (2,3);
\draw[blue,thick, -latex] (4,3) -- (3,3);
\draw[blue, thick,dotted, -latex] (5,3) -- (4,3);

\draw[blue, thick, -latex] (2,4) -- (1,4);
\draw[blue, thick, -latex] (3,4) -- (2,4);
\draw[blue, thick, -latex] (4,4) -- (3,4);
\draw[blue, thick,dotted, -latex] (5,4) -- (4,4);

\draw[blue, dashed, -] (1,1) -- (4,4);
\draw[blue, dashed, -] (1,2) -- (3,4);
\draw[blue, dashed, -] (1,3) -- (2,4);
\draw[blue, dashed, -] (2,1) -- (4,3);
\draw[blue, dashed, -] (3,1) -- (4,2);

\end{tikzpicture}
}
\end{center}
\caption{Depiction of the partial order  $\rhd$ over the set of all graded defense functions (with $0 < m,n \in \mathbb{N}$). The horizontal and vertical axes consist of the values of $m$ and, respectively, $n$. Arrows go from `weaker' to `stronger' defense functions. Dashed lines (diagonals) denote incomparability. The point $m = n = 1$ denotes the position that Dung's characteristic function occupies in the ordering.}
\label{figure:order}
\end{figure}

Fact \ref{fact:accrual_relations} provides ground for a natural way in which different graded defense and neutrality functions can be ordered as their parameters $m$ and $n$ vary. The choice of these parameters determines the logical strength  of different `types' or `standards' of conflict-freeness, which is based on neutrality, and acceptability, which is based on defense.

In  light of Fact \ref{fact:accrual_relations}, comparing different neutrality functions is straightforward. Any relaxation  on  the requirement that no  argument in a set be attacked by other arguments in that set  leads to weaker forms of conflict-freeness. For any $X \subseteq \A$, $\cff{\ell}(X) \subseteq \cff{k}(X)$ for $k$ and $\ell$ positive integers whenever $\ell \leq k$. So neutrality functions can simply be ordered linearly like natural numbers, with lower numbers denoting `stronger' forms of neutrality and hence conflict-freeness.

The ordering of defense functions is more interesting, as these functions are parameterized by two integers:
%\sm{remove: Let us write $\cfmn{m}{n} \rhd \cfmn{s}{t}$ (to be read ``is at least as strong as'') whenever, for any $X \subseteq \A$, $\cfmn{m}{n}(X) \subseteq \cfmn{s}{t}(X)$, with $m,n,s,t$ positive integers. }

\begin{definition}\label{DefGradedDefFunction} $\cfmn{m}{n} \rhd \cfmn{s}{t}$ (to be read ``is at least as strong as'') iff for any $X \subseteq \A$, $\cfmn{m}{n}(X) \subseteq \cfmn{s}{t}(X)$, with $m,n,s,t$ positive integers.
\end{definition}

Relation $\rhd$ orders the set of all graded defense functions in a well-behaved manner:
\begin{fact}\label{Fact:Ordering}
Let $\Delta = \tuple{\A, \ar}$ be an $AF$, and let $\rhd$ be defined as above. Then:
\begin{enumerate}[(i)]
\item $\cfmn{m}{n} \rhd \cfmn{s}{t}$ iff $m \leq s$ and $t \leq n$;
\item Relation $\rhd$ is a  {\em partial order}, i.e.,  reflexive, antisymmetric and transitive.
\end{enumerate}
%Relation $\rhd$ is: (i) reflexive, antisymmetric and transitive (i.e., it is a {\em partial order}); (ii) and if $\cfmn{m}{n} \rhd \cfmn{s}{t}$, then for all $X \subseteq \A$, $\cfmn{m}{n}(X) %%%%\subseteq \cfmn{s}{t}(X)$.
\end{fact}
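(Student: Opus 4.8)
The plan is to prove part (i) first — the characterization $\cfmn{m}{n} \rhd \cfmn{s}{t}$ iff $m \le s$ and $t \le n$ — and then derive part (ii) as an easy consequence of (i) together with the corresponding order-theoretic properties of the coordinatewise order on $\mathbb{N} \times \mathbb{N}$ (with one coordinate reversed).

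For the ``if'' direction of (i), suppose $m \le s$ and $t \le n$. I would chain the inclusions already established in Fact \ref{fact:accrual_relations}. Iterating \eqref{eq:increase2} gives $\cfmn{m}{n}(X) \subseteq \cfmn{s}{n}(X)$ whenever $m \le s$, and iterating \eqref{eq:increase3} gives $\cfmn{s}{n}(X) \subseteq \cfmn{s}{t}(X)$ whenever $t \le n$ (note the direction: smaller $n$ yields a larger set, so going from $n$ down to $t$ enlarges). Composing, $\cfmn{m}{n}(X) \subseteq \cfmn{s}{t}(X)$ for all $X$, i.e.\ $\cfmn{m}{n} \rhd \cfmn{s}{t}$.

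For the ``only if'' direction — the part I expect to be the main obstacle, since it requires exhibiting witnesses rather than manipulating established inclusions — I would argue contrapositively: if it is \emph{not} the case that ($m \le s$ and $t \le n$), then either $m > s$ or $t > n$, and in each case I must build an $AF$ $\Delta$ and a set $X$ with $\cfmn{m}{n}(X) \not\subseteq \cfmn{s}{t}(X)$. For the case $m > s$: take a target argument $a$ with exactly $m-1$ attackers $b_1,\dots,b_{m-1}$, none of them attacked by anything, together with enough arguments in $X$ to make the remaining relevant counts work trivially (in the simplest instance, $a$ has only these $m-1$ undefended attackers and $X = \emptyset$, so $a \in \cfmn{m}{n}(\emptyset)$ because it has at most $m-1$ attackers not counter-attacked by $n$ members of $X$, whereas $a \notin \cfmn{s}{t}(\emptyset)$ since $m-1 \ge s$). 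For the case $t > n$: take $a$ with a single attacker $b$, and give $b$ exactly $n$ attackers, all placed in $X$; then $a \in \cfmn{m}{n}(X)$ (its one attacker $b$ is counter-attacked by $\ge n$ arguments of $X$, so zero attackers fail the test, and $0 < m$), but $a \notin \cfmn{s}{t}(X)$ because $b$ is counter-attacked by only $n < t$ arguments of $X$, so $a$ has one attacker failing the $t$-test, and $1 \ge s$ is impossible only if $s \ge 2$ — to be safe one picks the witness so that the failing count $1$ already meets or exceeds $s$, which holds since $s \ge 1$ and the test is ``$< s$'', giving $1 \not< s$ when $s = 1$; for $s \ge 2$ one instead repeats this gadget $s$ times in parallel on disjoint attackers of $a$. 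I would write out the two gadgets carefully, checking the counting-quantifier conditions of Definition \ref{definition:sensitive} against Definitions, to confirm membership and non-membership.

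Finally, part (ii): by (i), $\cfmn{m}{n} \rhd \cfmn{s}{t}$ holds exactly when $(m,n) \preceq (s,t)$ in the order on $\mathbb{N}_{>0}\times\mathbb{N}_{>0}$ that is the usual order on the first coordinate and the reverse order on the second. This order is reflexive, antisymmetric and transitive — as a product of two linear orders (one of them reversed) — so $\rhd$ inherits all three properties directly. Reflexivity and transitivity are immediate; antisymmetry follows because $\cfmn{m}{n}(X) = \cfmn{s}{t}(X)$ for all $X$ forces both $m \le s$, $t \le n$ and $s \le m$, $n \le t$ by applying (i) in both directions, hence $(m,n) = (s,t)$.
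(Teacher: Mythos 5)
Your proof is correct, and for the ``if'' half of (i) and for part (ii) it follows essentially the same route as the paper: chaining the inclusions of Fact~\ref{fact:accrual_relations} to get $\cfmn{m}{n}(X) \subseteq \cfmn{s}{t}(X)$ when $m \leq s$ and $t \leq n$, and then reading off reflexivity, antisymmetry and transitivity from the properties of set inclusion (equivalently, from the coordinatewise order on the index pairs). Where you go beyond the paper is the ``only if'' direction of (i): the paper's proof simply cites Fact~\ref{fact:accrual_relations}, which only yields the ``if'' half, whereas you actually exhibit witness frameworks (the $m-1$ undefended attackers with $X = \emptyset$ for the case $m > s$, and the $s$ parallel attackers each counter-attacked by exactly $n < t$ members of $X$ for the case $t > n$); both gadgets check out against the counting conditions of Definition~\ref{definition:sensitive}. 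One point worth making explicit in a write-up: Definition~\ref{DefGradedDefFunction} quantifies over subsets $X$ of a \emph{fixed} $\A$, and for a degenerate $\Delta$ (e.g., one with an empty attack relation) all graded defense functions coincide, so the ``only if'' direction fails framework-by-framework. Your counterexamples are freshly constructed $AF$s, so your argument implicitly (and correctly) reads $\rhd$ as a relation on the family of graded defense functions across all frameworks --- which is also how Figure~\ref{figure:order} presents it --- and under that reading your antisymmetry argument for (ii), which concludes $(m,n) = (s,t)$ rather than mere extensional equality of the two functions on one framework, is the stronger and more useful statement.
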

\begin{proof}
\fbox{{\em (i)}} is a direct consequence of Fact \ref{fact:accrual_relations}.
\fbox{{\em (ii)}} follows directly from how relation $\rhd$ is defined and the properties of set inclusion.
\end{proof}
The relation is depicted in its generality in Figure \ref{figure:order}.
Expressions $\cfmn{m}{n} \rhd \cfmn{s}{t}$ may be read as follows: `being $mn$-defended is {\em weakly preferable over} being $st$-defended' or `the $mn$-defense function is {\em at least as strong as} the $st$-defense function'. Intuitively, the partial order $\rhd$ uses logical strength as a way to order graded defense functions. This equates with the intuition that if an argument meets a demanding standard of defense it also meets a less demanding one.

\begin{figure}[t]
\centering
\includegraphics[width=2in]{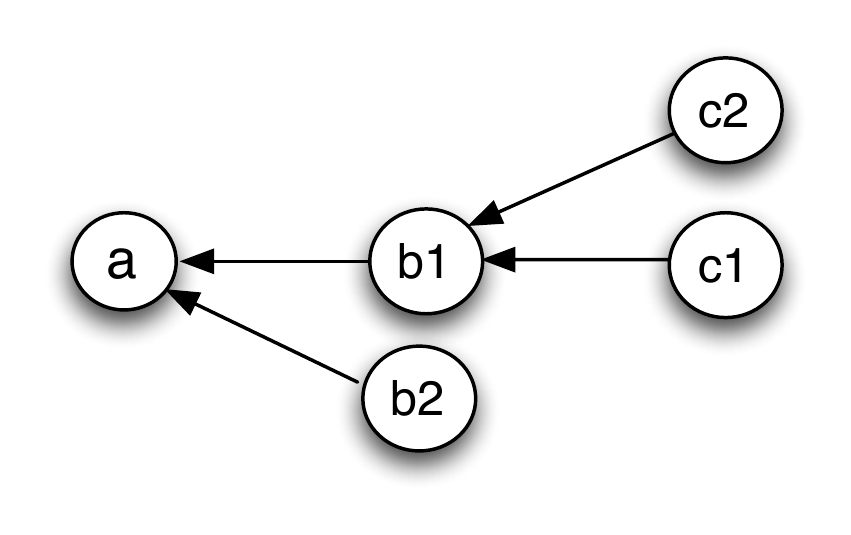}

\caption{Framework of Example \ref{example:defense_neutrality}} \label{MotivatingDEJ}
%\vspace{-0.6cm}
\end{figure}

\begin{example} \label{example:defense_neutrality}
Referring to the framework in  Figure \ref{MotivatingDEJ}, we illustrate Formula \eqref{eq:increase2} and Fact \ref{Fact:Ordering}: $\cfmn{1}{1}(\{c1,c2\})$ = $\{c1,c2,b2\}$ $\subseteq$ $\cfmn{2}{1}(\{c1,c2\})$ = $\{c1,c2,a,b2\}$ $\subseteq$ $\cfmn{3}{1}(\{c1,c2\})$ = $\{c1,c2,a,b1,b2\}$.
%\begin{itemize}
%  \item[] $\cfmn{1}{1}(\{c1,c2\})$ = $\{c1,c2,b2\}$ $\subseteq$ (and $\rhd$)
%  \item[] $\cfmn{2}{1}(\{c1,c2\})$ = $\{c1,c2,a,b2\}$ $\subseteq$ (and $\rhd$)
%  \item[] $\cfmn{3}{1}(\{c1,c2\})$ = $\{c1,c2,a,b1,b2\}$
%  \end{itemize}
We also illustrate Formula \eqref{eq:increase3} and Fact \ref{Fact:Ordering} with reference to  Figure \ref{Motivating1}ii): $\cfmn{1}{1}(\{d2,c2,a2\})$ = $\{d2,c2,a2\}$ $\supseteq$ $\cfmn{1}{2}(\{d2,c2,a2\})$ = $\{d2,c2,a2\}$ $\supseteq$ $\cfmn{1}{3}(\{d2,c2,a2\})$ = $\{d2,c2\}$.
%\begin{itemize}
%  \item[] $\cfmn{1}{1}(\{c1,c2,a\})$ = $\{c1,c2,a\}$ $\supseteq$ (and $\lhd$)
%  \item[] $\cfmn{1}{2}(\{c1,c2,a\})$ = $\{c1,c2,a\}$ $\supseteq$ (and $\lhd$)
 %   \item[] $\cfmn{1}{3}(\{c1,c2,a\})$ = $\{c1,c2\}$
%\end{itemize}
\end{example}

\subsubsection{On the Partiality of $\rhd$}\label{Sec:ExtendingPartialOrder}

As the relation $\rhd$ is a partial order, some defense functions may be incomparable (see Figure \ref{figure:order}) and this, we claim, is intuitive. By way of example, consider Dung's defense $\cfmn{1}{1}$. This standard of defense is strengthened by $\cfmn{1}{2}$ (higher $n$ parameter) and weakened by $\cfmn{2}{1}$ (higher $m$ parameter). But under the definition of $\rhd$, $\cfmn{2}{2}$ defines a standard of defense which is incomparable with respect to $\cfmn{1}{1}$: it demands more defenders per attacker, but tolerates more attackers that are not counter-attacked to the desired level. In general, incomparability arises every time the parameters of the functions do not meet the condition $m \leq s$ and $t \leq n$ of Fact \ref{Fact:Ordering}.

%\begin{example}{(Example \ref{Ex:intuitions} continued)} \label{Ex:intuitions2}
%In Figure Fig.\ref{Motivating1}iv), $a4$ has three attackers, all counter-attacked. Argument $a3$ in Figure \ref{Motivating1}iii) has two attackers, all counter-attacked.
%Under the intuition that each attack weakens the attack argument, one may argue that $a3$ is `stronger' than $a4$
 %while suggesting that $a3$ in Figure \ref{Motivating1}iii) is stronger than $a4$. On the other hand, each of the two  attackers of $a3$ is defended by only one argument, whereas two %attacks on $a4$ are defended by two arguments, suggesting that $a4$  is stronger than $a3$. Hence $a3$ and $a4$ are incomparable.
%\end{example}

It should be clear, however, that the partial order $\rhd$ over graded defense functions could be further refined to a total order by resolving incomparability. This can be done in two ways: by either giving priority to parameter $m$ or to parameter $n$. For example, if a set of arguments is $mn$-defended and another one is $st$-defended, where $m < s$ and $n < t$ (i.e., they are incomparable w.r.t. $\rhd$) then the first one can be stipulated to be more strongly defended because it is less tolerant with respect to the failure of defense. Therefore, for $m < s$ and $n < t$, belonging to $\cfmn{m}{n}(X)$ is `better' than belonging to $\cfmn{s}{t}(X)$. One could then redefine $\rhd$ as follows: $\cfmn{m}{n} \rhd \cfmn{s}{t}$ iff either $m < s$, or $m = s$ and $n \geq t$. This yields a lexicographic order over graded defense functions giving priority to the $m$ parameter over the $n$ parameter. We do not investigate such refinements further in this paper.

%%%%%%%%%%%%%%%%%%%%%%%%%%%%%%%%%%%%%%%%

\section{Graded Semantics for Abstract Argumentation} \label{Sec:GradedSemantics}

By means of the graded defense and neutrality functions, Dung's notions of acceptability and conflict-freeness can be generalised to graded variants in a natural way. A set of arguments $X$ is said to be conflict-free at grade $\ell$ (or, $\ell$-conflict-free) whenever none of its arguments is attacked by at least $\ell$ arguments in $X$.
A set of arguments $X$ is said to be acceptable at grade $mn$ (or, $mn$-acceptable)
%(or, $mn$-self-defended)
whenever all of its arguments are such that at most $m-1$ of their attackers are not counter-attacked by at least $n$ arguments in $X$.
A graded notion of admissibility follows ($\ell$-conflict-freeness plus $mn$-acceptability) and we thereby obtain graded variants of all the main admissibility-based semantics, which are simply Dung's standard semantics based on graded admissibility instead  of  standard admissibility. The first part of this section formally defines graded semantics. The rest of the section then develops a core theory of graded semantics. In the tradition of abstract argumentation, our results focus on the central questions of the existence and construction of graded extensions, and provides positive results under certain constraints on the parameters $n$, $m$ and $l$.

\subsection{Graded Generalisation of Dung's Semantics}

\begin{table}[t]
\hspace*{-1cm}
%\begin{center}
\begin{tabular}{lcl}
\hline
$X$ is $\ell$-conflict-free in $\Frame$ & iff & $X \subseteq \cff{\ell}(X)$ \\
$X$ is $mn$-self-defended & iff & $X \subseteq \cfmn{m}{n}(X)$ \\
$X$ is $\ell mn$-admissible in $\Delta$ & iff & $X \subseteq \cff{\ell}(X)$ and $X \subseteq \cfmn{m}{n}(X)$ \\
$X$ is an $\ell mn$-complete extension of $\Delta$ & iff & $X \subseteq \cff{\ell}(X)$ and $X = \cfmn{m}{n}(X)$ \\
$X$ is an $\ell mn$-stable extension of $\Delta$  & iff & $X = n_n(X) = n_m(X) \subseteq n_l(X)$ \\
$X$ is the $\ell mn$-grounded extension of $\Delta$  & iff & $X$ is the smallest $\ell mn$-complete ext. of $\Delta$ \\
$X$ is an $\ell mn$-preferred extension of $\Delta$ & iff & $X$ is a largest $\ell mn$-complete ext. of $\Delta$ \\
\hline
\end{tabular}
%\end{center}
\caption{Graded generalizations of standard argumentation theory notions from \cite{dung95acceptability}.}
\label{table:graded}
\end{table}

We are now in the position to generalise Definition \ref{def:DungSemantics} as follows:
\begin{definition}[Graded Extensions]\label{table:accrual-sensitive}
Let $\Delta = \tuple{\A, \ar}$ be an $AF$, $X \subseteq \A$, and $\ell$, $m$ and $n$ be positive integers. Graded extensions are defined as in Table \ref{table:graded}.
We may write $\adm_{lmn}(\Delta)$, $\prf_{lmn}(\Delta)$ and $\stb_{lmn}(\Delta)$ to denote, respectively, the set of $lmn$-admissible, $lmn$-preferred. and $lmn$-stable extensions of $\Delta$, and $\grn_{lmn}(\Delta)$ to denote the $lmn$-grounded extension of $\Delta$. Finally, for an extension type $S \in \set{\mathit{grounded}, \mathit{preferred}, \mathit{stable}}$, we say that $a \in \A$ is {\em credulously} justified w.r.t. $\ell mn$-S if $a \in \bigcup S_{\ell mn}(\Delta)$; and {\em sceptically} justified w.r.t. $\ell mn$-S if $a \in \bigcap S_{\ell mn}(\Delta)$
%For $S \in \{$$\ell mn$-grounded, $\ell$-stable, $\ell mn$-preferred$\}$, $a \in \A$ is credulously, respectively sceptically justified under semantics $S$, if $a$ is in at least one, respectively all, $S$ extensions of $%\Delta$.
Henceforth we assume the sceptical definition when referring to an argument simply as being {\em justified}.
%\sm{Maybe put definition in the format of a table, as Table 1}
\end{definition}

The definition deserves some comment. Note first of all that when $l = m = n =1$, we recover the standard definition of conflict-freeness, admissibility and extensions (Definition \ref{def:DungSemantics}), which we henceforth refer to as `Dung conflict-freeness' and `Dung admissibility' and `Dung extensions'. The key notion is graded admissibility, which is obtained by parameterizing the conflict-freeness requirement by $\ell$ --- i.e., $X \subseteq \cff{\ell}(X)$ ---, and parameterizing the self-defense requirement by $m$ and $n$ --- i.e., $X \subseteq \cfmn{m}{n}(X)$. The remaining graded semantics are defined by extending graded admissability in exactly the same way in which Dung admissibility is extended to define the standard Dung semantics. So, a graded complete extension, with parameters $\ell, m$ and $n$, is a fixpoint of $\cfmn{m}{n}$, which is also $\ell$-conflict-free, the graded grounded extension is the smallest $\ell mn$-complete extension, and the graded preferred extensions are the largest $\ell mn$-complete extensions. Finally, a graded stable extension, with parameters $\ell, m$ and $n$, is a fixpoint of $n_n(X)$ and $n_m(X)$ (and therefore of $\cfmn{m}{n}$), which is also $\ell$-conflict-free. Constructive existence results for these semantics are provided in the next section.

Each graded extension type should then be interpreted as a class of weakenings and strengthenings of its standard Dung counterpart. For example: Dung complete extensions are strengthened by $11n$-complete extensions, with $n > 1$, which require a higher number of defenders for each attacked argument (that is, the requirements for acceptability are strengthened); and are weakened by $\ell 11$-complete extensions, with $\ell > 1$, which tolerate a higher level of internal conflict (that is, weakening the conflict-freeness requirement), or by $1 m1$-complete extensions, with $m > 1$, which tolerate a higher level of undefended arguments (that is,  weakening the acceptability requirement). So for each Dung extension type, we now have an ordered family of extensions incorporating a form of graduality.

%%%%%%%%%%%%%%%%%%%%%%%%%%%%

\subsection{Fixpoint Construction for Graded Exensions}

We proceed as in the standard case (cf. Section \ref{Sec:Background}). The basic idea is as follows: given a graded admissible set, we show that, and under what assumptions on the parameters $\ell$, $m$ and $n$, this can be expanded into a graded complete set through a process of fixpoint approximation.

Fix a framework $\Frame$ and take a set $X$ such that $X \subseteq \cfmn{m}{n}(X)$ and $X \subseteq \cff{\ell}(X)$ (that is, an $\ell mn$-admissible set). By iterating $\cfmn{m}{n}$, consider the stream of sets
\begin{align}
\cfmn{m}{n}^0(X), \cfmn{m}{n}^1(X), \ldots  \label{stream1}
\end{align}
and the stream of sets
\begin{align}
\cff{n}(\cfmn{m}{n}^0(X)), \cff{n}(\cfmn{m}{n}^1(X)), \ldots \label{stream1.5}
\end{align}
which, by Fact \ref{fact:properties_dn}, is equivalent to the stream
\begin{align}
%\cff{n}(\cfmn{m}{n}^0(X)), \cff{n}(\cfmn{m}{n}^1(X)), \ldots \label{stream2}
\cfmn{n}{m}^0(\cff{n}(X)), \cfmn{n}{m}^1(\cff{n}(X)), \ldots. \label{stream2}
\end{align}
%\footnote{Notice the reversal in the parameters $m$ and $n$ due to the fact that $\cff{n}(\cfmn{m}{n}(X)) = \cfmn{m}{n}(\cff{n}(X))$, a direct consequence of Fact %%%%%%%%%%%%\ref{fact:properties_dn}.}
By the above assumptions on $X$, and since $\cfmn{m}{n}$
%and $\cfmn{n}{m}$ are
is monotonic and $\cff{n}$ is antitonic (cf. Fact \ref{fact:properties_dn}), the stream in \eqref{stream1} is non-decreasing and the stream in \eqref{stream2} is non-increasing, with respect to set inclusion. In finite attack graphs, these streams must therefore stabilize reaching a limit at iteration $|\A| + 1$. In infinite but finitary attack graphs, we will see that the limit can be reached at $\omega$. We will also see (Lemma \ref{theorem:approxim81}) that the limits of these streams correspond to the {\em smallest} fixpoint of $\cfmn{m}{n}$ {\em containing} $X$ and, respectively, the {\em largest} fixpoint of $\cfmn{n}{m}$ which is {\em contained} in $\cff{n}(X)$.\footnote{Notice the reversal in the parameters $m$ and $n$ due to the fact that $\cff{n}(\cfmn{m}{n}(X)) = \cfmn{n}{m}(\cff{n}(X))$, a direct consequence of Fact \ref{fact:properties_dn}.} We denote the first one by $\lfp_X.\cfmn{m}{n}$ and the second one by $\gfp_X.\cfmn{n}{m}$. Intuitively, the two sets denote the smallest superset of $X$ which is equal to the set of arguments it $mn$-defends
%\footnote{We will show this set is also $m$-conflict-free and it is therefore the smallest complete extension containing $X$.}
and, respectively, the largest set whose arguments are not attacked by at least $n$ arguments in $X$ and which is equal to the set of arguments it $nm$-defends.
%\footnote{Note that such set is not necessarily $n$-conflict-free, and therefore also not necessarily $m$-conlict-free.}
%\sm{I'm still struggling with the reversal of the parameters}
The above construction is illustrated in Figure \ref{figure:decomposition} below. We prove now its correctness showing that the limits of the streams in \eqref{stream1} and \eqref{stream2} correspond indeed to the desired fixpoints.

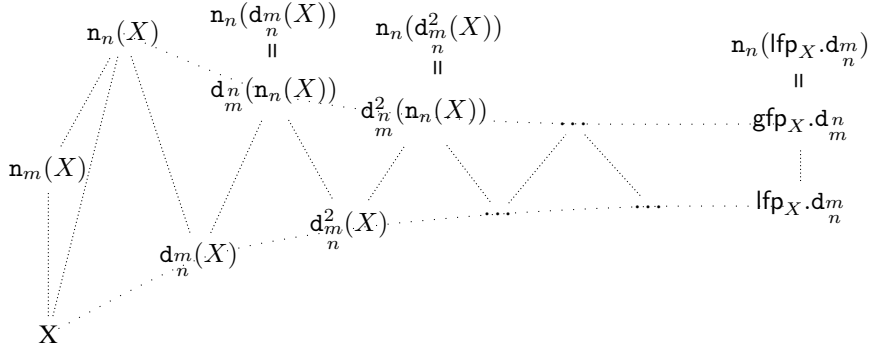
\begin{figure}[t]
\centering
\begin{tikzpicture}
\draw
(0,0) node (0) {X}
(0,2.2) node (12) {$\cff{m}(X)$}
(1,4) node (1) {$\cff{n}(X)$}

(2,1) node (2) {$\cfmn{m}{n}(X)$}
(3,3.2) node (3) {$\cfmn{n}{m}(\cff{n}(X))$}
(3,4) node (3a)
{$
\begin{array}{c}
\cff{n}(\cfmn{m}{n}(X)) \\
\verteq
\end{array}
$}

(4,1.4) node (4) {$\cfmn{m}{n}^2(X)$}
(5,2.9) node (5) {$\cfmn{n}{m}^2(\cff{n}(X))$}
(5.2,3.8) node (5a)
{$
\begin{array}{c}
\cff{n}(\cfmn{m}{n}^2(X)) \\
\verteq
\end{array}
$}

(6,1.6) node (6) {\ldots}
(7,2.8) node (7) {\ldots}
(10,2.8) node (10) {$\gfp_X.\cfmn{n}{m}$}
(10,3.6) node (10a){$
\begin{array}{c}
\cff{n}(\lfp_X.\cfmn{m}{n}) \\
\verteq
\end{array}
$}

(8,1.7) node (8) {\ldots}
(10,1.7) node (11) {$\lfp_X.\cfmn{m}{n}$};

\draw[densely dotted] (0) -- (1);
\draw[densely dotted] (0) -- (12);
\draw[densely dotted] (12) -- (1);
\draw[densely dotted] (1) -- (2);
\draw[densely dotted] (2) -- (3);
\draw[densely dotted] (3) -- (4);
\draw[densely dotted] (4) -- (5);
\draw[densely dotted] (5) -- (6);
\draw[densely dotted] (6) -- (7);
\draw[densely dotted] (7) -- (8);
\draw[densely dotted] (11) -- (10);

\draw[thin,loosely dotted] plot[smooth] coordinates {(0,0) (2,1) (4,1.4) (6,1.6) (8,1.7) (10,1.7)};
\draw[thin,loosely dotted] plot[smooth] coordinates {(1,4) (3,3.2) (5,2.9) (7,2.8) (10,2.8)};
\end{tikzpicture}
\caption{
The two streams of \eqref{stream1} and \eqref{stream2}
 %generated by the indefinite iteration of function $\cfmn{m}{n}$ on $X$ and of function $\cfmn{n}{m}$ on $\cff{n}(X)$, where $X$ is assumed to be included in $\cfmn{m}{n}(X)$ and in %$\cff{m}(X)$,
under the assumption that $X$ is $mmn$-admissible and the two integers $n$ and $m$ are such that $n \geq m$.
Position with respect to  the horizontal axis indicates the number of iterations, and positions with respect to the vertical axis indicates set theoretic inclusion. Cf. Figure \ref{figure:decomposition1} depicts the same behaviour for the standard, non-graded, case.}
\label{figure:decomposition}
\end{figure}
\smallskip

First of all the following important lemma shows under what conditions on $m$ and $n$ graded conflict-freeness can be preserved by the above process of iteration of the defense function.\footnote{In the following two lemmas we handle only parameters $m$ and $n$ directly. This, we will see, is sufficient to establish results concerning also parameter $\ell$ later (Theorem \ref{fact:smallest_complete}).}
\begin{lemma} \label{lemma:preserve_cf1}
Let $\Frame$ be a finitary attack graph, $X \subseteq \A$ be such that $X \subseteq \cfmn{m}{n}(X)$ and $X \subseteq \cff{m}(X)$, and $m,n$ be two positive integers such that $n \geq m$. Then for any $k$ s.t. $0 \leq k < \omega$,
$$
X \subseteq \cfmn{m}{n}^k(X) \subseteq \cff{m}(\cfmn{m}{n}^k(X)) \subseteq \cff{m}(X).
$$
\end{lemma}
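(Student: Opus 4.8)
The plan is to prove the chain
$$
X \subseteq \cfmn{m}{n}^k(X) \subseteq \cff{m}(\cfmn{m}{n}^k(X)) \subseteq \cff{m}(X)
$$
by induction on $k$, mimicking the classical argument behind Dung's fundamental lemma (Lemma \ref{lemma:preserve_cf}) but keeping careful track of where the hypothesis $n \geq m$ is needed. The base case $k = 0$ is just the assumption that $X$ is $mn$-self-defended and $m$-conflict-free (i.e. $X \subseteq \cfmn{m}{n}(X)$ and $X \subseteq \cff{m}(X)$), together with the monotonicity of $\cfmn{m}{n}$ which, applied to $X \subseteq \cff{m}(X)$, need not even be used at this stage. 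For the inductive step, assume the chain holds at stage $k$; I want it at stage $k+1$. The leftmost inclusion $X \subseteq \cfmn{m}{n}^{k+1}(X)$ follows from $X \subseteq \cfmn{m}{n}^k(X)$ by applying $\cfmn{m}{n}$ and using monotonicity (Fact \ref{fact:properties_dn}, \eqref{formula:monotonicmn}) together with $X \subseteq \cfmn{m}{n}(X)$, exactly as in the standard construction where the lower stream is non-decreasing. The rightmost inclusion $\cff{m}(\cfmn{m}{n}^{k+1}(X)) \subseteq \cff{m}(X)$ follows from antitonicity of $\cff{m}$ (Fact \ref{fact:properties_dn}, \eqref{formula:antitonicm}) applied to the already-established $X \subseteq \cfmn{m}{n}^{k+1}(X)$.

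The crux is the middle inclusion $\cfmn{m}{n}^{k+1}(X) \subseteq \cff{m}(\cfmn{m}{n}^{k+1}(X))$ — i.e. that $Y := \cfmn{m}{n}^{k+1}(X)$ is $m$-conflict-free. Writing $Z := \cfmn{m}{n}^k(X)$, so $Y = \cfmn{m}{n}(Z)$, I unfold the definitions. Using $\cfmn{m}{n} = \cff{m} \circ \cff{n}$ (Fact \ref{fact:properties_dn}, \eqref{eq:twofold}), we have $Y = \cff{m}(\cff{n}(Z))$. By the inductive hypothesis $Z \subseteq \cff{m}(Z)$, and I want to leverage this to show $Y \subseteq \cff{m}(Y)$. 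Suppose for contradiction some $x \in Y$ has at least $m$ attackers inside $Y$; call them $y_1, \dots, y_m \in Y$. Since each $y_j \in Y = \cfmn{m}{n}(Z)$, at most $m-1$ of the attackers of $y_j$ fail to be counter-attacked by $n$ arguments in $Z$; in particular, since $x$ attacks $y_j$, and there are at least... — here is where the counting argument must be assembled carefully. The idea is: because $x$ is attacked by $m$ elements of $Y$, and each of those $m$ elements is $mn$-defended by $Z$, one derives that $x$ is counter-attacked "sufficiently" by $Z$ — specifically that $x$ has at least $n \geq m$ attackers in $Z$, contradicting the inductive hypothesis $Z \subseteq \cff{m}(Z)$ once one also observes $x$'s relevant attackers are forced to lie in $Z$. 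This is precisely the place the hypothesis $n \geq m$ is indispensable: the $m$-fold conflict at stage $k+1$ propagates back to an $n$-fold attack at stage $k$, which we can only cancel against $m$-conflict-freeness if $n \geq m$.

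I expect the main obstacle to be getting this counting bookkeeping exactly right: one must argue that each of the $m$ attackers $y_j$ of $x$ in $Y$, being $mn$-defended by $Z$, forces $x$ itself (as an attacker of $y_j$) to be counter-attacked by $\geq n$ arguments of $Z$ — but this is not immediate, since "$y_j \in \cfmn{m}{n}(Z)$" only says that at most $m-1$ of $y_j$'s attackers are under-defended, so a single $y_j$ does not pin down $x$. The resolution is to use the $m$ distinct witnesses jointly together with a pigeonhole/counting step (or, more cleanly, to run the argument through the two-step decomposition $\cff{m} \circ \cff{n}$, tracking cardinalities of attacker-sets at each layer), and this is where I would spend the real effort. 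Once the middle inclusion is secured, the lemma follows by the routine induction described above; the finitariness assumption is used only to guarantee the relevant attacker sets are finite so the counting quantifiers behave as expected (and matches the hypotheses under which $\cfmn{m}{n}$ is well-behaved, Facts \ref{fact:properties_dn} and \ref{fact:graded_continuous}).
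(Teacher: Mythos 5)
Your induction scheme and the two outer inclusions are fine, but the sketch of the middle inclusion --- the only non-routine part of the lemma --- goes in the wrong direction and would not close. In your notation $Z=\cfmn{m}{n}^k(X)$ and $Y=\cfmn{m}{n}(Z)$: having posited $m$ attackers $y_1,\dots,y_m$ of $x$ inside $Y$, you then reason about ``$x$ attacks $y_j$'' and try to extract from $y_j\in\cfmn{m}{n}(Z)$ the conclusion that $x$ is counter-attacked by at least $n$ arguments of $Z$. This has two defects. First, an orientation error: $x$ is the \emph{attacked} argument here, not an attacker of any $y_j$, so the $mn$-defense of $y_j$ by $Z$ says nothing about counter-attacks on $x$; and even granting the reversed orientation, the ``pigeonhole over the $m$ witnesses'' you defer to does not exist, since each $y_j$ may independently count $x$ among its own $m-1$ tolerated under-defended attackers, so no joint counting forces $x$ to acquire $n$ counter-attackers. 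Second, the contradiction you aim for is misplaced: showing that $x$ has $\geq m$ attackers in $Z$ would contradict $Z\subseteq\cff{m}(Z)$ only if $x$ itself belonged to $Z$, which is not given ($x$ lies in $Y$, and $Y\supseteq Z$).

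The fact you actually need is $x$'s \emph{own} membership in $\cfmn{m}{n}(Z)$: at most $m-1$ of $x$'s attackers may lack $n$ counter-attackers in $Z$, and $x$ has $m$ attackers $y_1,\dots,y_m$, so at least one of them, say $y_m$, is attacked by at least $n$ arguments $z_1,\dots,z_n\in Z$. This alone is still not a contradiction, because $y_m$ need not lie in $Z$; you must descend one more level. Since $y_m\in\cfmn{m}{n}(Z)$ and $n\geq m$, among its $\geq n>m-1$ attackers $z_i\in Z$ at least one, say $z_n$, is itself counter-attacked by at least $n\geq m$ arguments of $Z$. Now $z_n$ \emph{is} an element of $Z$ with at least $m$ attackers in $Z$, contradicting the induction hypothesis that $Z$ is $m$-conflict-free. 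This two-step descent is where the hypothesis $n\geq m$ is genuinely used (twice), and it is the step your proposal replaces with an argument that cannot be completed.
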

\begin{proof}
\fbox{$X \subseteq \cfmn{m}{n}^k(X)$} holds by the assumption that $X$ is admissible and by the monotonicity of $\cfmn{m}{n}$ (Fact \ref{fact:properties_dn}).
\fbox{$\cfmn{m}{n}^k(X) \subseteq \cff{m}(\cfmn{m}{n}^k(X))$}
is proven by induction over $k$.
The \fbox{base case} holds by assumption as $\cfmn{m}{n}^0(X) = X$ is $mmn$-admissible and therefore $m$-conflict-free.
For the \fbox{induction step} assume (IH) that $\cfmn{m}{n}^k(X) \subseteq \cff{m}(\cfmn{m}{n}^k(X))$ (that is, the $k^{\mathit{th}}$ step is $m$-conflict-free).
We show that $\cfmn{m}{n}^{k+1}(X) \subseteq \cff{m}(\cfmn{m}{n}^{k+1}(X))$. Suppose towards a contradiction that is not the case. Then there exists $x$ and $y_1, \ldots, y_m$ in set $\cfmn{m}{n}^{k+1}(X) = \cfmn{m}{n}(\cfmn{m}{n}^{k}(X))$ such that $x \al y_i$ for $1 \leq i \leq m$.
Since $n \geq m$ by assumption, by the definition of $\cfmn{m}{n}$ there exist $z_1, \ldots, z_n \in \cfmn{m}{n}(X)$ such that $y_\ell \al z_i$ for $1 \leq i \leq n$. That is, at least one among all $y_i$'s (w.l.o.g. assumed to be $y_m$) is attacked by at least $n$ arguments in $\cfmn{m}{n}(X)$. But from this and the assumption that $n \geq m$, there exist $w_1, \ldots, w_n  \in \cfmn{m}{n}(X)$ such that $z_n \al w_i$ for $1 \leq i \leq n$. That is, at least one among all $z_i$'s (w.l.o.g. assumed to be $z_n$) is also attacked by at least $n$ arguments in $\cfmn{m}{n}(X)$. Now since $n \geq m$ we conclude that $\cfmn{m}{n}(X)$ is not $m$-conflict-free, against IH.
\fbox{$\cff{m}(\cfmn{m}{n}^k(X)) \subseteq \cff{m}(X)$} follows from the first claim by the antitonicity of $\cff{m}$ (Fact \ref{fact:properties_dn}).
\end{proof}
That is, each stage $\cfmn{m}{n}^k(X)$ in the stream of iteration of the graded defense function from set $X$ is an $m$-conflict-free set.

\begin{lemma} \label{theorem:approxim81}
Let $\Frame$ be a finitary attack graph and $X \subseteq \A$ be such that $X \subseteq \cfmn{m}{n}(X)$ and $X \subseteq \cff{m}(X)$, and $m,n$ be two positive integers such that $n \geq m$:
\begin{align}
\lfp_X.\cfmn{m}{n} & = \bigcup_{0 \leq k < \omega} \cfmn{m}{n}^k(X) \label{eq:below1} \\
\gfp_X.\cfmn{n}{m}& = \bigcap_{0 \leq k < \omega} \cfmn{n}{m}^k(\cff{n}(X)) \label{eq:above1}
\end{align}
\end{lemma}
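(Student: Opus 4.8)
The plan is to mirror, step for step, the proof of Lemma~\ref{theorem:approxim8}, systematically replacing $\cf{}$ and $\cff{}$ by their graded counterparts $\cfmn{m}{n}$, $\cfmn{n}{m}$ and $\cff{n}$ and invoking the graded analogues of the relevant facts. The first observation is that, by \eqref{eq:twofold} of Fact~\ref{fact:properties_dn}, $\cff{n}(\cfmn{m}{n}(Z)) = \cfmn{n}{m}(\cff{n}(Z))$ for every $Z$, so an easy induction identifies the streams \eqref{stream1.5} and \eqref{stream2}, i.e.\ $\cff{n}(\cfmn{m}{n}^k(X)) = \cfmn{n}{m}^k(\cff{n}(X))$ for all $k$. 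Since $\cfmn{m}{n}$ is monotone \eqref{formula:monotonicmn} and $X \subseteq \cfmn{m}{n}(X)$ by hypothesis, the stream \eqref{stream1} is a $\subseteq$-increasing chain; and since $\cfmn{n}{m}(\cff{n}(X)) = \cff{n}(\cfmn{m}{n}(X)) \subseteq \cff{n}(X)$ — the last step by $X \subseteq \cfmn{m}{n}(X)$ and the antitonicity of $\cff{n}$ \eqref{formula:antitonicm} — the stream \eqref{stream2} is a $\subseteq$-decreasing chain. Write $Y \mathrel{:=} \bigcup_{0 \le k < \omega} \cfmn{m}{n}^k(X)$ and $W \mathrel{:=} \bigcap_{0 \le k < \omega} \cfmn{n}{m}^k(\cff{n}(X))$.

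For \eqref{eq:below1} I would show that $Y$ is the least fixpoint of $\cfmn{m}{n}$ containing $X$. That $Y$ contains $X$, and that $Y \subseteq \cfmn{m}{n}(Y)$, are immediate from monotonicity (each $\cfmn{m}{n}^k(X) \subseteq Y$, hence $\cfmn{m}{n}^{k+1}(X) \subseteq \cfmn{m}{n}(Y)$, and $Y = \bigcup_k \cfmn{m}{n}^{k+1}(X)$ since the chain is increasing). The reverse inclusion $\cfmn{m}{n}(Y) \subseteq Y$ is the substantive point, and this is exactly the upward $\omega$-continuity of $\cfmn{m}{n}$ recorded in Fact~\ref{fact:graded_continuous}, \eqref{eq:scott}: given $x \in \cfmn{m}{n}(Y)$, the attacker set $\overline{x}$ is finite (as $\Frame$ is finitary), so only finitely many arguments of $Y$ are needed to witness that at least $|\overline{x}|-(m-1)$ of $x$'s attackers each have $\ge n$ counter-attackers in $Y$; since the chain \eqref{stream1} is increasing, all these (finitely many) witnesses already lie in a single stage $\cfmn{m}{n}^K(X)$, whence $x \in \cfmn{m}{n}(\cfmn{m}{n}^K(X)) = \cfmn{m}{n}^{K+1}(X) \subseteq Y$. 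Minimality is routine: for any fixpoint $Z \supseteq X$ of $\cfmn{m}{n}$, induction on $k$ using \eqref{formula:monotonicmn} gives $\cfmn{m}{n}^k(X) \subseteq Z$, hence $Y \subseteq Z$.

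Equation~\eqref{eq:above1} is handled dually, showing $W$ is the greatest fixpoint of $\cfmn{n}{m}$ contained in $\cff{n}(X)$. The inclusion $\cfmn{n}{m}(W) \subseteq W$ follows from monotonicity: one checks $\cfmn{n}{m}(W) \subseteq \cfmn{n}{m}^k(\cff{n}(X))$ for every $k \ge 0$ using monotonicity together with $\cfmn{n}{m}(\cff{n}(X)) \subseteq \cff{n}(X)$. The converse $W \subseteq \cfmn{n}{m}(W)$ is the substantive step, and corresponds to the downward $\omega$-continuity in Fact~\ref{fact:graded_continuous}, \eqref{eq:scottt}: if some $x \in W$ failed to lie in $\cfmn{n}{m}(W)$, then at least $n$ attackers $y_1,\dots,y_n$ of $x$ would each have fewer than $m$ counter-attackers in $W$; but for each such $y_i$ the $\subseteq$-decreasing sequence of finite sets $\overline{y_i} \cap \cfmn{n}{m}^k(\cff{n}(X))$ stabilises (finitariness) to $\overline{y_i} \cap W$, so these ``fewer than $m$'' conditions already all hold at a common finite stage $\cfmn{n}{m}^K(\cff{n}(X))$, giving $x \notin \cfmn{n}{m}^{K+1}(\cff{n}(X))$ and contradicting $x \in W$. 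Maximality: any fixpoint $Z \subseteq \cff{n}(X)$ of $\cfmn{n}{m}$ satisfies $Z = \cfmn{n}{m}^k(Z) \subseteq \cfmn{n}{m}^k(\cff{n}(X))$ for all $k$, so $Z \subseteq W$.

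The main obstacle — indeed the only place the argument is not mechanical bookkeeping with Fact~\ref{fact:properties_dn} — is the pair of closure steps $\cfmn{m}{n}(Y) \subseteq Y$ and $W \subseteq \cfmn{n}{m}(W)$, i.e.\ verifying that the limits are genuine fixpoints and not merely pre-/post-fixpoints. This is precisely what the finitariness hypothesis buys us (for non-finitary $\Frame$ one must instead iterate transfinitely, cf.\ Remark~\ref{remark:ordinal}); and since Fact~\ref{fact:graded_continuous} is stated for directed families of \emph{finite} sets, the cleanest route is the explicit ``large enough stage $K$'' argument sketched above, which uses finiteness of each $\overline{x}$ together with finiteness of the counting thresholds $m$ and $n$. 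I expect the additional hypotheses $X \subseteq \cff{m}(X)$ and $n \ge m$ to play no role in \emph{this} statement — they are what Lemma~\ref{lemma:preserve_cf1} requires, and are carried along only to match the construction of Figure~\ref{figure:decomposition} and to feed into the subsequent complete-extension theorem — so that, up to replacing $\cf{},\cff{}$ by $\cfmn{m}{n},\cff{n}$, the proof is essentially verbatim that of Lemma~\ref{theorem:approxim8}.
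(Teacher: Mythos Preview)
Your proposal is correct and follows the same high-level strategy as the paper: establish that the union (respectively intersection) is a fixpoint via finitariness/$\omega$-continuity, then verify extremality. The fixpoint halves are essentially identical to the paper's.

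There is, however, a genuine difference in the extremality arguments. For minimality in \eqref{eq:below1} and maximality in \eqref{eq:above1} you proceed directly: if $Z \supseteq X$ is a fixpoint of $\cfmn{m}{n}$ then by monotonicity $\cfmn{m}{n}^k(X) \subseteq Z$ for all $k$; and dually, if $Z \subseteq \cff{n}(X)$ is a fixpoint of $\cfmn{n}{m}$ then $Z = \cfmn{n}{m}^k(Z) \subseteq \cfmn{n}{m}^k(\cff{n}(X))$ for all $k$. The paper instead argues both by contradiction, and for the maximality part of \eqref{eq:above1} it invokes Lemma~\ref{lemma:preserve_cf1} to obtain the chain $X \subseteq \cfmn{m}{n}^k(X) \subseteq \cff{n}(\cfmn{m}{n}^k(X))$, which is precisely where the hypotheses $X \subseteq \cff{m}(X)$ and $n \geq m$ enter its proof. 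Your direct argument bypasses this entirely, confirming your expectation that those two hypotheses are not actually needed for the present lemma (they are carried along for Lemma~\ref{lemma:preserve_cf1} and Theorem~\ref{fact:smallest_complete}). So your route is slightly more elementary and isolates more sharply which assumptions do real work.
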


\begin{proof}

\fbox{\eqref{eq:below1}}
\fbox{First}, we prove that $\bigcup_{0 \leq n < \omega} \cfmn{m}{n}^k(X)$ is a fixpoint by the following series of equations:
\begin{align*}
\cfmn{m}{n} \left( \bigcup_{0 \leq k < \omega} \cfmn{m}{n}^k(X) \right) & =  \bigcup_{0 \leq k < \omega}\cfmn{m}{n}(\cfmn{m}{n}^k(X)) \\
                                                                                                                                                %& = & \bigcup_{1 \leq n < \omega} \cf{\Frame}^n(X) \\
                                                                                                                                                & =  \bigcup_{0 \leq k < \omega} \cfmn{m}{n}^k(X)
\end{align*}
where the first equation holds by the $\omega$-continuity of $\cfmn{m}{n}$ (Fact \ref{fact:graded_continuous}) and the second by the fact that, since $X$ is $\ell mn$-admissible by assumption and $\cfmn{m}{n}$ is monotonic (Fact \ref{fact:properties_dn}) we have that: $X = \cfmn{m}{n}^0(X) \subseteq \cfmn{m}{n}^k(X) \subseteq \cfmn{m}{n}^{k +1}(X)$ for any $k$ s.t. $0 \leq k < \omega$.
\fbox{Second}, we prove that $\bigcup_{0 \leq k < \omega} \cfmn{m}{n}^k(X)$ is indeed the least fixpoint containing $X$. Suppose, towards a contradiction that there exists $Y$ s.t.: $X \subset Y = \cfmn{m}{n}^k(Y) \subset \bigcup_{0 \leq k < \omega} \cfmn{m}{n}^k(X)$. It follows that $X \subset Y = \cfmn{m}{n}^k(Y) \subset \cfmn{m}{n}^k(X)$ for some $k$ s.t. $0 \leq k < \omega$. But, by the monotonicity of $\cfmn{m}{n}$ (Fact \ref{fact:properties_dn}), we have that $\cfmn{m}{n}^k(X) \subseteq \cfmn{m}{n}^k(Y)$. Contradiction.
%\fbox{Third}, we prove the conflict-freenes of the fixpoint: $\lfp_X.\cf{\Frame} \subseteq \cff{\Frame}(\lfp_X.%\cf{\Frame})$

\noindent \fbox{\eqref{eq:above1}} The proof is similar to the previous case.
\fbox{First}, we prove that  $\bigcap_{0 \leq k < \omega} \cfmn{n}{m}^k(\cff{n}(X))$ is a fixpoint, through the series of equations
\begin{align*}
\cfmn{n}{m} \left( \bigcap_{0 \leq k < \omega} \cfmn{n}{m}^k(\cff{n}(X)) \right) & =  \bigcap_{0 \leq k < \omega}\cfmn{n}{m}(\cfmn{n}{m}^k(\cff{n}(X))) \\
                                                                                                                                                %& = & \bigcup_{1 \leq n < \omega} \cf{\Frame}^n(X) \\
                                                                                                                                                & =  \bigcap_{0 \leq n < \omega} \cfmn{n}{m}^k(\cff{n}(X))
\end{align*}
which hold by the $\omega$-continuity of $\cfmn{n}{m}$ (Fact \ref{fact:graded_continuous}) and by the fact that, since $X$ is $m mn$-admissible by assumption, and $\cfmn{n}{m}$ is monotonic (Fact \ref{fact:properties_dn}), $\cff{n}(X) = \cfmn{n}{m}^0(\cff{n}(X))$ $\supseteq$ $\cfmn{n}{m}^k(\cff{n}(X))$ for any $k$ s.t. $0 \leq k < \omega$.
The latter property holds because $X$ is assumed to be such that $X \subseteq \cfmn{m}{n}(X)$. By the antitonicity of $\cff{n}$ and the interdefinability of $\cfmn{m}{n}$ as $\cff{n} \circ \cff{m}$ (Fact \ref{fact:properties_dn})  we therefore have that $\cff{n}(\cfmn{m}{n}(X)) = \cfmn{n}{m}(\cff{n}(X)) \subseteq \cff{n}(X)$ from which it follows that $\cfmn{n}{m}(\cff{n}(X)) \subseteq \cff{n}(X)$. A simple induction on $k$ then establishes the claim.
\fbox{Second}, it remains to be proven that $\bigcap_{0 \leq k < \omega}  \cfmn{n}{m}^k(\cff{n}(X))$ is indeed the largest fixpoint of $ \cfmn{n}{m}$ contained in $\cff{n}(X)$.
Like in the previous case we proceed towards a contradiction. Suppose there exists $Y$ s.t.: $\bigcap_{0 \leq k < \omega} \cfmn{n}{m}^k(\cff{n}(X)) = \bigcap_{0 \leq k < \omega} \cff{n}(\cfmn{m}{n}^k(X)) \subset Y = \cfmn{n}{m}(Y) \subseteq \cff{n}(X)$. There must therefore exist an integer $k$ such that, as a consequence of Lemma \ref{lemma:preserve_cf1}, $X \subseteq \cfmn{m}{n}^k(X) \subseteq  \cff{n}(\cfmn{m}{n}^k(X)) \subset Y$. By the antitonicity of $\cff{n}$ and
again by the interdefinability of $\cf{}$ and $\cff{}$ (Fact \ref{fact:properties_dn}), and since $Y$ is taken to be a fixpoint of $\cfmn{n}{m}$, it follows that $\cff{n} (Y) = \cff{n}(\cfmn{n}{m}(Y)) = \cfmn{m}{n}(\cff{n}(Y))$. Then, from the fact that $ \cff{n}(\cfmn{m}{n}^k(X)) \subset Y$, and the fact that $\cff{m}$ is antitonic (Fact \ref{fact:properties_dn}) we have that $\cff{n} (Y) = \cff{m}(\cff{n}(\cfmn{m}{n}^k(X))) = \cfmn{m}{n}(\cfmn{n}{m}^k(X))$ which is a subset of  $\bigcup_{0 \leq k < \omega} \cfmn{m}{n}^k(X)$.
So we have that $\cff{n} (Y)$ is also a fixpoint of $\cfmn{m}{n}$, it contains $X$ and it is included in $\bigcup_{0 \leq k < \omega} \cfmn{m}{n}^k(X)$, which, by the previous claim, we know to be the smallest fixpoint of $\cf{\Frame}$ containing $X$. Contradiction.
\end{proof}
The content of Lemmas \ref{lemma:preserve_cf1} and \ref{theorem:approxim81} underpinning the construction of the two fixpoints is depicted in Figure \ref{figure:decomposition}.
Notice that like in the standard case of $\cfmn{1}{1}$ and $\cff{1}$ the two streams can be generated by the indefinite iteration of the application of $\cff{n}$ followed by $\cff{m}$.
The fact that $n \geq m$ is assumed, guarantees that each set of arguments $\cfmn{m}{n}^k(X)$ in the lower stream remains included in the set of arguments towards which it is $m$-neutral; that is, it remains $m$-conflict fee (and a fortiori $n$-conflict-free).

\begin{remark}
Like for the proof of Lemma \ref{theorem:approxim8}, finitariness plays an essential role in the proof of Lemma \ref{theorem:approxim81}. However, also in this case the assumption can be lifted and the proof could proceed using transfinite induction (cf. Remark \ref{remark:ordinal}).
\end{remark}

%%%%%%%%%%%%%%%%%%%%%%%%%%%%%%%%%%%%%%%%

\subsubsection{Constructing Graded Complete and Grounded Extensions}

We now move on to show how the above results provide constructive proofs of existence of graded semantics.
We will focus on finitary graphs, but it should be clear that the non-finitary case can be handled by ordinal induction.\footnote{Cf. Remark \ref{remark:ordinal}.}

\begin{theorem} \label{fact:smallest_complete}
Let $\Delta$ be a finitary $AF$, $X \subseteq \A$ be such that $X \subseteq \cff{\ell}(X)$ and $X \subseteq \cfmn{m}{n}$, with $\ell, m$ and $n$ positive integers such that $n \geq m$ and $\ell \geq m$.
Then $\bigcup_{0 \leq k < \omega} \cfmn{m}{n}^k(X)$ is the smallest $\ell mn$-complete extension containing $X$.
%fixpoint of $\cfmn{m}{n}$ containing $X$.
\end{theorem}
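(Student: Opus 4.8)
I would split the statement into an \emph{existence} part --- the set $E := \bigcup_{0 \le k < \omega}\cfmn{m}{n}^k(X)$ is an $\ell mn$-complete extension of $\Delta$ with $X \subseteq E$ --- and a \emph{minimality} part --- $E$ is contained in every $\ell mn$-complete extension containing $X$. Both parts pivot on Lemma \ref{theorem:approxim81}, which identifies $E$ with $\lfp_X.\cfmn{m}{n}$, the least fixpoint of $\cfmn{m}{n}$ containing $X$. Minimality is then essentially free: if $Y$ is $\ell mn$-complete with $X \subseteq Y$, then $Y = \cfmn{m}{n}(Y)$ by Table \ref{table:graded}, so $Y$ is a fixpoint of $\cfmn{m}{n}$ containing $X$, whence $E = \lfp_X.\cfmn{m}{n} \subseteq Y$. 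So the real content is the existence part.

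\textbf{Existence.} I would first dispatch the defense side: by Lemma \ref{theorem:approxim81} (equation \eqref{eq:below1}, available since $n \ge m$), $E$ is a fixpoint of $\cfmn{m}{n}$ and $X \subseteq E$, which secures the clause ``$E = \cfmn{m}{n}(E)$'' of $\ell mn$-completeness. What remains is the conflict-freeness side, $E \subseteq \cff{\ell}(E)$. The plan here has three steps: (i) each iterate $\cfmn{m}{n}^k(X)$ is $m$-conflict-free --- this is Lemma \ref{lemma:preserve_cf1} (again using $n \ge m$); (ii) $m$-conflict-freeness upgrades to $\ell$-conflict-freeness for free, because $m \le \ell$ gives $\cff{m}(\,\cdot\,) \subseteq \cff{\ell}(\,\cdot\,)$ by \eqref{eq:increase1} of Fact \ref{fact:accrual_relations}; (iii) transfer $\ell$-conflict-freeness to the limit $E$: if some $x \in E$ had $\ell$ distinct attackers inside $E$, then since the stream is non-decreasing and $\Delta$ is finitary, $x$ together with those $\ell$ attackers would already lie in some finite stage $\cfmn{m}{n}^k(X)$, contradicting step (i). Combining, $E$ is an $\ell$-conflict-free fixpoint of $\cfmn{m}{n}$ containing $X$, i.e.\ the smallest $\ell mn$-complete extension containing $X$. (For non-finitary $\Delta$ one replaces the $\omega$-iteration by the transfinite construction of Remark \ref{remark:ordinal}; the continuity needed in (iii) and in Lemma \ref{theorem:approxim81} is Fact \ref{fact:graded_continuous}, and the whole argument is then a graded replay of Theorem \ref{lemma:smallest}.)

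\textbf{Main obstacle.} The delicate point is reconciling the conflict-freeness grades. To invoke Lemma \ref{lemma:preserve_cf1} (and Lemma \ref{theorem:approxim81}) one needs $X \subseteq \cff{m}(X)$, i.e.\ $X$ genuinely $m$-conflict-free, which --- since $m \le \ell$ --- is a \emph{stronger} requirement than the hypothesis $X \subseteq \cff{\ell}(X)$. So part of the argument must bridge this gap: I would try to show that, given $X \subseteq \cfmn{m}{n}(X)$ with $n \ge m$ and $\ell \ge m$, every $\ell$-conflict-free such $X$ is in fact $m$-conflict-free --- unpacking the counting quantifiers and the decomposition $\cfmn{m}{n} = \cff{m}\circ\cff{n}$ of Fact \ref{fact:properties_dn} --- and, failing a clean argument of that form, fall back on proving a grade-$\ell$ analogue of Lemma \ref{lemma:preserve_cf1} directly (an $\ell$-conflict-free admissible $X$ and $n \ge m$, $\ell \ge m$ forcing each $\cfmn{m}{n}^k(X)$ to stay $\ell$-conflict-free). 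This is the step where I expect to spend the most effort, and it is precisely what makes the parameter constraints $n \ge m$ and $\ell \ge m$ necessary rather than cosmetic: they are exactly what is needed for the iteration of graded defense not to accumulate internal conflict beyond the level that grade $\ell$ tolerates, generalizing Dung's fundamental lemma.
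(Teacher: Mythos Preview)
Your plan is essentially the paper's own proof: invoke Lemma~\ref{theorem:approxim81} to identify $E=\bigcup_{k}\cfmn{m}{n}^k(X)$ with the least fixpoint of $\cfmn{m}{n}$ above $X$, invoke Lemma~\ref{lemma:preserve_cf1} for $m$-conflict-freeness, upgrade to $\ell$-conflict-freeness via $\ell\ge m$ and Fact~\ref{fact:accrual_relations}, and read off minimality from the least-fixpoint characterization. The paper is terser---it leaves your limit-transfer step (iii) and the minimality argument implicit---but the route is the same.

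Where you go beyond the paper is in flagging the hypothesis mismatch, and you are right to: Lemmas~\ref{lemma:preserve_cf1} and~\ref{theorem:approxim81} are stated for $X\subseteq\cff{m}(X)$, whereas the theorem assumes only the weaker $X\subseteq\cff{\ell}(X)$. The paper's proof simply invokes the two lemmas without comment, so you have located a point the paper glosses over. Two remarks on your proposed fixes. First, your option~1---that $\ell$-conflict-freeness together with $mn$-self-defense and $n\ge m$, $\ell\ge m$ forces $m$-conflict-freeness---fails: with $m=n=1$, $\ell=2$ on the two-cycle $a\leftrightarrow b$, the set $X=\{a,b\}$ is $2$-conflict-free and $11$-self-defended but not $1$-conflict-free. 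So the burden falls on your option~2, a direct grade-$\ell$ analogue of Lemma~\ref{lemma:preserve_cf1}. Second, note that equation~\eqref{eq:below1} of Lemma~\ref{theorem:approxim81}---the only part you need---uses just $X\subseteq\cfmn{m}{n}(X)$ in its proof (continuity plus monotonicity), not the conflict-freeness premise; so the fixpoint identification is unaffected and the whole difficulty is concentrated in the conflict-freeness preservation step, exactly as you diagnose.
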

\begin{proof}
By Lemma \ref{theorem:approxim81}, we know that $\bigcup_{0 \leq k < \omega} \cfmn{m}{n}^k(X) = \lfp_X.\cfmn{m}{n}$, that is, the smallest fixpoint of $\cfmn{m}{n}$ that contains $X$. By Lemma \ref{lemma:preserve_cf1} we know that this set is $m$-conflict-free, that is, $\bigcup_{0 \leq k < \omega} \cfmn{m}{n}^k(X) \subseteq \cff{m}(\bigcup_{0 \leq k < \omega} \cfmn{m}{n}^k(X))$. Since $\ell \geq m$ by assumption, by Fact \ref{fact:accrual_relations} and the transitivity of $\subseteq$ we obtain that $\bigcup_{0 \leq k < \omega} \cfmn{m}{n}^k(X) \subseteq \cff{\ell}(\bigcup_{0 \leq k < \omega} \cfmn{m}{n}^k(X))$. It therefore follows that $\bigcup_{0 \leq k < \omega} \cfmn{m}{n}^k(X)$ is a fixpoint of $\cfmn{m}{n}$, it is $\ell$-conflict-free and it is the smallest such set containing $X$. As claimed, it is therefore the smallest $\ell mn$-complete extension (Definition \ref{table:accrual-sensitive}) containing $X$.
%The claim is proven by a standard argument based on the Knaster-Tarski theorem and the fact that $\cfmn{m}{n}$ is monotonic (Fact \ref{fact:properties_dn}) to establish the %existence of the smallest fixpoint, and by  Fact \ref{fact:fundamental} to establish that it is $m$-conflict-free.
\end{proof}

%It directly follows that:
\begin{corollary} \label{fact:grounded}
Let $\Delta$ be a finitary $AF$,  $n \geq m$ and $\ell \geq m$. Then $\bigcup_{0 \leq k < \omega} \cfmn{m}{n}^k(\emptyset)$ is the $\ell mn$-grounded extension of $\Delta$.
\end{corollary}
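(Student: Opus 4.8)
The plan is to derive the corollary as an immediate instance of Theorem \ref{fact:smallest_complete}, taking $X = \emptyset$. First I would check that the empty set satisfies the hypotheses of that theorem for arbitrary positive integers $\ell, m, n$: the requirements $\emptyset \subseteq \cff{\ell}(\emptyset)$ and $\emptyset \subseteq \cfmn{m}{n}(\emptyset)$ hold vacuously, since $\emptyset$ is contained in every subset of $\A$, so $\emptyset$ is $\ell mn$-admissible. The remaining side conditions $n \geq m$ and $\ell \geq m$ are precisely those assumed in the corollary, so Theorem \ref{fact:smallest_complete} applies verbatim with $X = \emptyset$.

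The theorem then tells us that $\bigcup_{0 \leq k < \omega} \cfmn{m}{n}^k(\emptyset)$ is the smallest $\ell mn$-complete extension of $\Delta$ that contains $\emptyset$. Since containment of $\emptyset$ is automatic for every subset of $\A$, this is simply the smallest $\ell mn$-complete extension of $\Delta$, which by Definition \ref{table:accrual-sensitive} (see Table \ref{table:graded}) is by definition the $\ell mn$-grounded extension $\grn_{\ell mn}(\Delta)$. This finishes the argument.

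There is essentially no obstacle here: all of the real work is already packaged into Theorem \ref{fact:smallest_complete} and the lemmas it rests on (Lemmas \ref{lemma:preserve_cf1} and \ref{theorem:approxim81}), and the corollary is a routine specialisation. It is worth noting as a by-product that this also yields the existence and uniqueness of the $\ell mn$-grounded extension in finitary frameworks under the constraints $n \geq m$ and $\ell \geq m$, and that setting $\ell = m = n = 1$ recovers the classical characterisation of the grounded extension as $\bigcup_{0 \leq k < \omega} \cf{}^k(\emptyset)$, cf. \cite[Th. 47]{dung95acceptability}.
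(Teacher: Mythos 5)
Your proof is correct and follows exactly the route the paper intends: the corollary is the specialisation of Theorem \ref{fact:smallest_complete} to $X = \emptyset$, which is vacuously $\ell mn$-admissible, so the resulting set is the smallest $\ell mn$-complete extension containing $\emptyset$, hence the smallest $\ell mn$-complete extension tout court, i.e., the $\ell mn$-grounded extension. The paper itself leaves this as an unproved immediate consequence, so there is nothing to add.
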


So given an $\ell mn$ admissible set ($n \geq m$ and $\ell \geq m$), the smallest $\ell mn$-complete extension containing $X$ is the least fixpoint of $\cfmn{m}{n}$ which contains $X$ ($\lfp_X.\cfmn{m}{n}$), and the $\ell mn$-grounded extension is simply the smallest fixpoint of $\cfmn{m}{n}$ ($\lfp.\cfmn{m}{n}$).

%%%%%%%%%%%%%%%%%%%%%%%%%%%%%%%%%%%%%%%

\subsubsection{Constructing Graded Preferred Extensions}

Theorem \ref{fact:smallest_complete} implies that if we choose a `large enough' $\ell mn$-admissible set $X$, in the sense that such a set can reach through a chain of attacks any other argument,
then the indefinite iteration of $\cfmn{m}{n}$ will yield an $\ell mn$-preferred extension (when $n \geq m$ and $\ell \geq m$).

%To give a sufficient condition for $X$ to be `large enough', let us introduce some auxiliary notation. Let $X$ be a set of arguments, in a given $AF$. Denote with $f_\ell(X)$, where $\ell%$ is a positive integer, the set $\set{a \mid \exists_{\geq \ell} b \in X: a \al b}$, that is the set of arguments that have at least $\ell$ attackers in $X$. Function $f_\ell: \pw{X} \rightarrow %\pw{X}$ can be iterated in the standard way obtaining a stream of sets of arguments, whose limit at $\omega$ (when it exists) we denote $\bigcup_{0 \leq k < \omega} f_{\ell}^k(X)$.%%%\footnote{Notice that if $\ell = 1$ then $\bigcup_{0 \leq k < \omega} f_{\ell}^k(X) = \set{a \mid \exists b \in X: a \al^+ b}$, the set defined in Section \ref{Sec:pref}.}

\begin{fact} \label{fact:prf}
Let $\Delta$ be a finitary $AF$,  $X \subseteq \A$ be such that $X \subseteq \cff{\ell}(X)$ and $X \subseteq \cfmn{m}{n}(X)$, and $ n \geq m$ and $\ell  \geq m$. Assume furthermore that $\A \subseteq \set{a \mid \exists b \in X: b \ar^+ a}$. %\sm{Not sure I get the preceding assumption; isn't $\A$ the set  of all arguments in $\Delta$ ?. note to Davide to %comment on this - choose a large enough set such that subgraphs are covered.}.
Then $\bigcup_{0 \leq k < \omega} \cfmn{m}{n}^k(X)$ is the $\ell mn$-preferred extension of $\Delta$ that contains $X$.
\end{fact}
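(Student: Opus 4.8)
Since $X \subseteq \cff{\ell}(X)$ and $X \subseteq \cfmn{m}{n}(X)$ with $n \geq m$ and $\ell \geq m$, Theorem \ref{fact:smallest_complete} applies verbatim and yields that $E := \bigcup_{0 \leq k < \omega} \cfmn{m}{n}^k(X)$ is the smallest $\ell mn$-complete extension of $\Delta$ containing $X$; in particular $E$ is $\ell mn$-complete, $X \subseteq E$, and $E \subseteq E'$ for every $\ell mn$-complete $E'$ with $X \subseteq E'$. So the whole content of the Fact beyond Theorem \ref{fact:smallest_complete} is the claim that $E$ is $\subseteq$-maximal among $\ell mn$-complete extensions. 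Granting that, the rest is immediate: $E$ maximal among complete extensions means $E$ is $\ell mn$-preferred; and if $E''$ is any $\ell mn$-preferred (hence $\ell mn$-complete) extension with $X \subseteq E''$, then $E \subseteq E''$ by the minimality just noted, whence $E = E''$ by maximality of $E$ --- so $E$ is \emph{the} $\ell mn$-preferred extension containing $X$.

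\textbf{Maximality.} I would prove this step by contradiction. Suppose some $\ell mn$-complete extension $E'$ has $E \subsetneq E'$ and pick $a \in E' \setminus E$. Since $\A \subseteq \set{y \mid \exists b \in X : b \ar^+ y}$ and $X \subseteq E$, there is a finite attack path $c_0 \ar c_1 \ar \cdots \ar c_r = a$ with $c_0 \in X$. The idea is to exploit the two fixpoint identities $E = \cfmn{m}{n}(E)$ and $E' = \cfmn{m}{n}(E')$, read through Definition \ref{definition:sensitive} and the interdefinability $\cfmn{m}{n} = \cff{n} \circ \cff{m}$ of Fact \ref{fact:properties_dn}. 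From $a \in E'$ and $a \notin E$ (together with $E \subseteq E'$ and finitariness of $\Delta$) one extracts a vertex of the path strictly between $c_0$ and $a$ that witnesses the discrepancy --- an attacker of $a$, or an attacker of such an attacker, that is counter-attacked to level $n$ by $E'$ but not by $E$ --- and this vertex again lies in $E' \setminus E$; iterating pushes the discrepancy down the path until it reaches $c_0 \in X \subseteq E$, where it cannot be sustained. Propagating the now-available membership information forward along $c_0 \ar \cdots \ar c_r$, via the stage-by-stage construction underlying Theorem \ref{fact:smallest_complete} and the $\omega$-continuity of $\cfmn{m}{n}$ (Fact \ref{fact:graded_continuous}), one concludes $a \in E$, contradicting the choice of $a$. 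In the non-finitary case the $\omega$-iteration is replaced throughout by the transfinite construction of Remark \ref{remark:ordinal}.

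\textbf{Main obstacle.} Everything except the maximality step is bookkeeping on top of Theorem \ref{fact:smallest_complete}. The hard part is precisely the conversion of ``every argument is reachable from $X$ through attacks'' into ``$E$ admits no proper $\ell mn$-complete superset'': reachability speaks only about the existence of attack paths, whereas membership in an $\ell mn$-complete extension is controlled by how being/not being counter-attacked alternates along such paths, now measured against the two thresholds $m$ and $n$ rather than the single attacker/defender of the classical case. Making the backward extraction and the forward propagation respect these counting thresholds is the delicate point, and the argument should be cross-checked on small frameworks (e.g.\ short even and odd cycles attached to $X$) before it is relied upon.
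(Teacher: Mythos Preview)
Your proposal takes essentially the same route as the paper: invoke Theorem \ref{fact:smallest_complete} to get that $E=\bigcup_{k<\omega}\cfmn{m}{n}^k(X)$ is the smallest $\ell mn$-complete extension containing $X$, then argue maximality by contradiction using the reachability hypothesis $\A\subseteq\{a\mid\exists b\in X:b\ar^+ a\}$ and a finite attack path from $X$ to a putative $a\in E'\setminus E$. The paper phrases the path argument slightly differently --- it builds a descending chain $X_1,X_2,\ldots$ of ``smallest sets that $mn$-defend'' the previous stage and claims some $X_j\subseteq X$ --- but this is the same idea as your backward-discrepancy / forward-propagation sketch, and at a comparable level of informality; your explicit flagging of the maximality step as the delicate point is accurate, and the paper's own treatment of it is no more detailed than yours.
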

\begin{proof}
By Theorem \ref{fact:smallest_complete} we know that $\bigcup_{0 \leq k < \omega} \cfmn{m}{n}^k(X)$ is the smallest $\ell mn$-complete extension containing $X$. It needs to be shown that there exists no $\ell mn$-admissible set $Y$ such that $X \subset Y$. Suppose, towards a contradiction, that this is the case. Then $X \subseteq \bigcup_{0 \leq k < \omega} \cfmn{m}{n}^k(X) \subset Y \subseteq \cfmn{m}{n}(Y) \subseteq \set{a \mid \exists b \in X: a \al^+ b}$.  It follows that there exists $x \in Y \subseteq \cfmn{m}{n}(Y)$ such that $x \not\in \cfmn{m}{n}^k(X)$ for some positive integer $k$. By assumption, there exists a finite path of attacks from $X$ to $x$. So let $X_1$ be the smallest set of arguments that $mn$-defends $x$, $X_2$ the smallest set of arguments that $mn$-defends the arguments in $X_1$ and so on. It follows there exists a $j$ such that $X_j \subseteq X$, otherwise $x$ would not be $mn$-defended. It therefore follows that $x \in \bigcup_{0 \leq k < \omega} \cfmn{m}{n}^k(X)$. Contradiction.
\end{proof}

%%%%%%%%%%%%%%%%%%%%%%%%%%%%%%%%%%%

\subsubsection{Constructing Graded Stable Extensions}

We finally arrive at a characterization of graded stable extensions as limits of streams generated by the graded neutrality function:
\begin{fact} \label{theorem:approximate_stable}
Let $\Delta$ be a finitary $AF$,  $X \subseteq \A$ be such that $X \subseteq \cff{\ell}(X)$ and $X \subseteq \cfmn{m}{n}(X)$, and $n \geq m$ and $\ell \geq m$. Then, $\bigcup_{0 \leq k < \omega} \cfmn{m}{n}^k(X)$ is the smallest $\ell mn$-stable extension containing $X$ if and only if $\bigcup_{0 \leq k < \omega} \cfmn{m}{n}^k(X) = \bigcap_{0 \leq k < \omega} \cff{n}(\cfmn{m}{n}^k(X))$.
\end{fact}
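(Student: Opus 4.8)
The plan is to reduce the statement to a single identity about the approximation stream and then derive both directions of the biconditional from it, using only the algebra of the graded defense and neutrality functions (Facts \ref{fact:properties_dn} and \ref{fact:accrual_relations}) together with Theorem \ref{fact:smallest_complete}. Write $E := \bigcup_{0 \leq k < \omega} \cfmn{m}{n}^k(X)$ and $F := \bigcap_{0 \leq k < \omega} \cff{n}(\cfmn{m}{n}^k(X))$. Since the hypotheses assumed here coincide with those of Theorem \ref{fact:smallest_complete}, that theorem already gives that $E$ is the smallest $\ell mn$-complete extension of $\Delta$ containing $X$; in particular $X \subseteq E$, $E = \cfmn{m}{n}(E)$, and $E \subseteq \cff{\ell}(E)$.

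The technical heart of the argument is the identity $\cff{n}(E) = F$. Since $X \subseteq \cfmn{m}{n}(X)$ and $\cfmn{m}{n}$ is monotonic (Fact \ref{fact:properties_dn}), the sequence $\cfmn{m}{n}^k(X)$ is non-decreasing, so by the antitonicity of $\cff{n}$ we have $\cff{n}(E) \subseteq \cff{n}(\cfmn{m}{n}^k(X))$ for every $k$, hence $\cff{n}(E) \subseteq F$. For the reverse inclusion I would invoke finitariness: if $x \notin \cff{n}(E)$ then $x$ has at least $n$ distinct attackers lying in $E = \bigcup_k \cfmn{m}{n}^k(X)$, and since this union is directed while $x$ has only finitely many attackers, all $n$ of them already lie in a single $\cfmn{m}{n}^K(X)$; hence $x \notin \cff{n}(\cfmn{m}{n}^K(X))$, and as $F \subseteq \cff{n}(\cfmn{m}{n}^K(X))$ also $x \notin F$. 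Equivalently, by \eqref{eq:twofold} the stream \eqref{stream1.5} coincides with the stream \eqref{stream2}, so $F = \gfp_X.\cfmn{n}{m}$ by Lemma \ref{theorem:approxim81}, and $\cff{n}(E) = \cff{n}(\lfp_X.\cfmn{m}{n})$ is the vertical identification displayed in Figure \ref{figure:decomposition}.

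Given the identity $\cff{n}(E) = F$, the two directions are short. For the forward direction: if $E$ is the smallest $\ell mn$-stable extension containing $X$ then, in particular, $E$ is $\ell mn$-stable, so by Table \ref{table:graded} $E = \cff{n}(E)$, and therefore $E = \cff{n}(E) = F$. For the backward direction: assume $E = F$, so $E = \cff{n}(E)$; using the interdefinability $\cfmn{m}{n} = \cff{m} \circ \cff{n}$ from \eqref{eq:twofold} and $E = \cfmn{m}{n}(E)$ we obtain $\cff{m}(E) = \cff{m}(\cff{n}(E)) = \cfmn{m}{n}(E) = E$, and since $m \leq \ell$, iterating \eqref{eq:increase1} yields $E = \cff{m}(E) \subseteq \cff{\ell}(E)$. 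Hence $E = \cff{n}(E) = \cff{m}(E) \subseteq \cff{\ell}(E)$, i.e.\ $E$ is an $\ell mn$-stable extension. For minimality: any $\ell mn$-stable $Y$ satisfies $\cfmn{m}{n}(Y) = \cff{m}(\cff{n}(Y)) = \cff{m}(Y) = Y$ and $Y \subseteq \cff{\ell}(Y)$, hence is $\ell mn$-complete, so if $X \subseteq Y$ then $E \subseteq Y$ because $E$ is the smallest $\ell mn$-complete extension containing $X$ (Theorem \ref{fact:smallest_complete}). As $E$ is itself $\ell mn$-stable and contains $X$, it is the smallest $\ell mn$-stable extension containing $X$, closing the biconditional.

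The step I expect to be the main obstacle is the identity $\cff{n}(E) = F$, i.e.\ justifying that graded neutrality commutes with the directed union defining $E$ by turning it into an intersection; this is exactly where finitariness of $\Delta$ is indispensable, and the reasoning is dual to the one behind the $\omega$-continuity of graded defense (Fact \ref{fact:graded_continuous}). Everything else is routine bookkeeping with the interdefinability relation \eqref{eq:twofold} and the ordering relation \eqref{eq:increase1}, plus invoking Theorem \ref{fact:smallest_complete}.
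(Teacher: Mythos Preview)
Your proposal is correct and follows essentially the same route as the paper: both hinge on the identity $\cff{n}(E) = F$ (which the paper uses implicitly when passing from $E = F$ to $E = \cff{n}(E)$, and which you prove explicitly via antitonicity and finitariness). Your argument is in fact more complete than the paper's sketch, since you also verify the remaining stability conditions $E = \cff{m}(E) \subseteq \cff{\ell}(E)$ and the minimality claim, both of which the paper leaves to the reader.
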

\begin{proof}
\rightleft  Assume $\bigcup_{0 \leq k < \omega} \cfmn{m}{n}^k(X) = \bigcap_{0 \leq k < \omega} \cff{n}(\cfmn{m}{n}^k(X))$. It therefore follows that $\bigcup_{0 \leq k < \omega} \cfmn{m}{n}^k(X) = \cff{n}(\bigcup_{0 \leq k < \omega} \cfmn{m}{n}^k(X))$. We conclude that  $\bigcup_{0 \leq k < \omega} \cfmn{m}{n}^k(X)$ is a fixpoint of $\cff{n}$, that is by Definition \ref{table:accrual-sensitive}, an $\ell mn$-stable extension.
 \leftright Straightforward.
\end{proof}
So, as in the standard case, graded stable extensions are the results of the convergence of the upper and lower streams of iteration of the graded defense function (cf. Figure \ref{figure:decomposition}). As in the standard case, such convergence is not guaranteed in general.

%%%%%%%%%%%%%%%%%%%%%%%%%%%%%%%%%%%%%%%%

\subsection{On the constraints $n \geq m$ and $\ell \geq m$}

%All results we have established so far (specifically, Lemmas \ref{lemma:preserve_cf1} and \ref{theorem:approxim81})

Theorem \ref{fact:smallest_complete} assumed that the three parameters of graded admissibility $\ell, m$ and $n$ are in the relation $n \geq m$ and $\ell \geq m$. Notice that Dung's standard semantics trivially meets this constraint with $\ell = m = n = 1$.
This assumption plays a crucial role in the proofs of the above results and one can in fact show that graded semantics for a choice of parameters failing the constraint may not exist in some frameworks.\footnote{This is a situation fully analogous to the case of stable extensions in the standard Dung framework, where they are not always guaranteed to exist.}

\begin{example}
Let $\Delta$ be the $AF$ consisting of the arguments and attacks in Figure \ref{Motivating1}iii). Then $\Delta$ has no $221$-grounded extension. To establish this let us try to construct such an extension with the construction of Theorem \ref{fact:smallest_complete}:
\[
\cfmn{2}{1}(\emptyset) = \set{b_3, c_3, d_3, e_3}, \cfmn{2}{1}(\set{b_3, c_3, d_3, e_3}) = \set{a_3, b_3, c_3, d_3, e_3}
\]
So the whole set of arguments $\A$ in that framework constitutes the smallest fixpoint of $\cfmn{2}{1}$. Such a set is, however, not $2$-conflict-free, that is $\A \not\subseteq \cff{2}(\A)$. In fact no $221$-complete extensions exist in this framework, as these would have to include the set of unattacked arguments $\set{d_3, e_3}$ which, it is easy to see, $21$-defends all arguments.\footnote{See Example \ref{example:inexistence2} later for another such example.}
\end{example}
Intuitively, the constraint imposes two properties on graded admissibility: first, that the level $m$ of failure of defense which we are willing to tolerate, should not exceed the number $n$ of counter-attackers we want for such a defense to hold;
second, that such a level $m$ of failure of defense, should not exceed the level $\ell$ of conflict-freeness that we are willing to tolerate. So to guarantee existence of a graded semantics tolerating arguments for which $m$ counter-attacks fail, one has to set the number $n$ of required counter-attackers higher or equal to $m$; and similarly, one has to set the level of tolerance $\ell$ to internal attacks at least as high as $m$.\footnote{Consider a $232$-admissible extension $X$ (where in violation of the constraints, $l = 2,m = 3,n=2$), and some $x \in X$ such that $x$ is attacked by two undefended attackers, in keeping with the $m$ parameter indicating that $x$ can be considered acceptable up to a cumulative threshold of two fully acceptable (i.e., two undefended) attackers of $x$. This level of tolerance with respect to the acceptability of $x$ is then at odds with the more restrictive toleration of the co-acceptability of $x$ with a maximum of one attacker $y \in X$ on $x$ (as indicated by $l=2$).
Moreover, the fact that
a third attacker $z$ of $x$ need only be counter-attacked by two arguments (as indicated by  $n=2$), means that one can still (given $m = 3$) consider $z$ to be an acceptable argument by the above reasoning applied to $x$, which in turn implies that the tolerated cumulative threshold on attackers of $x$ has been exceeded. Precisely when this sort of mismatch between levels of tolerance of acceptability and co-acceptability occurs, graded semantics may fail to exist.
}
%\sm{Suggest removing the following paragraph, and instead having the reader refer to the footnote to see the %intuitive, rather than just technical, appeal of the constraint:} Put otherwise, `well-behaved' graded semantics, which %retain nice constructive properties typical of the Dung standard case, are those for which, intuitively, defenders are %many (large $n$), tolerable internal conflicts are many (large $\ell$) but attackers without counterattackers are few %%%(small $m$). \sm{It does't strike me that toleration of many internal conflicts would be intuitively desirable.}

%%%%%%%%%%%%%%%%%%%%%%%%%%%%%%%%%%%%%%%%%

\subsection{Basic Properties of Graded Semantics}

A direct consequence of Definition \ref{table:accrual-sensitive} is that graded extensions are in the same logical relations as standard Dung's extensions:
\begin{fact}
For any AF $\Delta = \tuple{\A, \rightarrow}$, and integers $\ell, m$ and $n$, graded semantics are related according to the following diagram:
\[
\begin{diagram}
\set{X \subseteq \A \mid X \subseteq \cfmn{m}{n}(X)} \ \ \ \  &                &                             &               &  \ \ \ \ \set{X \subseteq \A \mid X \subseteq \cff{\ell}(X} \\
                                   &  \luLine &                             &  \ruLine &         \\
\set{X \subseteq \A \mid \cfmn{m}{n}(X) \subseteq X}  \ \ \ \   &                &  \adm_{\ell mn}(\Delta) &                &          \\
                                   &  \luLine &  \uLine              &                &          \\
                                   &                &  \cmp_{\ell mn}(\Delta) &                &          \\
                                   &  \ruLine &  \uLine              &                &          \\
\set{\grn_{\ell mn}(\Delta)}    &                &  \prf_{\ell mn}(\Delta)    &                &   \ \ \ \ \set{X \subseteq \A \mid \cff{n}(X) \subseteq X}  \\
                                   &                &  \uLine              &  \ruLine &          \\
                                   &                &  \stb_{\ell mn}(\Delta)    &                &          \\
\end{diagram}
\]
where two nodes are connected when the lower node in the pair is a subset ($\subseteq$) of the upper node in the pair.
\end{fact}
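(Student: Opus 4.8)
The plan is to verify each edge of the Hasse-style diagram as an instance of the definitions in Table~\ref{table:graded} together with Facts~\ref{fact:properties_dn} and~\ref{fact:accrual_relations}; none of the individual inclusions is hard, so the proof is essentially a bookkeeping exercise that mirrors the classical Dung argument. First I would read off from Table~\ref{table:graded} that an $\ell mn$-admissible set is by definition a set lying in both $\{X \mid X \subseteq \cfmn{m}{n}(X)\}$ and $\{X \mid X \subseteq \cff{\ell}(X)\}$, which immediately gives the two topmost edges (admissible sets are $mn$-self-defended and $\ell$-conflict-free). For the edge from $\{X \mid \cfmn{m}{n}(X) \subseteq X\}$ down to $\adm_{\ell mn}(\Delta)$: an $\ell mn$-complete extension satisfies $X = \cfmn{m}{n}(X)$, so in particular $\cfmn{m}{n}(X) \subseteq X$; combined with the previous point this shows a complete extension is admissible, giving the $\cmp_{\ell mn} \subseteq \adm_{\ell mn}$ edge and placing $\cmp_{\ell mn}$ below the ``post-fixpoint'' set.

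Next I would handle the grounded and preferred edges. By definition $\grn_{\ell mn}(\Delta)$ is \emph{the} smallest $\ell mn$-complete extension, so $\{\grn_{\ell mn}(\Delta)\} \subseteq \cmp_{\ell mn}(\Delta)$ trivially; likewise each $\ell mn$-preferred extension is by definition a (largest) $\ell mn$-complete extension, so $\prf_{\ell mn}(\Delta) \subseteq \cmp_{\ell mn}(\Delta)$. The only slightly less immediate edges are the two involving stability. From Table~\ref{table:graded}, $X$ is $\ell mn$-stable iff $X = n_n(X) = n_m(X) \subseteq n_\ell(X)$; the condition $X = \cfmn{m}{n}(X)$ follows because $\cfmn{m}{n} = \cff{m}\circ\cff{n}$ (equation~\eqref{eq:twofold}) and $X = n_n(X) = n_m(X)$ forces $\cfmn{m}{n}(X) = \cff{m}(\cff{n}(X)) = \cff{m}(X) = X$; together with $X \subseteq n_\ell(X)$ this shows every $\ell mn$-stable extension is $\ell mn$-complete, and also every $\ell mn$-preferred extension (so $\stb_{\ell mn}(\Delta) \subseteq \prf_{\ell mn}(\Delta)$, since a stable extension is a complete extension that is additionally maximal—any larger complete extension would have to include an argument not $n$-attacked by $X$, contradicting $X = \cff{n}(X)$). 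Finally, from $X = \cff{n}(X)$ we get $\cff{n}(X) \subseteq X$, placing $\stb_{\ell mn}(\Delta)$ below the node $\{X \mid \cff{n}(X) \subseteq X\}$ on the right.

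The main obstacle, such as it is, lies in the stability edges, specifically $\stb_{\ell mn}(\Delta) \subseteq \prf_{\ell mn}(\Delta)$: one must argue that an $\ell mn$-stable extension is not merely complete but \emph{maximally} complete, and this is where the characterisation $X = \cff{n}(X)$ does the work—any argument $a \notin X$ is $n$-attacked by $X$, hence cannot be added while preserving the fixpoint property of $\cfmn{m}{n}$ in a way that keeps $X \cup \{a\}$ complete. I would present this as the one non-bookkeeping step and point the reader to the analogous classical argument in~\cite{dung95acceptability} (the graded version being a direct parameterisation of it); all remaining edges are one-line consequences of the definitions and of equations~\eqref{eq:twofold} and~\eqref{eq:increase1}.
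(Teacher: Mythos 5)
Your verification of the routine edges is correct and is essentially all the paper itself offers: the Fact is asserted there as a direct consequence of Table~\ref{table:graded}, with no separate proof. The two topmost edges, $\cmp_{\ell mn}\subseteq\adm_{\ell mn}$, $\cmp_{\ell mn}\subseteq\set{X\mid\cfmn{m}{n}(X)\subseteq X}$, grounded and preferred $\subseteq$ complete, stable $\subseteq$ complete via $\cfmn{m}{n}(X)=\cff{m}(\cff{n}(X))=\cff{m}(X)=X$, and $\stb_{\ell mn}(\Delta)\subseteq\set{X\mid\cff{n}(X)\subseteq X}$ all go through exactly as you say.

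The genuine gap is in the one step you yourself flag as non-trivial, namely $\stb_{\ell mn}(\Delta)\subseteq\prf_{\ell mn}(\Delta)$. Your argument is that any $a\in Y\setminus X$, for $Y$ a strictly larger $\ell mn$-complete extension, is (by $X=\cff{n}(X)$) attacked by at least $n$ arguments of $X\subseteq Y$, and that this yields a contradiction. But the only property of $Y$ this could contradict is its $\ell$-conflict-freeness, i.e.\ that every member of $Y$ has fewer than $\ell$ attackers in $Y$; having at least $n$ attackers in $Y$ contradicts that only when $n\geq\ell$. The classical argument you point to works precisely because there $n=\ell=1$, and it does not parameterise. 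For $n<\ell$ the inclusion actually fails: take the four-cycle $b\ar a$, $a\ar d$, $d\ar c$, $c\ar b$ with $\ell=2$ and $m=n=1$ (parameters that even satisfy the standing constraints $n\geq m$ and $\ell\geq m$). Then $X=\set{b,d}$ satisfies $X=\cff{1}(X)\subseteq\cff{2}(X)$ and so is $211$-stable, yet $Y=\set{a,b,c,d}$ is $211$-complete ($\cfmn{1}{1}(Y)=Y$, and each argument has exactly one attacker in $Y$, so $Y\subseteq\cff{2}(Y)$), whence $X$ is not a largest $211$-complete extension. So this edge of the diagram needs the extra hypothesis $n\geq\ell$ (under which your intended contradiction does materialise), and your proof should either add that hypothesis explicitly or flag that the stable-to-preferred edge does not hold for arbitrary $\ell,m,n$ as the Fact's preamble suggests.
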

So all $\ell mn$ extensions are $mn$-admissible and $\ell$-conflict-free; $\ell mn$-grounded, -stable and -preferred are all $\ell mn$-complete extensions; and $\ell mn$-stable extensions are $\ell mn$-preferred. We will further illustrate Definition \ref{table:accrual-sensitive}, and the associated constructions, with a series of examples later in Section \ref{section:examples}.

\begin{fact}
For any $AF$ $\Delta$, and integers $\ell, m$ and $n$ such that $n \geq m$ and $\ell \geq m$:
\begin{align}
\set{x \mid \overline{x} = \emptyset} & \subseteq \grn_{\ell mn}(\Delta) \label{eq:grn} \\
\bigcup S_{\ell mn}(\Delta) & \subseteq \bigcup S_{\ell' m'n'}(\Delta) \label{eq:lmn0} \\
\bigcap S_{\ell mn}(\Delta) & \subseteq \bigcap S_{\ell' m'n'}(\Delta) \label{eq:lmn}
\end{align}
for $S \in \set{\mathit{grounded}, \mathit{preferred}, \mathit{stable}}$ and any $\ell', m', n'$ such that $n' \geq m'$ and $\ell' \geq m'$ and $\ell' \geq \ell$, $m' \geq m$ and $n' \leq n$.
\end{fact}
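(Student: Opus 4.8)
The plan is to reduce the three statements to: (i) a parameter-monotonicity lemma for the graded defense and neutrality functions; and (ii) the constructive results of Section \ref{Sec:GradedSemantics} (Corollary \ref{fact:grounded}, Theorem \ref{fact:smallest_complete}), which apply because the constraints $n \geq m$, $\ell \geq m$ (and $n' \geq m'$, $\ell' \geq m'$) are assumed — for non-finitary $\Delta$ one invokes their transfinite analogues (Remark \ref{remark:ordinal}). For \eqref{eq:grn}: if $\overline{x} = \emptyset$ then $x$ has fewer than $m$ attackers, so $x \in \cfmn{m}{n}(X)$ for every $X$; by Corollary \ref{fact:grounded}, $\grn_{\ell mn}(\Delta) = \bigcup_{0 \leq k < \omega} \cfmn{m}{n}^k(\emptyset) \supseteq \cfmn{m}{n}(\emptyset) \ni x$.

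The parameter-monotonicity lemma states: for all $X \subseteq \A$, $\cff{\ell}(X) \subseteq \cff{\ell'}(X)$ whenever $\ell \leq \ell'$, and $\cfmn{m}{n}(X) \subseteq \cfmn{m'}{n'}(X)$ whenever $m \leq m'$ and $n' \leq n$. The first is \eqref{eq:increase1} iterated; the second is \eqref{eq:increase2} iterated in the first index followed by \eqref{eq:increase3} iterated (downwards) in the second index — equivalently, it is $\cfmn{m}{n} \rhd \cfmn{m'}{n'}$ by Fact \ref{Fact:Ordering}(i). Its immediate consequence, given $\ell' \geq \ell$, $m' \geq m$, $n' \leq n$, is that every $\ell mn$-admissible set is $\ell' m' n'$-admissible.

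For $S$ grounded, both \eqref{eq:lmn0} and \eqref{eq:lmn} reduce to $\grn_{\ell mn}(\Delta) \subseteq \grn_{\ell' m' n'}(\Delta)$; by Corollary \ref{fact:grounded} this is $\bigcup_k \cfmn{m}{n}^k(\emptyset) \subseteq \bigcup_k \cfmn{m'}{n'}^k(\emptyset)$, proved by induction on $k$ from the monotonicity of $\cfmn{m}{n}$ (Fact \ref{fact:properties_dn}) and the lemma. For the credulous statement \eqref{eq:lmn0} with $S \in \{$preferred, stable$\}$: if $a$ lies in an $\ell mn$-$S$-extension $E$, then $E$ is $\ell mn$-admissible, hence $\ell' m' n'$-admissible; by a Zorn's-lemma argument (unions of $\subseteq$-chains of $\ell' m' n'$-admissible sets being $\ell' m' n'$-admissible) $E$ sits inside a maximal $\ell' m' n'$-admissible set, which by Theorem \ref{fact:smallest_complete} is $\ell' m' n'$-complete, hence $\ell' m' n'$-preferred, so $a \in \bigcup \prf_{\ell' m' n'}(\Delta)$. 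For $S$ stable one moreover checks directly that $E$ itself is $\ell' m' n'$-stable: since $m \leq m' \leq n' \leq n$, counting attackers of each argument inside $E$ turns $E = \cff{m}(E) = \cff{n}(E)$ into $E = \cff{m'}(E) = \cff{n'}(E)$, and $E \subseteq \cff{\ell}(E) \subseteq \cff{\ell'}(E)$; hence $\stb_{\ell mn}(\Delta) \subseteq \stb_{\ell' m' n'}(\Delta)$, which also yields \eqref{eq:lmn0} for stable.

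The sceptical statement \eqref{eq:lmn} for preferred and stable is the main obstacle. For preferred, the plan is to show that every $\ell' m' n'$-preferred extension $E'$ includes some $\ell mn$-preferred extension $f(E')$; then $\bigcap \prf_{\ell' m' n'}(\Delta) \supseteq \bigcap_{E'} f(E') \supseteq \bigcap \prf_{\ell mn}(\Delta)$, the last inclusion because one is intersecting over a subfamily of the $\ell mn$-preferred extensions. To construct $f(E')$: a Zorn's-lemma argument — unions of $\subseteq$-chains of $\ell mn$-admissible sets are $\ell mn$-admissible (the $\ell$-conflict-free part using that $\ell$ is finite, the self-defense part using monotonicity) — yields a maximal $\ell mn$-admissible $E \subseteq E'$; iterating $\cfmn{m}{n}$ from $E$ stays inside $E'$ since $\cfmn{m}{n}(Y) \subseteq \cfmn{m'}{n'}(Y)$ and $E'$ is a $\cfmn{m'}{n'}$-fixpoint, so by Theorem \ref{fact:smallest_complete} $E$ is already $\ell mn$-complete. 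The hard step — which I expect to be the real crux of the whole Fact — is upgrading ``maximal $\ell mn$-complete \emph{inside} $E'$'' to ``globally $\ell mn$-preferred''. For stable, \eqref{eq:lmn} is obtained analogously: on top of $\stb_{\ell mn}(\Delta) \subseteq \stb_{\ell' m' n'}(\Delta)$ one argues that the additional $\ell' m' n'$-stable extensions cannot omit an $\ell mn$-sceptically justified argument, exploiting the rigidity forced by $m' \leq n' \leq n$ (no argument can have a number of internal attackers strictly between the ``in'' and ``out'' thresholds of a stable extension).
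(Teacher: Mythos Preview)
Your treatment of \eqref{eq:grn}, of the grounded case, and of the credulous case \eqref{eq:lmn0} is correct and considerably more detailed than the paper's own argument, which consists of a single sentence invoking Definition~\ref{table:accrual-sensitive} and Fact~\ref{fact:accrual_relations}. In particular, your observation that $m \leq m' \leq n' \leq n$ forces any $\ell mn$-stable extension to be $\ell' m' n'$-stable is a genuine point that the paper does not make explicit.

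Where you flag a gap---the upgrade from ``maximal $\ell mn$-complete inside $E'$'' to ``globally $\ell mn$-preferred'' for the sceptical claim \eqref{eq:lmn}---you have in fact located an error in the statement, not merely a missing step. Claim \eqref{eq:lmn} is false for $S \in \{\mathit{preferred},\mathit{stable}\}$. Take $\A = \{a,b,c\}$ with $a \leftrightarrow b$ and $a \leftrightarrow c$, and compare $\ell mn = 112$ with $\ell' m' n' = 111$ (all constraints are met). The $111$-preferred extensions are $\{a\}$ and $\{b,c\}$, so $\bigcap \prf_{111}(\Delta) = \emptyset$; the unique $112$-preferred extension is $\{b,c\}$ (since $\cfmn{1}{2}(\{a\}) = \emptyset$ while $\cfmn{1}{2}(\{b,c\}) = \{b,c\}$), so $\bigcap \prf_{112}(\Delta) = \{b,c\} \not\subseteq \emptyset$. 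The same framework refutes the stable case: $\{a\}$ and $\{b,c\}$ are both $111$-stable but only $\{b,c\}$ is $112$-stable. So the ``hard step'' you singled out cannot be completed, and your inclusion $\stb_{\ell mn}(\Delta) \subseteq \stb_{\ell' m' n'}(\Delta)$---while correct---runs the wrong way for intersections. The paper's one-line proof does not address any of this; your more careful analysis is what exposes the problem.
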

\begin{proof}
\fbox{\eqref{eq:grn}} It is easy to see that for any $\ell, m$ and $n$ if $\overline{x} = \emptyset$ then $x \in \cfmn{m}{n}(\emptyset)$ and therefore, by Definition \ref{table:accrual-sensitive} $x \in \grn_{\ell mn}(\Delta)$.
\fbox{\eqref{eq:lmn0} \& \eqref{eq:lmn}} Both claims follow from Definition \ref{table:accrual-sensitive}  by Fact \ref{fact:accrual_relations}. The constraint on $\ell', m'$ and $n'$ is necessary to guarantee existence under such parameters.
\end{proof}

%%%%%%%%%%%%%%%%%%%%%%%%%%%%%%%%%%%%%

\subsection{Some Examples} \label{section:examples}

\begin{example} \label{example:inexistence2}
Referring to Figure \ref{MotivatingSem}'s $AF$ below:
\[
\cfmn{1}{2}(\emptyset) = \set{d,f,g}; \cfmn{1}{2}(\set{d,f,g}) = \set{d,f,g}.
\]
So $\set{d,f,g}$ is the smallest fixpoint of $\cfmn{1}{2}$ and it is $1$-conflict-free (as well as $\ell$-conflict-free for every $\ell \geq 1$). It is therefore also the $112$-grounded extension (as well as $\ell12$-grounded extension for every $\ell \geq 1$) of the given framework. Consider now:
\[
\begin{array}{c}
\cfmn{2}{1}(\emptyset) \\
\verteq  \\
\set{c,d,f,g}
\end{array}
%\
%\begin{array}{c}
%\cfmn{2}{1}(\set{c,d,f,g}) \\
%\verteq  \\
%\set{a,c,d,f,g}
%\end{array}
\
\begin{array}{c}
\cfmn{2}{1}(\set{c,d,f,g}) \\
\verteq  \\
\set{a,b,c, d, f, g}
\end{array}
\
\begin{array}{c}
\cfmn{2}{1}(\set{a, b, c,d,f,g}) \\
\verteq  \\
\set{a,b,c, d, f, g}
\end{array}
\]
So $\set{a,b,c, d, f, g}$ is the smallest fixpoint of $\cfmn{2}{1}$. However, it is not $2$-conflict-free (argument $a$ in the set is attacked by two other arguments in the set) and therefore it is not a $221$-grounded extension. To make it a graded grounded extension one has to tolerate more internal attacks, setting for instance $\ell = 3$. The set is indeed a $321$-grounded extension.
\end{example}

\begin{example}
Consider now the central graph (3-cycle) in Figure \ref{figure:graphs}. We know that such a graph has no grounded extension and the only complete extension is $\emptyset$. But one can slightly relax the defence and conflict-free requirements to obtain a non-empty graded complete extension. Set $\ell = 2$, $m = 2$ and $n = 1$. We can construct the $221$-grounded extension of this framework as follows:
\[
\cfmn{2}{1}(\emptyset) = \set{a,b,c}, \cfmn{2}{1}(\set{a,b,c}) = \set{a,b,c}
\]
So, $\set{a,b,c}$ is the smallest fixpoint of the graded defense function $\cfmn{2}{1}$ and such a set is clearly $2$-conflict-free (that is, $\set{a,b,c} \subseteq \cffm{2}(\set{a,b,c})$). It is therefore also a $221$-preferred and -stable extension.
\end{example}

%\sm{Perhaps an example with an odd cycle that satisfy the constraints ?}
%\begin{example}
%\ldots
%\end{example}

%%%%%%%%%%%%%%%%%%%%%%%%%%%%%%%%%%%%%%%%%%%%%%%%%%%%%%

\section{Ranking Arguments by Graded Semantics} \label{Sec:Ranking}

The theory developed in the previous sections offers a novel perspective on how arguments can be compared from an (abstract) argumentation theoretic point of view. In this section we describe two natural ways in which the theory of graded acceptability can be applied to define orderings on arguments: a `contextual' way, whereby, given a fixed set of arguments, one can rank arguments by how well they are iteratively defended by the given set; and an `absolute' way, whereby arguments are compared  based on their acceptability under given graded semantics. The definitions introduced are illustrated by means of several examples in this section and later in Section \ref{Sec:Applications}.

%\sm{I think we should restrict ranking evaluation to well-behaved case when $n \geq m$ and $\ell \geq m$ ?}

\subsection{Contextual Approach: Ranking by Quality of Defense}

The same recursive principles underpinning the standard Dung semantics can be used to characterise how strongly the set of arguments defended by a given set, defends another set, and how the latter set defends yet another set, and so forth. That is to say, given a set $X$---the context---one can iteratively apply $\cfmn{m}{n}$ to $X$. Iterated graded defense can thus rank arguments with respect to how well a given set $X$ defends them:

\begin{definition} \label{definition:degree}
Let $\Delta = \tuple{\A, \ar}$ be a finitary $AF$  and $X \subseteq \A$. For $a,b \in \A$, we define that $a$ is `{\em at least as justified as}' $b$ w.r.t $X$ as follows:
%\begin{eqnarray*}
\begin{align*}
a \succeq^X b  & \IFF  \forall m,n > 0
%\ST X \subseteq \cfmn{m}{n}(X) \AND X \subseteq \cff{\ell}(X): \\
%& &
\IF b \in \bigcup_{0 \leq k < \omega}\cfmn{m}{n}^k(X) \THEN a \in  \bigcup_{0 \leq k < \omega} \cfmn{m}{n}^k(X)
\end{align*}
%\end{eqnarray*}
The strict part $\succ_X$ of the above relation is defined in the obvious way. %: \\ $a \succ_X b \IFF a \succeq_X b$ and $b \nsucceq_X a$.
\end{definition}
As usual, $a \approx^X b$ denotes that $a \succeq^X b$ and $b \succeq^X a$.
When we want to make the underlying $AF$ explicit we use the heavier notation $a \succeq_\Delta^X b$.

The key intuition behind this definition is the following. Take two arguments $a$ and $b$, and some fixed set $X$. Is it the case that every time $b$ is defended through the iteration of some graded defense function, $a$ also is? If that is the case, it means that (w.r.t. $X$) every standard of defense met by $b$ is also met by $a$, but $a$ may satisfy yet stronger ones (recall Fact \ref{Fact:Ordering}). It is easy to see that $\succeq^X$ is a partial order, for any $X \subseteq \A$.

\begin{example}\label{Ex:DEJ}
Let us rank, by iterated defense w.r.t $\emptyset$, the arguments in Figure \ref{Motivating1}i-iv). Applying Definition \ref{definition:degree} we obtain the partial order shown in the Hasse diagram in Figure \ref{Motivating1}vii).  Note that we assume one single $AF$ $\Delta$ consisting only of the arguments and attacks shown in Figures \ref{Motivating1}i-iv). Under the standard Dung semantics, all arguments in $\bigcup_{i=1}^4 Xi$ are in the iterated application of $\cf{\Delta}$ to $\emptyset$ (i.e., in the grounded extension of $\Delta$). However, we can now differentiate amongst these arguments. As expected the best arguments are those with no attackers. Second-best is $a2$ whose attackers are all counter-attacked by two un-attacked arguments.  Third-best is $a1$, since $a2 \in \bigcup_{0 \leq k < \omega} \cfmn{1}{2}^k(\emptyset)$ but $a1 \notin \bigcup_{0 \leq k < \omega} \cfmn{1}{2}^k(\emptyset)$ (and so $a1  \nsucceq^{\emptyset} a2$), but $\forall m,n$: if $a1 \in \bigcup_{0 \leq k < \omega} \cfmn{m}{n}^k(\emptyset)$ then $a2 \in \bigcup_{0 \leq k < \omega} \cfmn{m}{n}^k(\emptyset)$ (and so $a2  \succeq^{\emptyset} a1$). We then have that $a3$ and $a4$ are incomparable (recall Section \ref{Sec:ExtendingPartialOrder}). Formally, $a3 \in
\bigcup_{0 \leq k < \omega} \cfmn{3}{3}^k(\emptyset)$ and $a3 \notin
\bigcup_{0 \leq k < \omega} \cfmn{2}{2}^k(\emptyset)$, but $a4 \notin
\bigcup_{0 \leq k < \omega} \cfmn{3}{3}^k(\emptyset)$ and $a4 \in
\bigcup_{0 \leq k < \omega} \cfmn{2}{2}^k(\emptyset)$.
Critically, we can also differentiate amongst the rejected arguments (those not in the Dung grounded extension). Thus $b1, b3, c3, c4$, (each of which are attacked by one argument), are ranked above $b2, b4, d4$ (each of which are attacked by two arguments).
\end{example}

%%%%%%%%%%%

\subsection{Absolute Approach: Ranking by Quality of Justification}

More generally, for a given semantics, we can rank the justification status of an argument with respect to a given framework, exactly as we did in Definition \ref{definition:degree} for iterated  graded defense:

%\sm{Suggest $\forall l,m,n$ s.t. $n \geq l \geq m > 0$ in the following definition, and hence $l = m = n$ %stable ?}

\begin{definition}[Ranking arguments by graded semantics]\label{definition:degree-semantics}
Let $\Delta = \tuple{\A, \ar}$ be an $AF$. For $a,b \in \A$, and for $S \in \{$$\ell mn$-grounded, $\ell mn$-stable, $\ell mn$-preferred$\}$:\footnote{Recall we are working with the sceptical notion of justifiability throughout the paper.}
\begin{align*}
a \succeq^S b  & \IFF   \forall \ell, m,n > 0,  \IF b \mbox{    is justified    w.r.t.    } S
                                     \THEN a \mbox{    is justified    w.r.t.    } S.
\end{align*}
The strict part $\succ_S$ of the above relation is defined in the obvious way.
\end{definition}
As usual, $a \approx^S b$ denotes that $a \succeq^S b$ and $b \succeq^S a$.
Again, it is easy to see that $\succeq^S$ is a partial order. When we want to make the underlying $AF$ explicit we use the heavier notation $a \succeq_\Delta^S b$.

\smallskip

We  illustrate how the ranking of arguments by graded semantics can be applied to arbitrate between arguments that are credulously but not sceptically justified under Dung's standard semantics.

\begin{figure}[t]
\centering
\includegraphics[width=3in]{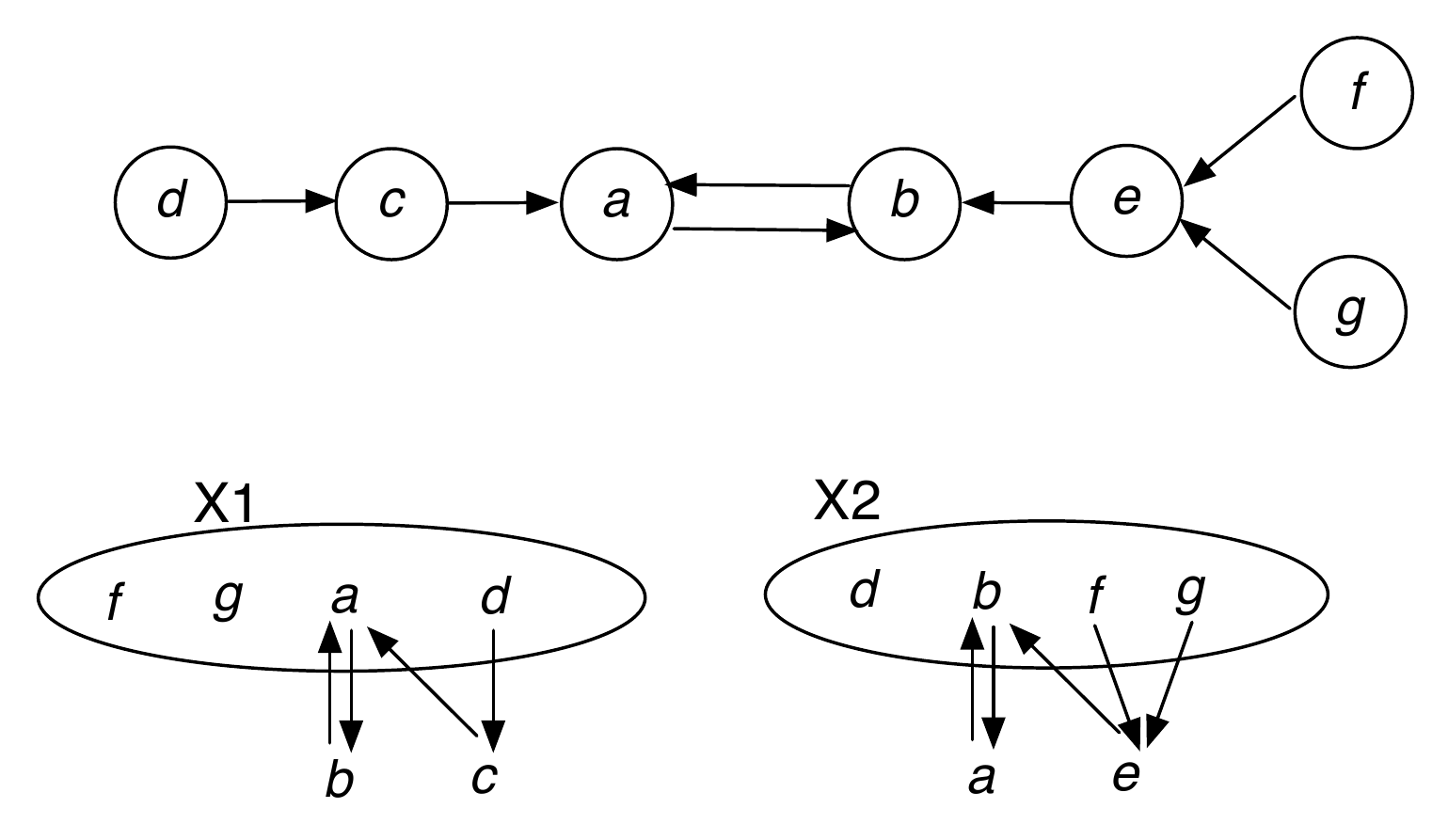}

\caption{An $AF$ with two preferred extensions \label{MotivatingSem}}

\end{figure}

Consider the $AF$ $\Delta$ in Figure \ref{MotivatingSem} that has two preferred extensions---$X1$ and $X2$---under the standard Dung semantics. This equates with $X1$ and $X2$ both being $111$-preferred extensions in the graded terminology. Hence $a$ and $b$ are credulously justified, while only $f$, $g$ and $d$ are sceptically justified. Typically, one would then rely on exogenously given preferences \cite{AmCay,Modgil2013361}
to arbitrate between such arguments. However, we can make use of the endogenously derived ranking of arguments yielded by our graded semantics to arbitrate amongst $a$ and $b$. Intuitively, $b$ is more strongly defended in $X2$ than $a$ is defended in $X1$, and the graded semantics can make use of this information in the standard framework to arbitrate in  favour of
$b$ over $a$.
Specifically,  only $X2$ is a $222$-admissible (and hence a subset of a $222$-preferred\footnote{The  $222$-preferred extension is $X2 \cup \{a\}$.}) extension since only one attacker ($a$) of $b$ is defended by strictly less than two arguments, whereas $X1$ is not a $222$-admissible  ($X1$ is not a subset of a $222$-preferred) extension since both attackers of $a$ ($b$, $c$) are defended by strictly less than two arguments. %Notice that $X2$ is a $222$-admissible, but not a $222$-preferred, extension. We have that $X2'$ = $X2 \cup \{a\} \subseteq n_2(X2')$, and so $X2'$ is $222$-preferred ( $a$ is $22$-defended by $X2$ as this defense allows for the single undefended attack by $b$ on $a$). Also, $X1$ is not  $222$-admissible, and so not a subset of a $222$-preferred extension.
Indeed,  $\forall l, m,n \geq 0$:
$ \IF a$   is justified    w.r.t.   $lmn$-preferred $
                                     \THEN b$    is justified    w.r.t.     $lmn$-preferred,
                                     whereas \emph{it is not the case} (as illustrated above) that $\forall l, m,n \geq 0$:
$ \IF b$   is justified    w.r.t.   $lmn$-preferred $
                                     \THEN a$    is justified    w.r.t.   $lmn$-preferred. Hence $b \succ_\Delta^{lmn\text{-preferred}} a$, so arbitrating in favour of $b$ over $a$.

It is worth noting that the contextual and absolute approaches to argument ranking are related as follows.

\begin{fact}
Let $\Delta$ be a finitary $AF$,  $n \geq m$ and $\ell \geq m$. Then:
\begin{align*}
a \succeq^\emptyset b  & \IFF   a \succeq^{lmn\text{-grounded}} b
\end{align*}
\end{fact}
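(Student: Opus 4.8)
The plan is to collapse both sides of the biconditional onto the same elementary condition, using Corollary~\ref{fact:grounded} as the bridge. For any parameters with $n \geq m$ and $\ell \geq m$, Corollary~\ref{fact:grounded} gives $\grn_{\ell mn}(\Delta) = \bigcup_{0 \leq k < \omega}\cfmn{m}{n}^k(\emptyset)$. Two observations follow at once. First, the $\ell mn$-grounded extension is unique, so sceptical and credulous justification coincide, and ``$x$ is justified w.r.t. $\ell mn$-grounded'' is literally ``$x \in \bigcup_{0 \leq k < \omega}\cfmn{m}{n}^k(\emptyset)$''. Second, this set depends only on $m$ and $n$, not on $\ell$. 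These are the only substantive ingredients; the rest is unfolding definitions.

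Next I would unfold Definition~\ref{definition:degree-semantics} via this equivalence. The conditional ``if $b$ is justified w.r.t.\ $\ell mn$-grounded then $a$ is justified w.r.t.\ $\ell mn$-grounded'' becomes ``$b \in \bigcup_{0 \leq k < \omega}\cfmn{m}{n}^k(\emptyset) \Rightarrow a \in \bigcup_{0 \leq k < \omega}\cfmn{m}{n}^k(\emptyset)$'', which is free of $\ell$; since for each $m$ the triple $(\ell,m,n) = (m,m,m)$ is an admissible witness, the universal quantifier over $\ell$ is harmless, and $a \succeq^{\ell mn\text{-grounded}} b$ reduces to: for all $m,n$ with $n \geq m$, $b \in \bigcup_{0 \leq k < \omega}\cfmn{m}{n}^k(\emptyset) \Rightarrow a \in \bigcup_{0 \leq k < \omega}\cfmn{m}{n}^k(\emptyset)$. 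Unfolding Definition~\ref{definition:degree} with $X = \emptyset$, $a \succeq^\emptyset b$ is precisely this same implication, quantified over the parameter range $n \geq m$ forced by the hypothesis of the Fact. Hence the two relations coincide; the $(\Rightarrow)$ direction is in any case immediate even without restricting the range on the left, since $a \succeq^\emptyset b$ then quantifies over a superset of the relevant pairs $(m,n)$.

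The one point needing care — and the step I would expect to be the main (if minor) obstacle — is the bookkeeping over the scopes of the universal quantifiers on $(\ell,m,n)$: one must (i) apply Corollary~\ref{fact:grounded} to rewrite ``justified w.r.t.\ $\ell mn$-grounded'' as membership in $\bigcup_{0 \leq k < \omega}\cfmn{m}{n}^k(\emptyset)$, which is legitimate exactly because $n \geq m$ and $\ell \geq m$; (ii) note that the resulting condition is $\ell$-independent, so the extra quantifier in Definition~\ref{definition:degree-semantics} is inert; and (iii) read the quantifier in Definition~\ref{definition:degree} over the constrained range $n \geq m$, consistently with the hypothesis of the Fact and with the development of graded grounded semantics in Section~\ref{Sec:GradedSemantics} (Theorem~\ref{fact:smallest_complete}), where only this regime guarantees that $\bigcup_{0 \leq k < \omega}\cfmn{m}{n}^k(\emptyset)$ is the graded grounded extension. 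Once these three alignments are in place, the biconditional is a direct consequence of the definitions with no further argument required.
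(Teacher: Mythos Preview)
Your proposal is correct and follows essentially the same route as the paper: the paper's own proof simply states that the claim is a direct consequence of Corollary~\ref{fact:grounded} together with Definitions~\ref{definition:degree} and~\ref{definition:degree-semantics}, which is exactly the bridge you articulate in detail. Your extra care with the $\ell$-independence of $\grn_{\ell mn}(\Delta)$ and the restriction of the quantifier range to $n \geq m$, $\ell \geq m$ merely makes explicit what the paper leaves implicit.
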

\begin{proof}
The claim is a direct consequence of Corollary \ref{fact:grounded} and Definitions \ref{definition:degree} and \ref{definition:degree-semantics}.
\end{proof}

Further properties of rankings based on graded semantics will be discussed in detail later (Section \ref{Sec:RelatedWork}), in the context of other existing approaches to argument rankings in argumentation.

%%%%%%%%%%%%%%%%%%%%%%%%%%%%%%%%%%%%%%%%%%%%%

\section{Instantiating Graded Semantics}\label{Sec:Applications}

%\subsection{Instantiation by  \textit{ASPIC+} Formalisations of Schemes and Critical Questions}
We have thus far focussed on graded defense, acceptability and semantics for abstract argumentation frameworks. We now illustrate these notions by reference to
instantiated frameworks. In particular,  \textit{ASPIC+}  \cite{Modgil2013361, hp10aspicJAC}  provides a general framework for specifying logical instantiations of  Dung frameworks. It has been shown to  formalise human orientated accounts of  argumentation based reasoning that make use of Schemes and Critical Questions \cite{Wal96}, and
provide a dialectical characterisation of non-monotonic inference in Brewka's Preferred Subtheories \cite{bre89} and Prioritised Default Logic \cite{PDL} (in \cite{Modgil2013361} and, respectively, \cite{Young2016}). In what follows we show how our graded semantics can be applied to both these instantiations.

%%%%%%%%%%%%%%%%%

 \subsection{Graded Semantics for Instantiations based on Schemes and Critical Questions}\label{SecGradedSchCQ}

We use a well-established instantiated argumentation setting to illustrate the usefulness of the theory of graded acceptability for evaluating argument strength.

\subsubsection{Schemes and Critical Questions}

 Schemes and Critical Questions (\emph{SchCQ}) have been developed by the informal logic community, most notably by Walton \cite{Wal96}, and capture stereotypical patterns of argument as deployed in epistemic and practical reasoning.
For example, the \emph{argument from expert opinion} scheme states that if $E$ is an expert in domain $D$, and $E$ states that $S$ is true (false), and $S$ is within domain $D$, then (presumably) $S$ is true (false).

Echoing our motivation for graded defense and acceptability in Section \ref{Sec:GradedAcceptabilityIntuitions}, Walton emphasises that
it is not feasible for reasoning agents to establish beyond doubt the validity of a scheme's \emph{presumptions} if one is to effectively engage in epistemic or practical reasoning. The presumptive nature of the grounds used to establish a claim means that any given argument instantiating a scheme, does not in and of itself provide grounds for acceptance of the claim as having the status of true justified belief, or being the decision option that indisputably maximises a given objective. Rather, one
establishes  confidence in the claim sufficient to reason or act on the basis of the claim  \cite{Walton2013}. Moreover, each scheme is associated with critical questions that  render explicit the presumptive nature of an argument's grounds. For example,
`Is $E$ an expert in domain $D$ ?' and `Is $E$ reliable as as  source ?'.  A natural way to formalise reasoning with argument schemes is to regard them as defeasible inference rules and to regard critical questions as pointers to counter-arguments that may themselves instantiate schemes \cite{m+p14, Verheij2003}. \textit{ASPIC+} arguments are built from such defeasible
rules defined over a first order language, and premises that are first order formulae. Arguments can be \emph{rebut} attacked on the conclusions of an argument's defeasible rules, \emph{undermine} attacked on the argument's premises, or \emph{undercut} attacked by challenging the applicability of a defeasible rule (via a naming mechanism for rules such that the attacking argument concludes the negation of the name of the rule in the attacked argument). For simplicity of presentation we will illustrate using rule based formulations of schemes defined over a propositional language.

Contravening the notational conventions used thus far, but in line with the standard notation in the instantiations literature, we will in this section use upper case Roman letters $A,B,C,\ldots$ to denote arguments, lower case roman letters $a, b, c, \ldots$ for propositions, Greek letters $\alpha, \beta, \gamma, \ldots$ for variables in schemes, and notation $[a], [b], [c], \ldots$ to denote subarguments consisting of a single proposition. We will use the following (abbreviated) schemes and (selected) critical questions, and instantiations of these schemes represented as propositional defeasible inference rules (whose names are shown as subscripts on the defeasible inference symbol $\Rightarrow$):
\smallskip

\paragraph{Argument from position to know} ($APK$):
\begin{itemize}
\item Source $\alpha$ is in a position to know about proposition $\gamma$;
\item Source $\alpha$ asserts that $\gamma$ is true (false);
\item Therefore presumably $\gamma$ is true (false).
\end{itemize}
\noindent $APK$ critical questions:  (APK1) Is $\alpha$ in a position to know about  proposition $\gamma$?; (APK2) Is $\alpha$ an honest/trustworthy/reliable source?; (APK3) Did $\alpha$ assert that $\gamma$ is true (false)?

\begin{figure}[t]
\centering
\includegraphics[width=3.6in]{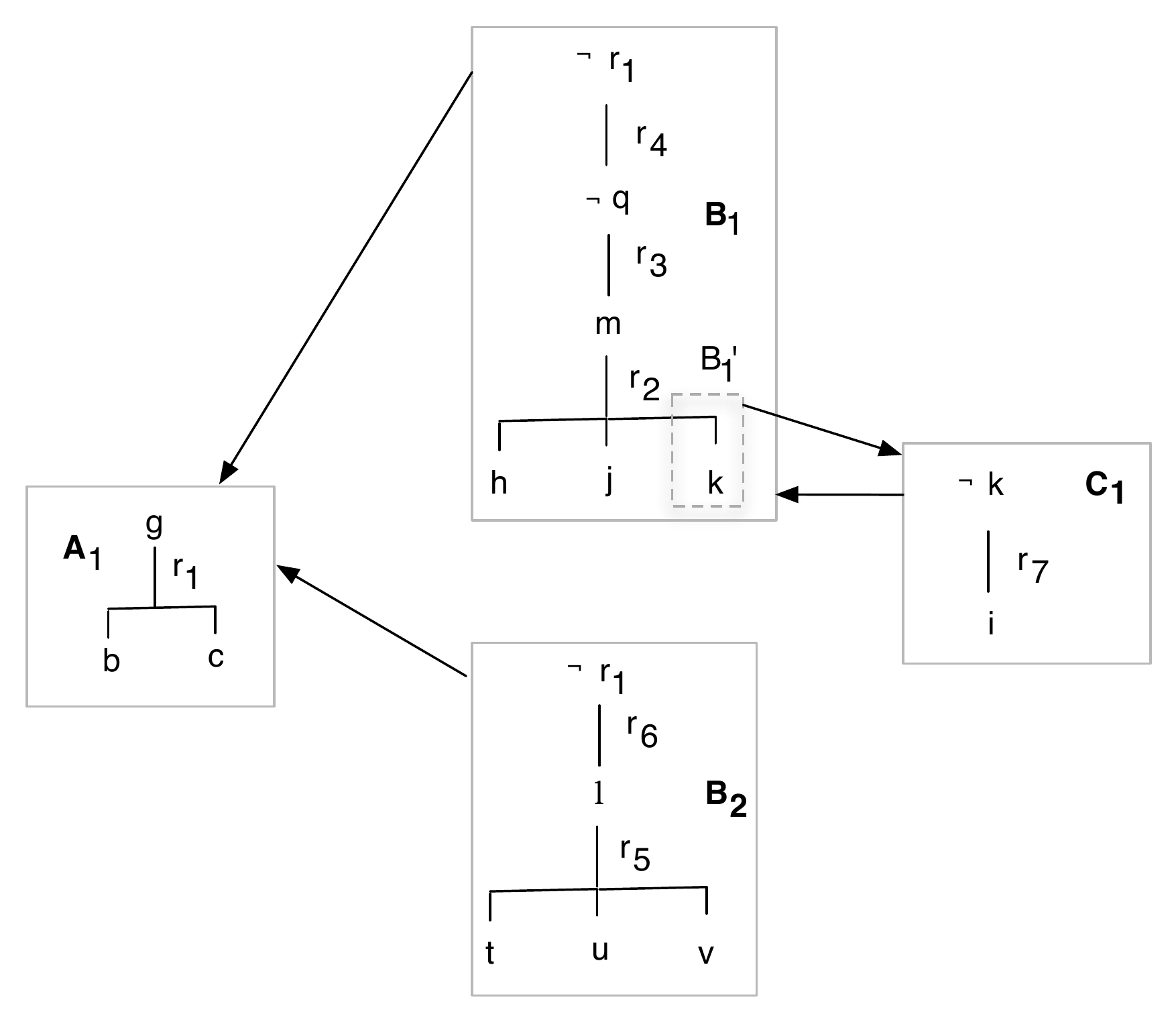}

\caption{\textit{ASPIC+} arguments are upside down trees whose roots are the arguments' claims and whose leaves are the arguments' premises. Notice that $C_1$ asymmetrically undermine attacks $B_1$ on its sub-argument $B_1'$. $C_1$ and $B_1'$ also symmetrically attack ($C_1$ undermine attacks $B_1'$ and $B_1'$ rebut attacks $C_1$). %This symmetric attack is not shown in the figure, but is referenced in Example \ref{ExReinstatement}.
%In ii), $A2$ and $A3$ accrue in support of each other's claim, so that $E2$ is $112$-admissible but $E1$ is not $112$-admissible.
\label{Arguments}}

\end{figure}

\paragraph{Argument from expert opinion} ($AEO$):
\begin{itemize}
\item $\alpha$ is an expert in domain $\beta$;
\item The domain $\beta$ contains proposition $\gamma$;
\item $\alpha$ asserts that $\gamma$ is true (false);
\item Therefore presumably $\gamma$ is true (false).
\end{itemize}
\noindent $AEO$ critical questions:  (AEO1) how credible an expert  is $\alpha$?;
(AEO2) Is $\alpha$ an expert in domain $\beta$?;
(AEO3) Is $\alpha$ reliable as a source?;
(AEO4) Did $\alpha$ assert that  $\gamma$ is true (false)?;
(AEO5) Is $\gamma$ consistent with what other experts assert?

%%%%%%%%%%%%%%%%%

\subsubsection{A Detailed Example}

\begin{example}
Assume the premises $b$ = `Blair is in a position to know about whether removing Assad will achieve democracy in Syria'\footnote{Ex UK prime minister Tony Blair was appointed middle east envoy for peace in 2007.} and  $c$ = `Blair asserts that removing Assad will achieve democracy in Syria is true', and the rule $b, c \Rightarrow_{r_1} g$ where $g$ denotes `removing Assad will achieve democracy in Syria'. We then have the argument $A_1$ in Figure \ref{Arguments}.% $[b ; c ;  b, c \Rightarrow_{r_1} g]$

Assume furthermore the premises $h$ = `Chilcot is an expert in the domain of Blair's conduct in the Iraq war', $j$ = 'the domain Blair's conduct in the Iraq war contains proposition Blair lied about weapons of mass destruction (WMD)', $k$ = `Chilcot asserts that Blair lied about WMD', and the rules $h,j, k \Rightarrow_{r_2} m$ (where $m$ = `Blair lied about WMD'), $m \Rightarrow_{r_3} \neg q$ (where $q$ = `Blair is an honest source'), and $\neg q \Rightarrow_{r_4} \neg r_1$. We then have the argument $B_1$ in Figure \ref{Arguments}, where
 $B_1$ addresses critical question APK2 and  undercuts $A_1$ on the inference $r_1$.
\end{example}
Intuitively, $B_1$'s claim does not have the status of an incontrovertible belief, since its grounds do not have such status.\footnote{Commentators  varyingly interpreted the Chilcot report's investigation into the Iraq war (\url{www.iraqinquiry.org.uk/the-report}) as asserting that Blair did/did not lie.} Hence, one may retain some residual confidence in $A_1$'s claim. Graded semantics allow for this level of granularity. So, in the $AF$ containing $A_1$ and $B_1$, $A_1$ and $B_1$ are in the $222$-grounded extension.\footnote{All sub-arguments of $A_1$ and $B_1$, i.e., $[b]$, $[c]$, $[h]$, $[j]$, $[k]$, and the sub-arguments of $B_1$ that conclude $m$ and $\neg q$, are also in the $222$-grounded extension (none of these additional arguments attack or are attacked by any other arguments).}

Clearly however, both our contextual (Definition \ref{definition:degree}, the natural context being $\emptyset$) and absolute (Definition \ref{definition:degree-semantics}) approaches for ranking arguments would rank $B_1$ above $A_1$.

\begin{example}\label{ExTwoVOneAttack}
Suppose an additional instance of the $AEO$ scheme constructed from the premises $t$ = `The Arab League is an expert in the domain of Blair's knowledge of middle east affairs' ; $u$ = 'the domain Blair's knowledge of middle east affairs contains the proposition Blair is an unreliable source' ; $v$ = `The Arab League assert that
Blair is an unreliable source', and the rules $t, u, v \Rightarrow_{r_5} l$ (where $l$ = `Blair is an unreliable source'), $l \Rightarrow_{r_6} \neg r_1$.
 We then have argument $B_2$ in Figure \ref{Arguments},
 %$[t ; u ;  v; t,u,v \Rightarrow_{r_5}  l]$ and $B_2$ = $[t ; u ;  v; t,u,v \Rightarrow_{r_5}  l ; l \Rightarrow_{r_6} \neg r_1]$,
 where $B_2$ addresses critical question APK2 (by claiming unreliability rather than dishonesty) and undercuts $A_1$ on the inference $r_1$. We now have two unattacked arguments attacking $A_1$. $A_1$ is now further weakened, and neither $\{A_1,B_1\}$ nor  $\{A_1,B_2\}$ are included in $222$-preferred extensions (the threshold of allowing one unattacked attacker on $A_1$ and thus maintaining co-acceptability of $A_1$ with one such attacker, has been exceeded). Abstractly, this equates with the ranking $a1  \succ^S a1^*$   (according to Definition \ref{definition:degree-semantics}) in the AF shown in Figure \ref{SchCQAFs}i.
 % and $\{A_1\}$ is not $122$-preferred.
\end{example}
In the remainder of this section we will focus on $\ell mn$-preferred extensions and assume $S$ to be of such type.

\begin{figure}[t]
\centering
\includegraphics[width=4.2in]{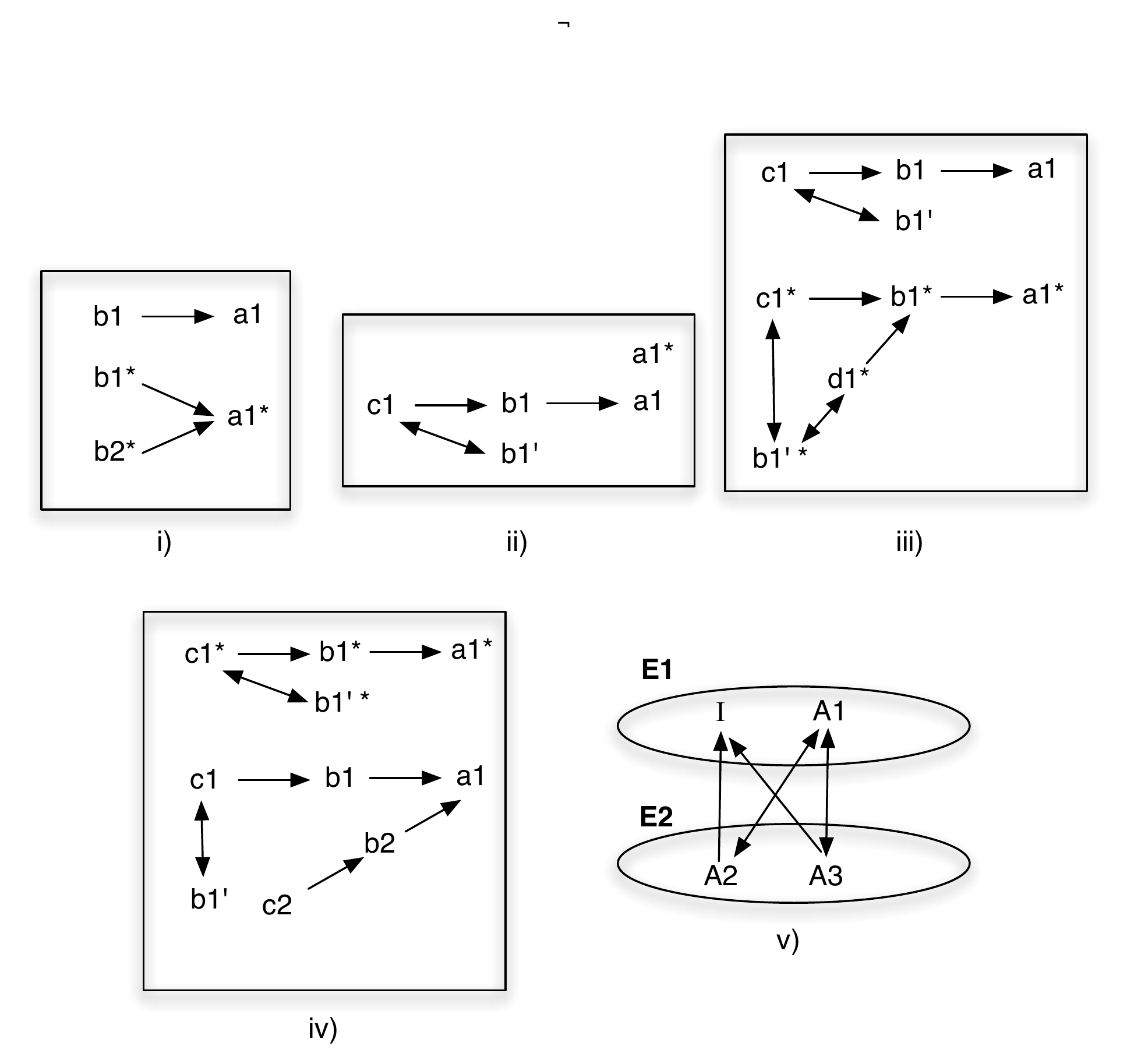}

\caption{$AF$s  in which $a1^*  \succ^{\emptyset} a1$, illustrating the analysis of arguments in Example \ref{ExTwoVOneAttack} (i), Example \ref{ExReinstatement} (ii),  Example \ref{ExReinstatement2Same} (iii),  and Example \ref{ExampleTwoVOneReinstatment} (iv). In v) we show the two Dung extensions for   Example \ref{ExAccrual}.  \label{SchCQAFs}}

\end{figure}

\begin{example}\label{ExReinstatement}
Suppose we now have only $A_1$, $B_1$ (and their sub-arguments) and the argument $C_1$ that cites an interpretation of the report $i$ = `Chilcot did not use the word lie'\footnote{Chilcot did not use the word ``lie''. His report specified that it ``is not questioning'' Mr Blair's fixed belief, but the former Prime Minister deliberately blurred the distinction between what he believed and what he actually knew. } to conclude that Chilcot did not assert that Blair lied about WMD ($\neg k$),  so that $C_1$ undermine attacks $B_1$ (on its sub-argument $B_1' = [k]$), and
$C_1$ undermine attacks $B_1'$ and  $B_1'$ rebut attacks $C_1$  (Figure \ref{Arguments}).
In an $AF$ consisting only of $A_1$ and its sub-arguments $A_1' = [b]$ and $A_1'' = [c]$, then for all $l,m,n$ set $\{A_1, A_1', A_1'' \}$  would be the single $lmn$-complete extension. However, in the $AF$ that includes $C_1 \rightarrow B_1 \rightarrow A_1$, $C_1 \leftrightarrow B1'$ (the additional subarguments of $C_1, B_1$ and $A_1$ are not listed here as none of these are involved in attacks)
  %it is not the case that for all $l,m,n$,
$A_1$ is not in every  $lmn$-complete extension (for instance, $\{A_1,C_1\}$ is not $112$-admissible).
Abstractly, this equates with %$a1^*  \succ_{\emptyset} a1$ according to Definition \ref{definition:degree}, and
 $a1^*  \succ^S a1$   in the $AF$ shown in Figure \ref{SchCQAFs}ii (absolute ranking), and $a1^*  \succ^\emptyset a1$ (contextual ranking).
%
%Indeed, in general, in a framework $\Delta$ consisting of an unattacked $A'$ and $C \rightarrow B \rightarrow A$, then $A'  \succ_{\emptyset} A$ according to Definition \ref{definition:degree} and $A'  \succ^S_{\Delta} A$ for $S \in \{$$\ell mn$-complete, $\ell mn$-grounded, $\ell mn$-stable, $\ell mn$-preferred$\}$ according to Definition \ref{definition:degree-semantics}.

\end{example}

We note with interest that the higher ranking for unattacked versus reinstated arguments (illustrated by the above example) is supported by experimental findings reporting that human subjects appear to have higher confidence in claims of arguments that are unattacked, than when those arguments are subsequently attacked and then defended \cite{rahwan:behavioral}. This suggests that our theory of graded acceptability incorporates %, albeit at an abstract level, a
features of human argumentation.\footnote{We do see the theory of graded acceptability also as a contribution to the long term goal of providing formal frameworks of argumentation accommodating both computational and human argumentation \cite{AddValue}.}

\begin{example}\label{ExReinstatement2Same}
Suppose the $AF$ $\Delta$ that includes the arguments $A1, B_1$ and $C_1$ in Figure \ref{Arguments}, and the $AF$ $\Delta'$  that includes the additional argument $D_1$ that cites another interpretation of the report that concludes $\neg k$. Then $D_1$ also undermines attack $B_1$ on its sub-argument $B_1'$  and we also now have $D_1 \leftrightarrow B1'$. %$B_1$ is further weakened, so strengthening
The defense of $A_1$ against $B_1$ by $\set{C_1, D_1}$ is now stronger than that offered by $\set{C_1}$ alone. Abstractly, this equates with  $a1^*  \succ^\emptyset a1$ in the $AF$ shown in Figure \ref{SchCQAFs}iii (contextual ranking). Furthermore,  $\{D_1,C_1,A_1\}$ is a $112$-preferred extension of $\Delta'$ but $\{C_1,A_1\}$ is not a  $112$-preferred extension of $\Delta$. Abstractly, this equates with  $a1^*  \succ^S a1$ in the $AF$ shown in Figure \ref{SchCQAFs}iii (absolute ranking).
\end{example}

\begin{example}\label{ExampleTwoVOneReinstatment}
Suppose the $AF$ $\Delta$ that includes the arguments  $A1, B_1$ and $C_1$ in Figure \ref{Arguments}, and  the $AF$ $\Delta'$
that now includes  the additional argument $B_2$ in Figure \ref{Arguments}, as well as the argument  $C_2$ claiming that the Arab League are not credible experts, so that $C_2$ undercut attacks $B_2$  (on  $r5$). That is, we have that $\Delta'$ includes:
\begin{quote}  $C_1 \rightarrow B_1 \rightarrow A_1$, $C_1 \leftrightarrow B1'$, $C_2 \rightarrow B_2 \rightarrow A_1$.
\end{quote} Intuitively, $A_1$ is more strongly justified in $\Delta$ than in $\Delta'$ since in the latter case we have two arguments ($B_1$ and $B_2$) that continue to exert a weakening effect on $A_1$ as opposed to the one  ($B_1$) in $\Delta$.
%(if one ignores the symme) the comparison of the status of $a1$ in Figure \ref{Motivating1}-i) and  $a3$ in Figure \ref{Motivating1}-iii), and with the fact that
$A_1$ is in a $222$-preferred extension of $\Delta$ but $A_1$ is not in a  $222$-preferred extension of $\Delta'$. This equates with   $a1^*  \succ^S_{\Delta} a1$   in the $AF$ shown in Figure \ref{SchCQAFs}iv.

\end{example}

%%%%%%%%

\subsubsection{Graded Acceptability and Accrual}

We now briefly illustrate, by means of a simple example, the relationship between graded acceptability and the notion of accrual. In the following section we then show how graded acceptability captures a simple form of accrual when applying graded semantics to a dialectical characterisation of non-monotonic inference.

\begin{example}\label{ExAccrual} Suppose an argument in support  of invading Syria, instantiating the scheme for practical reasoning \cite{atk05}:
\begin{quote} Assad is suppressing
Syrians, and invading Syria will remove Assad from power, and \textbf{removing Assad will
achieve  democracy in Syria}, so promoting the value of peace.
\end{quote}
Now, as well as pointing to counter-arguments, critical questions can also be posed as challenges to presumptions, shifting the burden of proof so  as to provide an argument in support of the presumption. So the critical question `Do the consequences of the action achieve the goal ?' can be posed to the presumption emphasised in bold in the above argument (with the consequences being `the removal of Assad' and the goal being `achieve  democracy in Syria'). Suppose that in response to this challenge, the above argument is then extended to an argument $I$ that now includes as a
 sub-argument, the argument $A_1$ in Figure \ref{Arguments} whose conclusion is the bold text in the above. Now suppose two additional arguments $A_2$ and $A_3$, each of which are instances of the AEO scheme. $A2$ cites the  Institute of Middle Eastern Studies at King's College London who are experts in the domain of middle east politics and who assert that removing Assad will
\emph{not} achieve  democracy in Syria (i.e., $A2$ concludes $\neg g$). $A_3$ cites the  United Nations working group on middle east affairs who are  also experts that assert that removing Assad will
\emph{not} achieve  democracy in Syria. %$A_2$ and $A_3$ thus mutually support each other in concluding $\neg g$, and
Both $A_2$ and $A_3$ symmetrically rebut attack $A_1$. Moreover, both $A2$ and $A3$ asymmetrically rebut attack $I$ on its sub-argument $A1$. We thus have two Dung admissible extensions $E1$ and $E2$ in Figure \ref{SchCQAFs}v.

%Now recall the motivating `guilty suspect' example (Example \ref{ExampleGuilty} in Section \ref{Sec:Introduction}) in which, according to our graded generalisation of Dung's theory, one can intuitively
%interpret the two attacks on \emph{distinct} premises (motive and opportunity) as `accruing' to weaken the argument for guilt.
% In the current example,
However, $A_2$ and $A_3$ accrue in support of each other's claim ($\neg g$) and so strengthen each other at the expense of $A_1$. Hence, although $E1$ and $E2$ are both subsets of Dung preferred extensions, we have that $A2, A3 \succ^{S}_{\Delta} A1, I$ (intuitively each attack on $I$ and $A_1$ is defended by one counter-attacker, whereas each attack on $A_2$ and $A_3$  is defended by two counter-attackers).

%$A2, A3 \succ_{\emptyset} A1, I$ according to Definition \ref{definition:degree} and $A2, A3 \succ^{\ell mn-preferred}_{\Delta} A1, I$ according to Definition \ref{definition:degree-semantics}

\end{example}

Observe  also  that $C_1$ and $D_1$ accrue in support of $\neg k$ to strengthen $A_1$ in Example \ref{ExReinstatement2Same}.
The incompatibility of accrual with Dung's standard theory has been discussed and argued for in \cite{PrakAccrual}. However, as illustrated in the above examples, our graded semantics partially challenges this view by showing how a simple counting-based form of accrual can be coherently accommodated within Dung's theory.

%Moreover, we witness another form of accrual in Example \ref{ExReinstatement2Diff}, in the sense that \emph{the attacks by}  $C_1$ and $D_1$ (claiming distinct conclusions) \emph{accrue} to strengthen $A_1$ \sm{can we relate the fact that they claim distinct conclusions to the fact that A1 is only strengthened under a contextual ranking}. Notice that the form of Example \ref{ExAccrual} (\sm{Actually Example \ref{ExReinstatement2Diff}}) recalls the motivating Example \ref{ExampleGuilty} in Section \ref{Sec:Introduction}, in which distinct counter-arguments to opportunity and motive accrue to attack the argument $G$ for guilt, and so strengthen the argument $I$ for innocence.

 \subsection{Graded Semantics and Characterisations of Non-monotonic Inference}\label{Sec:GradedPrefSubtheories}

%We now illustrate application of the graded semantics to an argumentative characterisation of non-monotonic inference in Brewka's Preferred Subtheories.

A number of works \cite{Modgil2013361, Amgoud2010,Thang01082014} provide %if $\leq$ is a total ordering over $\B$,
 argumentation-based characterisations of non-monotonic inference defined by Brewka's Preferred Subtheories \cite{bre89}. The latter starts with a
a \emph{totally} ordered ($\leq$) set $\B$ of classical wff \footnote{Where as usual, $\alpha \approx  \beta$ iff $\beta \leq \alpha$ and $\alpha \leq \beta$, and
$\beta < \alpha$ iff $\beta \leq \alpha$ and $\alpha \nleq  \beta$.} partitioned into equivalence classes
 $(\B_1,\ldots,\B_n)$ (for $i = 1 \dots n$, $\alpha, \beta \in \B_i$ iff $\alpha \approx  \beta$) and such that:
 \begin{quote}$\forall \alpha \in \B_i, \forall \beta \in \B_j$, $i <  j$ iff $\beta < \alpha$.
 \end{quote}
 A `preferred subtheory' (`\emph{ps}' for short) is obtained by taking a maximal under set inclusion consistent subset of $\B_1$, maximally extended with a subset of $\B_2$, and so on. In this way, multiple \emph{ps}s may be constructed; for example $(\B_1 = \{\neg a \vee  \neg b\},\B_2 = \{a , b \})$ yields
 the  \emph{ps}s $\{\neg a \vee  \neg b, a\}$ and $\{\neg a \vee  \neg b, b\}$.

\begin{figure}[t]
\centering
\includegraphics[width=4in]{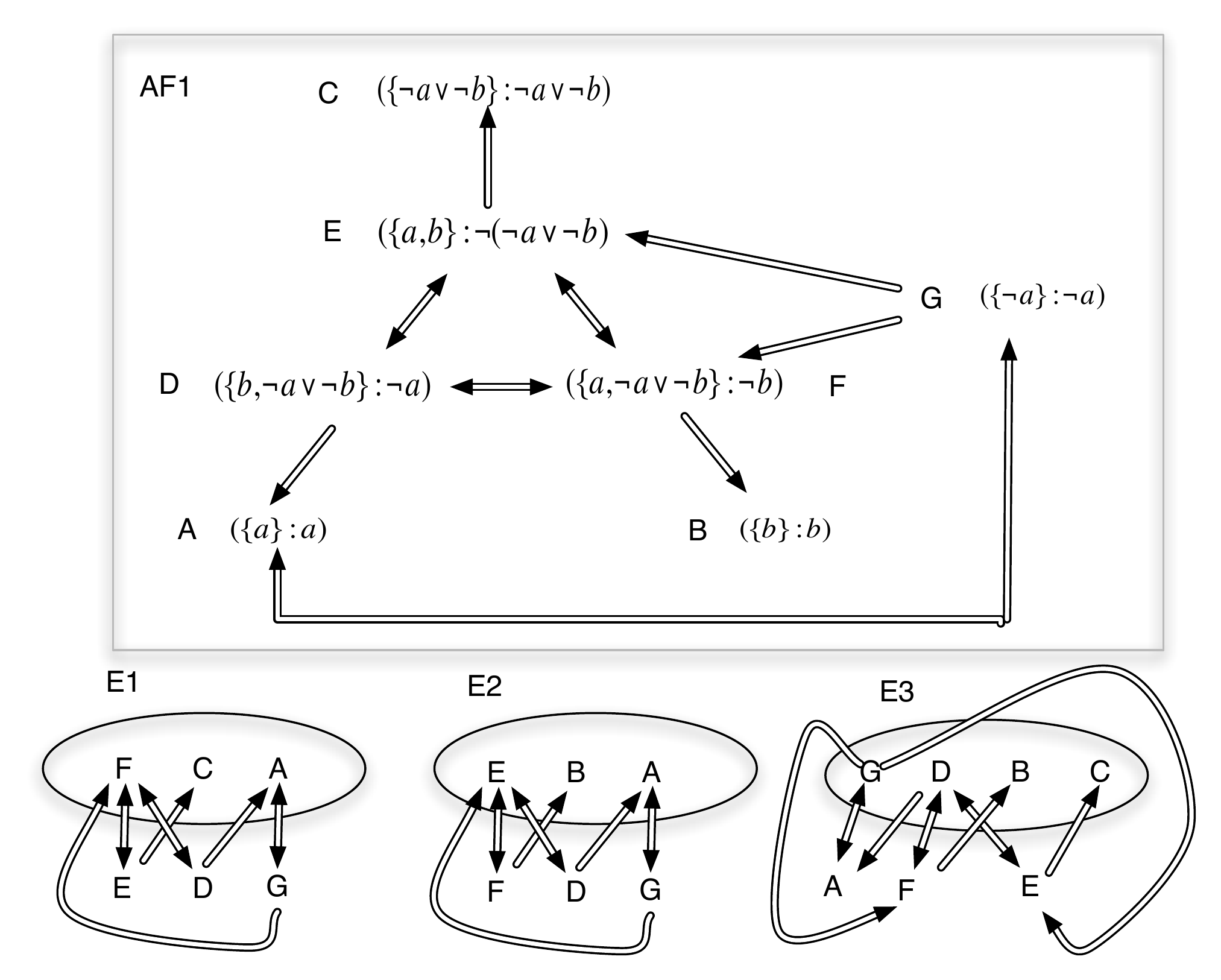}

\caption{Arguments in $AF1$ that defeat each other are connected by double headed arrows. $E1$, $E2$ and $E3$ are Dung admissible (i.e., $111$-admissible extensions). Only $E3$ is $112$-admissible.\label{FigPS}}

\end{figure}

\begin{figure}[t]
\centering
\includegraphics[width=4in]{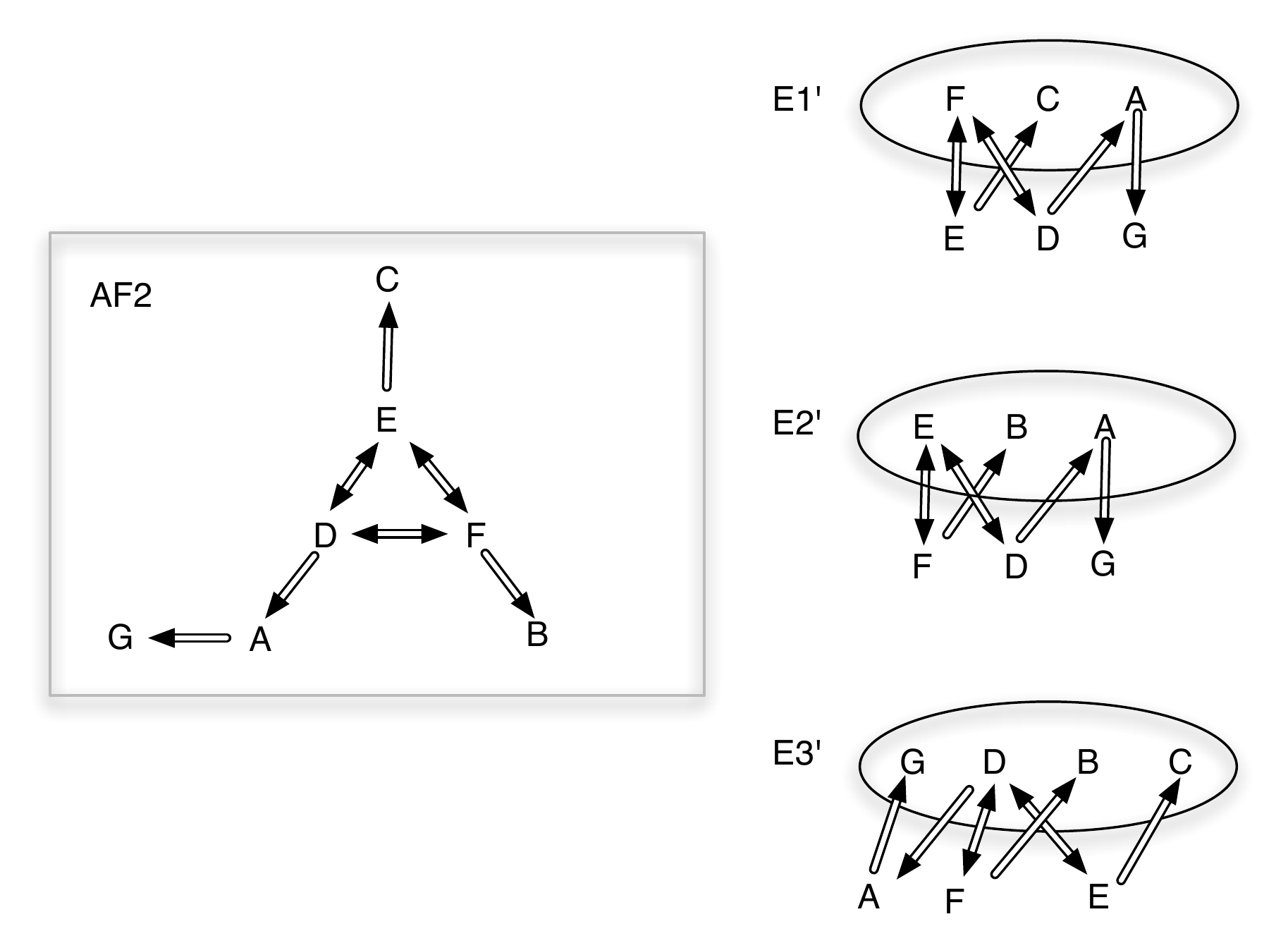}

\caption{$AF2$ defined by the base in which $\neg a$ is ordered below $a, b$ and $\neg a \vee \neg b$.  $E1'$, $E2'$ and $E3'$ are $111$-admissible extensions. None are$112$-admissible. \label{FigPS2}}

\end{figure}

Classical logic arguments in \textit{ASPIC+}
 consist of consistent subsets of premises $\Delta \subseteq  \B$ in support of a claim $\alpha$ classically entailed by $\Delta$ (and such that no proper subset of $\Delta$ entails  $\alpha$)  via a strict inference rule $\alpha_1,\ldots,\alpha_n \rightarrow \alpha$ ($\Delta = \bigcup_{i=1}^n\alpha_i$) that encodes the classical entailment, and which we represent here as $(\Delta,\alpha)$. The arguments constructed from $\B$ are then related by a binary defeat relation, obtained on the basis of the attacks and a strict preference relation over arguments defined by reference to the ordering on $\B$:

  \begin{quote}
   -- $(\Delta,\alpha) \prec (\Delta',\alpha')$  iff $\exists \gamma \in \Delta$, $\forall \beta \in \Delta'$: $\gamma < \beta$;
 \\[3pt]
  -- $(\Delta_1,\alpha_1)$ \emph{attacks} $(\Delta_2,\alpha_2)$ on $(\{\beta\},\beta)$ iff $\beta \in \Delta_2$, $\beta = \neg \alpha_1$ or\\[-16pt]

  \hspace{3mm}  $\alpha_1 = \neg \beta$; $(\Delta_1,\alpha_1)$ \emph{defeats} $(\Delta_2,\alpha_2)$ if  $(\Delta_1,\alpha_1) \nprec (\{\beta\},\beta)$.

   \end{quote}

\cite{Modgil2013361} then shows a correspondence such that each \emph{ps} of $\B$ is the set of premises in a stable extension of the $AF$ consisting of the arguments and defeats defined by the totally ordered $\B$. Then $\alpha$ is a sceptical (credulous) \emph{ps}-inference iff $\alpha$ is entailed by all (respectively at least one) \emph{ps}, iff $\alpha$ is a sceptical (credulous) argumentation defined inference (i.e., $\alpha$ is the conclusion of an argument in all, respectively at least one, stable extension). Moreover, it has subsequently been shown in \cite{modgil17StableEqualsPreferred}, that the stable extensions of any $AF$ of classical logic arguments and defeats, as defined above by a totally ordered set $\B$ of classical logic formulae, coincide with the preferred extensions.\footnote{It is well known that any stable extension of an $AF$ is preferred. Therefore \cite{modgil17StableEqualsPreferred} focuses on showing that any preferred extension is also stable.} The correspondence in \cite{Modgil2013361} can therefore also be stated for preferred subtheories and preferred extensions, and so in what follows, we focus on graded preferred semantics.

% that given the classical logic arguments and
%that with undermining attacks defined over the arguments $\A$ defined by $\B$, and the defeats $\D$ obtained using the Elitist preference, %and the $AF$ ($\A,\D$),
%then each \emph{ps} corresponds to the premises of arguments in a stable extension of ($\A,\D$). Hence, $\alpha$ is classically entailed from a \emph{ps} iff  $\alpha$ is the conclusion of an argument in a stable extension. Then $\alpha$ is a sceptical (credulous) \emph{ps}-inference iff $\alpha$ is entailed by all (respectively at least one) \emph{ps}, iff $\alpha$ is a sceptical (credulous) argumentation defined inference (i.e., $\alpha$ is the conclusion of an argument in all, respectively at least one, stable extension).

We can apply graded semantics to $AF$s that relate arguments by \emph{defeats} rather than attacks. Firstly, assume  $\B$ is the single equivalence class  $\B_1$ = $\{a,b,\neg a \vee \neg b, \neg a\}$, generating the defeat graph $AF1$ in Figure \ref{FigPS} in which all attacks succeed as defeats (represented as double arrows). Note that for every argument $(\Delta,\alpha)$ shown, the $AF$ also includes the argument $(\Delta,\beta)$ where  $\beta$ is any classical consequence of $\Delta$. However, no attacks originate from these additional arguments. It therefore suffices to consider only the arguments shown in Figure \ref{FigPS}, since these are the only arguments from which attacks originate.
Hence if an extension of $AF1$ contains a $(\Delta,\alpha)$ shown in Figure \ref{FigPS}, then it will also contain any $(\Delta,\beta)$ such that $\Delta \vdash \beta$.

Now observe that $E3$ is a subset of the single $112$-preferred extension that includes all arguments with consistent subsets of $\{b,\neg a \vee \neg b, \neg a\}$ (every defeat on $X \in E3$ is defended by two arguments in $E3$). However $E1$ and $E2$ are not $112$-admissible, and are not subsets of $112$-preferred extensions. Indeed, we obtain that $G, D, B, C \succ^{\ell mn-preferred}_{AF1} A, E, F$ according to Definition \ref{definition:degree-semantics}. Once again we witness how graded semantics can effectively account for a form of accrual. We have two distinct arguments claiming $\neg a$ ($G$ and $D$) which accrue so as to strengthen each other, and so privilege the inference $\neg a$ over $a$. Moreover these arguments accrue in their defense of $B, C$ and $D$.

Notice that explicit preferences over arguments take precedence over the implicit preference yielded by accruing arguments and the arguments they defend.
Suppose that $\B$ consists of the two equivalence classes $\B_1$ = $\{a,b,\neg a \vee \neg b\}$ and $\B_2$ = $\{\neg a\}$ (i.e., $\neg a < a,b,\neg a \vee \neg b$) generating the defeat graph $AF2$ in Figure \ref{FigPS2}. The fact that $G \prec A$ now means that $G$ and $D$ no longer accrue in support of $\neg a$ and an implicit preference for $\neg a$, and no longer accrue in defense of $B, C$ and $D$. Now
none of $E1'$, $E2'$ and $E3'$ are subsets of  $112$-preferred extensions, and $\forall X,Y \in \{G, D, B, C, A, E, F\}$,  $X \approx^{\ell mn-preferred}_{AF2} Y$. %\sm{Reinsert footnote ?}

%\footnote{\sm{This analysis also applies if an additional argument for $\neg a$ is obtained from multiple premises. We leave it to the reader to verify that given $\B_1$ = $\{a,b,\neg a \vee \neg b, q, \neg q \vee \neg a\}$ we obtain that $G = (\{q, \neg q \vee \neg a\}: \neg a), Q = (\{q\}: q),
%K = (\{\neg q \vee \neg a\}: \neg q \vee \neg a), D, B, C$ $\succ^{\ell mn-stable}_{AF}$ $A, E, F, J = (\{q,a\}: \neg( \neg q \vee \neg a)), I = (\{a,  \neg q \vee \neg a\}:\neg q)$.  If we then demote $q$ and/or $\neg q \vee \neg a$ to $\B_2$ then all arguments are ranked equally.}}.

%%%%%%%%%%%%%%%%%%%%%%%%%%%%%%%%%%%%%%%%%%%%%%%%%%%%%%

\section{Related Work} \label{Sec:RelatedWork}

We discuss related work and provide a detailed comparison of our semantics based on graded acceptability with some of the approaches proposed in the literature.

\subsection{Approaches to Graduality in Abstract Argumentation}

The existing literature aiming at introducing some form of 'graduality' or 'ranking' in abstract argumentation can roughly be classified in two strands: those introducing a more fine-grained notion of argument status, and which are closely based on Dung's theory; those that depart from Dung's notions of defense, acceptablity and extensions under various semantics, so as to provide a unique ranking on arguments, typically through the use of techniques for value propagation on graphs.

Within the first category, \cite{wu10labelling} pushes the envelope of Dung's theory by showing how standard notions (e.g., belonging to or being attacked by the grounded extension, belonging to some or no admissible sets, and being attacked by some or no admissible sets) suffice to isolate six different statuses of arguments, without introducing an explicit notion of graduality. In \emph{Weighted Argumentation Systems} (\emph{WAS}) \cite{dunne11weighted}, weights are associated with attacks, and Dung extensions are generalised to relax the requirement for conflict-freeness, and allow for extensions whose contained attacks' summative weight does not exceed some predefined `inconsistency budget'. The case where the weight of each attack is 1 (which we refer to as \emph{WAS}$_1$), and in which the inconsistency budget therefore equates with the number of attacks tolerated in an extension, invites comparison with our $\ell 11$ graded semantics accommodating extensions that are not necessarily Dung conflict free (i.e, when $\ell > 1$).\footnote{We will in Section \ref{Sec:Future} suggest future research in which we generalise graded neutrality, defense and semantics to account for weights on attacks.} The first thing to note is that \emph{WAS}$_1$ considers the neutrality of an extension w.r.t. the  number of contained attacks, rather than w.r.t. the number of attacks on any given contained argument. Moreover, we have argued that toleration w.r.t. the co-acceptability of attacking arguments should go hand in hand with toleration w.r.t. weaker forms of defence.
% where the number of attacks on an argument that need to be defended.
However, the implications of relaxing conflict-freeness on the existence and construction of Dung's semantics in \emph{WAS} have yet to be studied in depth (\cite{dunne11weighted} only notes that the existence of a \emph{unique} grounded extension is not guaranteed).

Amongst the graph propagation approaches, we mention the equational approaches that assign a more fine grained ranking to arguments by evaluating fixed points of functions that assign a numerical value to any given argument based on the values of its attackers. In particular, the equational approach of Gabbay and Rodrigues \cite{GabbayR16} who conjecture that their approach yields a unique solution for cyclic graphs, the compensation based semantics of Amgoud et.al. \cite{Amgoud:2016}, which assigns the same ranking to all arguments in cycles and yield a unique solution for cyclic graphs, and the social argumentation approach of Leite and Martin (\textbf{LM}) \cite{conf/ijcai/LeiteM11} (who are concerned with propagating user votes on arguments, but also yield fine grained rankings without recourse to exogenous information).

Other graph propagation approaches that do not use an equational fixed point approach,  typically account for the attack and defense paths terminating in the argument being ranked, where these paths are  sequences of, respectively even and odd, numbers of attacking arguments.
Besnard and Hunter (\textbf{BH}) \cite{BesnardHunter01} rank classical logic arguments in acyclic graphs, through a  \emph{categoriser} function -- $v: \A \rightarrow [0,1]$ defined as $v(a) = \frac{1}{1 + \sum_{b \rightarrow a} v(b)}$ -- that assigns high values to arguments with low-valued attackers (and the maximum value to un-attacked arguments) and low values to arguments with high-valued attackers.
Cayrol and Lagasquie-Schiex (\textbf{CL}) \cite{CLS04} then generalises use of this function to Dung $AF$s to develop (in their terms) a `local approach' to valuation of arguments, and then formalise a `global approach' that they argue gives more intuitive outcomes. Their approach requires a highly involved transformation of cyclic graphs to infinite acyclic graphs. More recently,
Amgoud and Ben-Naim \cite{AmgoudNaim} propose
 two ranking-based semantics, which they call discussion-based (\textbf{AB-d}) and burden-based (\textbf{AB-b}). These semantics are also based on the processing of attack paths and their general applicability relies on conjectures concerning the processing of cyclic paths.

Finally, Matt and Toni (\textbf{MT}) \cite{conf/jelia/MattT08} provide a highly original paradigm for ranking arguments, defining argument strength in terms of the value of a repeated two-person zero-sum strategic game with imperfect information.

%%%%%%%%%%%%%%%%%%%%%%%%%%%%%%%%%%%%%%%%%%

\subsection{Graded Semantics in the Landscape of Ranking-Based Semantics}

%With regard to the above graph propagation approaches, and \textbf{MT}, a detailed and technical %comparison of our approach is not in order, as we rank arguments through a generalisation of the body of %notions and techniques of Dung's theory, such as defense/acceptability and extensions under all the standard %semantics. None of the above approaches develop a comparable generalisation, or provide examples of %logical instantiations of $AF$s to substantiate the intuitions captured by their rankings (as done in Section %\ref{Sec:Applications} of this paper).

Properties of rankings have been proposed by the above works, and a very informative comparison of these approaches---in particular  \textbf{LM}, \textbf{BH}, \textbf{CL} (for acyclic $AF$s), \textbf{AB-d},\textbf{AB-b} and \textbf{MT}---in terms of whether or not these properties are satisfied, has been provided in \cite{bonzon16comparative}. It is instructive to study these properties as they apply to our graded rankings, in part because such a study reveals fundamental distinctions between propagation based and Dung semantics based approaches to evaluation of arguments.

In what follows, we refer to absolute graded rankings $x \succeq^{S} y$  (Definition \ref{definition:degree-semantics}), which are the more naturally comparable with the other existing approaches to graduality.
Unless stated otherwise, we assume $S$ stands for any of the semantics in $\set{\mathit{grounded}, \mathit{preferred}, \mathit{stable}}$.
Moreover, we assume the constraint $n \geq m$ and $l \geq m$. Albeit not essential for the comparison, this assumption streamlines some of the proofs in this section.
%, so that any $lmn$-stable extension is an $l = m = n$ neutral set. We also make use of the following notation.
We now turn to the properties studied in \cite{bonzon16comparative}, and discuss whether our approach satisfies each of them. In doing so we will recall which approaches satisfy each property (writing \textbf{All} if satisfied by all, and \textbf{None} if satisfied by none).

%%%%%%%%%%%%%%%%%%%

\subsubsection{Abstraction and Independence}

\textbf{Abstraction} (\textbf{All}) states that rankings of arguments are preserved by isomorphisms.
 It should be obvious to see that graded rankings satisfy this property.
%that the ranking of arguments should only be based on the attacks between arguments, and is formally expressed by stating that
%given $\Delta = \langle \A, \rightarrow \rangle$, a bijection function $\gamma: \A \mapsto \A'$, and $\Delta' = \langle \A', \rightarrow' \rangle$ such that
%$(x,y) \in \rightarrow$ $\mathit{iff}$ $(\gamma(x),\gamma(y)) \in \rightarrow'$, then $x \succeq^{S}_{\Delta} y$ iff $\gamma(x) \succeq^{S}_{\Delta'} \gamma(y)$.

\smallskip

\textbf{Independence} (\textbf{All}) states that the ranking of $a$ and $b$ should be independent of any argument that is neither connected to $a$ or $b$.
Formally, the connected components of an $AF$ $\Delta$ are the set of largest subgraphs of $\Delta$, denoted $cc(\Delta)$, where two arguments are in the same component iff there is some path of attacks (ignoring the direction of attack) between them. Graded rankings satisfy independence:

\begin{proposition}\label{PropInd}
$\forall \Delta' = (\A',\rightarrow') \in cc(\Delta = (\A,\rightarrow))$, $x, y \in \A'$, and positive integers $\ell$, $m$ and $n$ such that $\ell \geq m$ and $n \geq m$:
\begin{align}
x \succeq^{S}_{\Delta'} y \IMPLIES x \succeq^{S}_{\Delta}  y. \label{eq:independence}
\end{align}
\end{proposition}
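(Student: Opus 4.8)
The plan is to show that for arguments lying in a single connected component $\Delta'$, the graded semantics computed in $\Delta'$ coincide (on the arguments of $\Delta'$) with the graded semantics computed in $\Delta$, so that the justification status of $x$ and $y$ — and hence the ranking $\succeq^S$ — is unaffected by the rest of $\Delta$. The key observation is that attacks never cross between distinct connected components: if $x \in \A'$ and $z \ar x$ in $\Delta$, then $z \in \A'$ as well, and symmetrically $x \ar z$ forces $z \in \A'$. Consequently, for any $X \subseteq \A$ and any $x \in \A'$, whether $x \in \cff{\ell}_\Delta(X)$, $x \in \cfmn{m}{n}_\Delta(X)$, depends only on $X \cap \A'$ and on the attack relation restricted to $\A'$, because Definitions \ref{definition:sensitive} and \ref{definition:sensitive-neutrality} only quantify over attackers of $x$ (and attackers of those attackers), all of which stay within $\A'$. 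This gives the localisation lemma: $\cff{\ell}_\Delta(X) \cap \A' = \cff{\ell}_{\Delta'}(X \cap \A')$ and $\cfmn{m}{n}_\Delta(X) \cap \A' = \cfmn{m}{n}_{\Delta'}(X \cap \A')$.

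First I would make the localisation lemma precise and prove it directly from the two definitions. Second, I would lift it to the graded semantics. For the grounded case this is immediate from Corollary \ref{fact:grounded}: since $\A$ is the disjoint union $\bigsqcup_{\Delta'' \in cc(\Delta)} \A''$, iterating $\cfmn{m}{n}_\Delta$ from $\emptyset$ decomposes componentwise, i.e. $\bigcup_k (\cfmn{m}{n}_\Delta)^k(\emptyset) \cap \A' = \bigcup_k (\cfmn{m}{n}_{\Delta'})^k(\emptyset)$, so $\grn_{\ell mn}(\Delta) \cap \A' = \grn_{\ell mn}(\Delta')$ and $x$ is $\ell mn$-grounded-justified in $\Delta$ iff it is in $\Delta'$. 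For preferred and stable, I would argue that the $\ell mn$-complete (resp. stable) extensions of $\Delta$ are exactly the sets of the form $\bigcup_{\Delta''} E_{\Delta''}$ where each $E_{\Delta''}$ is an $\ell mn$-complete (resp. stable) extension of the component $\Delta''$: the localisation lemma shows such a union is $\ell$-conflict-free and a fixpoint of $\cfmn{m}{n}_\Delta$ iff each piece is; and conversely any such extension $E$ of $\Delta$ restricts to one on each component, with $E = \bigcup_{\Delta''}(E \cap \A'')$. Maximality (for preferred) and the fixpoint-of-$\cff{n}$ condition (for stable) also decompose componentwise. Hence the $S$-extensions of $\Delta$ restricted to $\A'$ range over \emph{exactly} the $S$-extensions of $\Delta'$, so an argument $x \in \A'$ is sceptically (resp. credulously) justified w.r.t.\ $\ell mn$-$S$ in $\Delta$ iff it is in $\Delta'$. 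Applying this to $x$ and $y$ for every triple $\ell, m, n$ with $\ell \ge m$, $n \ge m$, Definition \ref{definition:degree-semantics} gives $x \succeq^S_{\Delta'} y \iff x \succeq^S_{\Delta} y$ — in fact the biconditional, which is stronger than \eqref{eq:independence}.

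The main obstacle I anticipate is the stable case, and more subtly the interaction between the constraint region ($\ell \ge m$, $n \ge m$) and existence: a priori $\Delta''$ for some other component might fail to have a stable extension, which would mean $\Delta$ has none, collapsing the statement "$y$ justified $\Rightarrow$ $x$ justified" into a vacuous truth — that direction is fine, but I must be careful that the \emph{converse} reasoning in establishing $\succeq^S_{\Delta'} \iff \succeq^S_{\Delta}$ does not silently assume existence. Since Proposition \ref{PropInd} only claims the forward implication \eqref{eq:independence}, I can afford to be slightly lossy here: it suffices to show that if $y$ is justified w.r.t.\ $\ell mn$-$S$ in $\Delta$ then $y$ is justified w.r.t.\ $\ell mn$-$S$ in $\Delta'$ (using that the $S$-extensions of $\Delta$ restrict to $S$-extensions of $\Delta'$, and that if there is at least one $S$-extension of $\Delta$ then there is at least one of $\Delta'$), together with the same for $x$ in the other direction — i.e. $x$ justified in $\Delta'$ implies $x$ justified in $\Delta$, which needs that every $S$-extension of $\Delta$ restricted to $\A'$ is an $S$-extension of $\Delta'$ and that $S$-extensions of $\Delta$ exist whenever those of all components do. The sceptical/credulous bookkeeping with empty families of extensions is the only place where care is genuinely required; everything else is a routine componentwise decomposition argument.
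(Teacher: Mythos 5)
Your proposal is correct and follows essentially the same route as the paper's proof: both rest on the observation that attackers (and attackers of attackers) of any $x \in \A'$ lie entirely within $\A'$, so the graded defense and neutrality functions localise to components, the grounded extension is handled by induction on the iteration from $\emptyset$, preferred and stable extensions of $\Delta$ restrict to (and are recovered from) those of the components, and the stable case with $\stb_{\ell mn}(\Delta) = \emptyset$ is dispatched vacuously. Your componentwise product decomposition of the full set of extensions is a mildly more structural packaging of the paper's two-directional restriction argument, but the content is the same, including the care you take over existence in the stable case.
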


\begin{proof}
To prove \eqref{eq:independence} we establish the following diagram of implications. For any integers $\ell$, $m$ and $n$ such that $\ell \geq m$ and $n \geq m$, for any extension
$S \in \{\mathit{grounded}, \mathit{preferred}$, $\mathit{stable}\}$ such that $S_{\ell mn}(\Delta) \neq \emptyset$:\footnote{This condition is obviously relevant only for $S = \mathit{stable}$.}
\begin{center}
\begin{tabular}{cccr}
$y \in \bigcap S_{\ell mn}(\Delta')$ & $\IMPLIES$ & $x \in \bigcap S_{\ell mn}(\Delta')$ & Assumption \\
$\Updownarrow$                           &                  &  $\Updownarrow$                          & \fbox{Claim} \\
$y \in \bigcap S_{\ell mn}(\Delta)$ & $\IMPLIES$ & $x \in \bigcap S_{\ell mn}(\Delta)$ & Conclusion
\end{tabular}
\end{center}
for $x, y \in \Delta'$.
%Assume $x \succeq^{S}_{\Delta'} y$.
We need to establish the equivalences denoted \fbox{Claim} in the above diagram, for each of the semantics. Firstly note that, clearly, since $\Delta' \in cc(\Delta)$:\footnote{Recall notation introduced in Definition \ref{definition:attack_graph}.}
\begin{equation}\label{PropIndEq}
\forall x \in \Delta' : \overline{x}_{\Delta'} = \overline{x}_\Delta \AND \overline{\overline{x}}_{\Delta'} = \overline{\overline{x}}_\Delta
\end{equation}

\fbox{$S = \mathit{grounded}$}
 It suffices to show that
 $\forall lmn$ satisfying the constraint we have that  $\grn_{\ell mn}(\Delta') = \grn_{\ell mn}(\Delta) \cap \A'$. We proceed by induction.\footnote{To illustrate the argument we use here standard induction, which to be precise works only for the case in which $\Delta$ is finitary. For the general case one needs to generalize the argument in the obvious way to transfinite induction. Cf. Remark \ref{remark:ordinal}.} By Corollary \ref{fact:grounded} $\grn_{lmn}(\Delta) = \bigcup_{0 \leq k < \omega} \cfmn{m}{n}^k(\emptyset)$. For the base case, clearly $\cfmn{m}{n}^{\Delta'}(\emptyset) = \cfmn{m}{n}^{\Delta}(\emptyset) \cap \A'$. Assume then that for $i > 1$ $(\cfmn{m}{n}^{\Delta'})^i(\emptyset) = (\cfmn{m}{n}^{\Delta})^i(\emptyset) \cap \A'$ (IH). By \eqref{PropIndEq} and IH it immediately follows that the same claim holds for $i +1$ and therefore, in the limit, $\grn_{\ell mn}(\Delta') = \grn_{\ell mn}(\Delta) \cap \A'$ as desired.

\fbox{$S = \mathit{preferred}$}
First of all, one can straightforwardly show that $\forall E$, if $E \in \prf_{\ell mn}(\Delta)$ then $E \cap \A' \in \prf_{lmn}(\Delta')$. So for any $lmn$-preferred extension in $\Delta$ there is a corresponding $lmn$-preferred extension in $\Delta'$. To establish the claim it then suffices to prove the converse, that is:
\begin{equation} \label{PropIndEq2}
  \forall E' \in \prf_{lmn}(\Delta'), \exists E \in \prf_{\ell mn}(\Delta) \ST E' = E \cap \A'.
\end{equation}
To prove \eqref{PropIndEq2} an inductive argument similar to the above one for $S = \mathit{grounded}$ can be used by exploiting  Fact \ref{fact:prf}, for each $\ell m n$-preferred extension. The claim then follows again from \eqref{PropIndEq}.

\fbox{$S = \mathit{stable}$}. Assume $\stb_{\ell mn}(\Delta) \neq \emptyset$. It follows that  $\stb_{\ell mn}(\Delta') \neq \emptyset$. Then the same argument used for the case of preferred extensions applies to prove \fbox{Claim}. If instead, $\stb_{\ell mn}(\Delta) = \emptyset$ then it trivially holds that $x \in \bigcap \stb_{\ell mn}(\Delta) \Leftrightarrow y \in \bigcap \stb_{\ell mn}(\Delta)$, no matter whether $y \in \bigcap \stb_{\ell mn}(\Delta') \Rightarrow x \in \bigcap \stb_{\ell mn}(\Delta')$. This suffices to establish \eqref{eq:independence} for $S = \mathit{stable}$.
%Assume
%for some $l$, $\forall E \in st_{l}(\Delta)$  (i.e., each $E$ = $n_{\ell}(E)$), $y \in E$, and choose any such $E$. We need to show $x \in E$.
%Let $E' \in $st_l(\Delta')$, $y \in E'$. By assumption of $x \succeq^{S}_{\Delta'} y$, $x \in E'$.
%Given Eq. \ref{PropIndEq}, it is straightforward to show that  $E' = E \cap \A' \in st_{l}(\Delta')$.
%Now suppose some $E'' \in st_{l}(\Delta')$, $y \notin E''$.  But then by Eq. \ref{PropIndEq}, it is straightforward to show that $\exists E \in st_{l}(\Delta)$ s.t. $E'' = E \cap \A'$, and so $y \notin E$\footnote{Notice that %this is straightforwardly shown only if one can guarantee existence of an $l$-stable extension of $\Delta$, which in this case is true by assumption.}; contradiction. Hence $\forall E'' \in  st_{l}(\Delta')$, $y \in E''$. By %assumption of $x \succeq^{S}_{\Delta'} y$, we can now conclude $x \in E'$, and so $x \in E$.
\end{proof}
%Suppose $x \succeq^{S}_{AF'} y$, and (for contradiction) $x \nsucceq^{S}_{AF}  y$. Hence for some $l, m , n$, $y$ is $lmn$-S justified but $x$ is not $lmn$-S justified

An inspection of the proof of Proposition \ref{PropInd} should show, however, that the claim can be strengthened to $\forall \Delta' \in cc(\Delta)$: $x \succ^{S}_{\Delta'} y$ $\Rightarrow$ $x \succ^{S}_{\Delta}  y$, only for
$S \in \{\mathit{grounded},$ $\mathit{preferred}\}$. This stronger formulation fails for $S = \mathit{stable}$ because of the well known non-existence of Dung stable extensions due to arguments in an odd cycle `contaminating' unrelated arguments. For example, in
\begin{quote}
\hspace{-5mm}$\Delta = \langle \{a,b,c\}, \{a \rightarrow a, b \rightarrow c \}\rangle$ and $\Delta' \in cc(\Delta) = \langle \{b,c\}, \{b \rightarrow c \}\rangle$
\end{quote}
we have that $b \succ^{stable}_{\Delta'}  c$ since $b$, but not $c$, is in the single $111$-stable---i.e., Dung stable---extension. However for $\Delta$, there are no $\ell$, $m$ and $n$ such that $b$, but not $c$, is justified under $\ell m n$-stable semantics.
%\sm{Do we need the following two statements ?} The same holds for any $11n$-stable semantics for $n > 1$. On the other hand, it is easy to see that there is no $\ell m n$-stable extension with $\ell, m > 1$ that contains $b$ but not $c$.

\smallskip

So with respect to the properties of abstraction and independence, graded rankings behave in line with most existing approaches to graduality, with some important differences when the underlying semantics is assumed to be the stable one.

%%%%%%%%%%%%%%%%%%%%%%%%%%%%%%%%%%%

\subsubsection{Void Precedence and Self Contradiction}

The above example also illustrates why \textbf{Void Precedence} (\textbf{All})---any non-attacked argument is ranked strictly higher than any attacked argument---is also not satisfied by graded rankings under stable semantics, as $b \approx^{\mathit{stable}}_{\Delta}  c$. However, the situation is more positive under grounded and preferred semantics, for which an---arguably---natural weakening of the property holds.

We need some auxiliary notation. Define the variant of the graded defense function (Definition \ref{definition:sensitive}) that requires an infinity of defenders for any attack: $\cfmn{1}{\infty} = $ $\{x \in \A \mid$ $\forall y \ST y \ar x, \exists^\infty z \in X \ST z \ar y\}$. So this function outputs, for any set, the set of arguments whose attackers are counter-attacked by an infinity of arguments in that set. Consider now the smallest fixpoint $\lfp. \cfmn{1}{\infty}$ of such a function. We can refer to this as the $11\infty$-grounded extension. Observe right away that if the underlying framework is finitary, then $\lfp. \cfmn{1}{\infty}$ is equal to the set of unattacked arguments in that framework.  We have the following result:

\begin{proposition}
Let $\Delta = \langle \A, \rightarrow \rangle$ be an $AF$ and let $x \in \A$ be s.t. $x \in \lfp. \cfmn{1}{\infty}^\Delta$. Then $\forall y \in \A \backslash \lfp. \cfmn{1}{\infty}^\Delta$ s.t. $\overline{y} \neq \emptyset$:  $x \succ^{S}_{\Delta}  y$, for $S \in \set{\mathit{grounded},\mathit{preferred}}$.
\end{proposition}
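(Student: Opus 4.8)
The plan is to establish the two halves of $x \succ^{S}_{\Delta} y$ separately: (a) that $x \succeq^{S} y$, which I will obtain from the stronger fact that $x$ is sceptically justified under \emph{every} graded semantics; and (b) that $y \succeq^{S} x$ fails, for which I need to produce one parameter triple $(\ell,m,n)$ at which $y$ is excluded from all the relevant extensions while $x$ is not.

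For (a), I would first record a lemma valid for arbitrary frameworks: $\lfp.\cfmn{1}{\infty}$ is contained in every $\ell m n$-complete extension. The function $\cfmn{1}{\infty}$ is monotone, so it has a least (pre-)fixpoint; and if $E$ is $\ell m n$-complete then $\cfmn{1}{\infty}(E) \subseteq \cfmn{m}{n}(E) = E$, because an argument each of whose attackers is counter-attacked by infinitely many members of $E$ has no ``undefended'' attacker (and $0 < m$), while each of its attackers is counter-attacked by $\infty \geq n$ members of $E$. Hence $E$ is a pre-fixpoint of $\cfmn{1}{\infty}$, so $\lfp.\cfmn{1}{\infty} \subseteq E$. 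Under the standing assumption $n \geq m$ and $\ell \geq m$ the $\ell m n$-grounded extension exists and is $\ell m n$-complete, and every $\ell m n$-preferred extension is $\ell m n$-complete; therefore $x \in \lfp.\cfmn{1}{\infty}$ forces $x \in \grn_{\ell m n}(\Delta)$ and $x \in E$ for every $E \in \prf_{\ell m n}(\Delta)$, for all $\ell,m,n$. So $x$ is sceptically justified w.r.t.\ $\ell m n$-$S$ for every $\ell,m,n$ and every $S \in \set{\mathit{grounded},\mathit{preferred}}$, and $x \succeq^{S} y$ holds vacuously.

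For (b) I would work in the finitary setting, where (by the observation preceding the statement) $\lfp.\cfmn{1}{\infty}$ is precisely the set of unattacked arguments, so the hypothesis on $y$ reduces to $\overline{y} \neq \emptyset$. Fix some $b \in \overline{y}$; since $\Delta$ is finitary, $\overline{b}$ is finite, so set $n^{\ast} = |\overline{b}| + 1$ and take $\ell = m = 1$ (note $n^{\ast} \geq m$ and $\ell \geq m$). Then for \emph{every} $W \subseteq \A$ we have $|\overline{b} \cap W| \leq |\overline{b}| < n^{\ast}$, so $b$ is an attacker of $y$ that is never counter-attacked by $n^{\ast}$ members of $W$, which with $m = 1$ gives $y \notin \cfmn{1}{n^{\ast}}(W)$ for all $W$. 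Consequently $y$ lies in no fixpoint of $\cfmn{1}{n^{\ast}}$, hence in no $1\,1\,n^{\ast}$-complete extension, so $y \notin \grn_{1,1,n^{\ast}}(\Delta)$ and $y \notin \bigcup \prf_{1,1,n^{\ast}}(\Delta)$: $y$ is not even credulously justified w.r.t.\ $1\,1\,n^{\ast}$-$S$, whereas $x$ is, by (a). Hence $y \succeq^{S} x$ fails, and together with $x \succeq^{S} y$ this yields $x \succ^{S}_{\Delta} y$.

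The monotonicity/fixpoint bookkeeping is routine; the load-bearing step is the choice of $n^{\ast}$, which is exactly where finitariness is used essentially, since it lets me bound the in-degree of an attacker of $y$. In a non-finitary framework the integer-parameter functions $\cfmn{1}{n}$ need not approximate $\cfmn{1}{\infty}$ from above, and an argument outside $\lfp.\cfmn{1}{\infty}$ can still survive in every $\grn_{\ell m n}(\Delta)$; so I would either restrict the statement to finitary frameworks, or handle the general case by iterating $\cfmn{1}{\infty}$ transfinitely, which is the genuinely harder route.
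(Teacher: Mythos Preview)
Your proof is correct and follows essentially the same strategy as the paper's: exhibit a triple $(1,1,n)$ with $n$ large enough that some attacker of $y$ can never be counter-attacked $n$ times, so $y$ drops out of every $1\,1\,n$-complete extension. The paper picks $n = 1 + \max\{\,|\overline{z}| : z \in \overline{y}\,\}$, whereas you fix a single attacker $b$ and take $n^\ast = |\overline{b}| + 1$; both choices do the job, and yours is in fact the tighter one.

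Your treatment is more complete in two respects. First, you spell out part~(a)---that $\lfp.\cfmn{1}{\infty}$ sits inside every $\ell m n$-complete extension because each such extension is a pre-fixpoint of $\cfmn{1}{\infty}$---which the paper leaves implicit. Second, you are explicit that the choice of $n^\ast$ needs $|\overline{b}| < \infty$, and hence that finitariness (or an analogous finiteness condition) is where the argument bites; the paper's proof asserts ``$\overline{\overline{y}}$ is finite'' without justification, which does not follow from $y \notin \lfp.\cfmn{1}{\infty}$ in an arbitrary framework, so your caution there is well placed.
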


\begin{proof}
Let $y \in \A \backslash \lfp. \cfmn{1}{\infty}^\Delta$. Then $\overline{y} \neq \emptyset$ and $\overline{\overline{y}}$ is finite. Then let $k = \max \set{| \overline{z} | \mid z \in \overline{y}}$. It follows that there exists no $11k+1$-admissible (and hence grounded or preferred) extension which contains $y$. Hence, if $x \in \lfp. \cfmn{1}{\infty}^\Delta$ then $x \succ^{S}_{\Delta}  y$.
\end{proof}
A direct consequence of the proposition is that, if $\Delta$ is finitary, the claim simplifies to the standard formulation of void preference, that is: $\forall x \in \A$ s.t. $\overline{x} = \emptyset$, then $\forall y \in \A$ s.t. $\overline{y} \neq \emptyset$:  $x \succ^{S}_{\Delta}  y$, $S \in \set{\mathit{grounded},\mathit{preferred}}$.

\smallskip

Finally, and related to the above discussion, the property of  \textbf{Self Contradiction} (\textbf{MT}), which states that a self-attacking argument is ranked strictly lower than any non self-attacking argument,  is also not satisfied by our approach. Consider $\Delta$ =  $\langle \{a,a1,a2,b\}$, $\{a \rightarrow a , a1 \rightarrow b, a2 \rightarrow b\} \rangle$, where $a$ is in all $222$-admissible (and hence $222$-preferred) extensions, whereas  $b$ is not in any $222$-admissible extension, and so $b \nsucceq^{\mathit{preferred}}_{\Delta}  a$.

\medskip

So with respect to the properties of void precedence and self contradiction, graded rankings behave essentially in line with most existing approaches to graduality. Some differences arise with respect to void precedence, when the underlying semantics is assumed to be the stable one and when frameworks are not finitary, in which case graded rankings cannot distinguish between unattacked arguments and arguments whose attackers are recursively counter-attacked by an infinity of defenders ($\lfp. \cfmn{1}{\infty}$).

%%%%%%%%%%%%%%%%%%%%%%%%%%%%%%%%%%%%

\subsubsection{Further Postulates}

\textbf{Cardinality Precedence}  (\textbf{AB-d},\textbf{AB-b}) states that if the number of attacks on $b$ is strictly greater than the number of attacks on $a$, then $a$ is ranked strictly higher than $b$. Like \textbf{LM}, \textbf{BH}, \textbf{CL} and \textbf{MT}, our approach does not satisfy this postulate. This is because our ranking also accounts  for the number of defenders, as witnessed by the incomparability of $a3$ and $a4$ in Figure \ref{Motivating1}, and the fact that $a3 \nsucceq^{grounded}_{AF}  a4$ and $a4 \nsucceq^{grounded}_{AF}  a3$ (recall Example \ref{Ex:DEJ}). It is worth remarking however that, as discussed in Section \ref{Sec:ExtendingPartialOrder}, our ranking could be further refined to a lexicographic ordering giving precedence to the minimization of attackers, that would then satisfy the cardinality precedence postulate.

\smallskip

\textbf{Quality Precedence} (\textbf{None}) states that if there is an attacker of $b$ that is ranked strictly higher than all attackers of $a$, then $a$ is ranked strictly higher than $b$. The principle is not satisfied by our approach. We leave it to the reader to verify that given $e \rightarrow c \rightarrow b$, $e1 \rightarrow d1 \rightarrow a$, $e2 \rightarrow d1$, $e3 \rightarrow d2 \rightarrow a$, $e4 \rightarrow d2$, then although
each of $d1$ and $d2$ are ranked lower than $c$, both $a$  and $b$ are incomparable.
%\sm{What if we use lexicographic to privilege number of defenders ?}.

\begin{figure}[t]
\begin{center}
\includegraphics[scale=0.6]{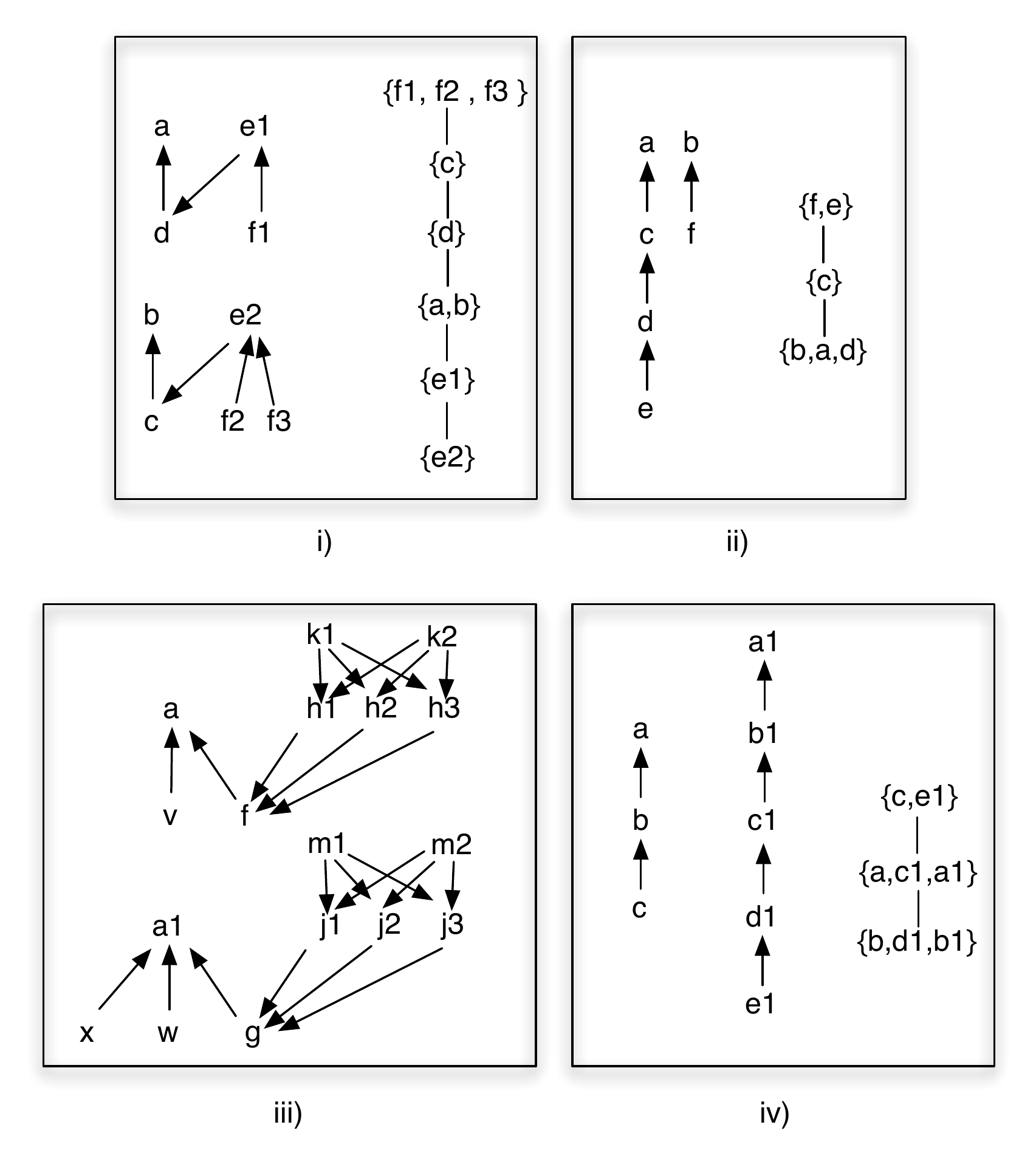}
\end{center}
\caption{Frameworks and graded rankings referenced in the discussion of postulates for ranking based semantics reviewed in \cite{bonzon16comparative}.}
\label{SCTCounterExample}
\end{figure}

\smallskip

\textbf{Strict Counter-transitivity} (\textbf{LM}, \textbf{BH}, \textbf{AB-d},\textbf{AB-b}) states that if the attackers of $b$ are strictly more than those of $a$ (i.e., cardinality precedence), or
the number of attackers of each are the same, but at least one attacker of $b$
is ranked strictly higher than at least one attacker of $a$, and not vice versa,
then $a$ is ranked strictly higher than $b$. Like \textbf{CL} and \textbf{MT}, graded rankings  do not satisfy this property. Consider the  $AF$ and ranking of arguments in Figure \ref{SCTCounterExample}i), where $a$ and $b$ both have one attacker (respectively $d$ and $c$). Then $c \succ^{S}_{AF} d$, but $a \nsucc^{S}_{AF}  b$. In fact,  $a \approx^{S}_{AF}  b$. With respect to this, it is worth observing the following. By virtue of our generalisation of Dung semantics, analysis of any individual argument under graded semantics is inherently bound up in the analysis of sets of (co-acceptable) arguments (recall the discussion in Section \ref{Sec:GradedAcceptabilityIntuitions}). Now, observe that the strongest defense needed to accept the defender $e1$ of $a$, and that then suffices
to defend $a$, is when $m$ = $2$ (since $e1$ is not defended). But then any such standard of defense also accepts $b$ (since the attack by $c$ need not then be defended).

%This reflects the fundamental distinction between our generalisation of Dung's theory and the propagation-%based approaches, in  that the latter propagate the rankings of arguments down through chains of attacks, so %that the higher ranking of $e1$ over $e2$ propagates so as to discriminate in favour of $a$ over $b$.

%However, we conjecture that  our approach can be extended to capture propagation based intuitions. For %example, we can refine a given graded ranking so that it satisfies
%strict counter-transitivity. To illustrate, we can note that since $c$ is ranked above $d$ in the ranking in Figure %\ref{SCTCounterExample}i,
%then the ranking should be refined to favour $a$ over $b$.

\smallskip

\textbf{Defense Precedence} (\textbf{LM}, \textbf{BH}, \textbf{AB-d},\textbf{AB-b})  states that for two arguments with the same number of attackers, a defended argument is ranked strictly higher than a non-defended argument. Again, like \textbf{CL} and \textbf{MT}, this property is not satisfied by graded rankings, as illustrated in Figure  \ref{SCTCounterExample}ii), where the defended $a$ and undefended $b$  have the same ranking.
We again see that
 the analysis of any individual argument is inherently bound up in the analysis of sets of (co-acceptable) arguments. In this example, the strongest defense needed to accept $d$ and that suffices to accept $a$, is when $m = 2$ (since $d$ is undefended). But then this suffices to accept $b$.

%However note that as described above, we can further refine the graded ranking so as to enforce strict %counter-transitivity, which in this example implies satisfaction of defense precedence (although it is an open %question as to whether this is the case in general):
%since $b$'s attacker $f$ is strictly ranked higher than $a$'s attacker $c$, then $b$ is ranked strictly below $a$ %in the refined ranking.
%That is, if $a$ is justified under $111$-S semantics and $b$ is not justified under $111$-S semantics, and $a$ and $b$ have the same number of attackers, then $a \succ^{S}_{AF} b$.
%\footnote{Closely related is \textbf{Distributed Defense Precedence} (\textbf{AB-b}), which states that if $a$ %and $b$ both have the same number of attackers and defenders, but the number of undefended attacks on %%%$b$ is strictly greater than the number of undefended attacks on $a$, then $a$ is ranked strictly higher than $b%%$. This is also not satisfied by graded rankings.}
%Both these properties are satisfied by graded rankings. \sm{Prove}.\footnote{The assumption that $a$ and $b$ have the same number of attackers is critical since only the number of defenders can be used to arbitrate between $a$ and $b$, and so the fact that all of $a$'s attacks are defended, in contrast to $b$ (respectively that $b$ has more undefended attacks than $a$), means that there will be $lmn$-S extensions that contain $a$ and such that there exist no $lmn$-S extensions containing $b$, but not vice versa.}.

\smallskip

Cayrol and Lagasquie-Schiex \cite{CLS04} propose a number of properties that relate to the addition and extension of attack/defense paths.\footnote{These properties, referencing addition of and extension (increase in) paths, assume that the arguments in the additional (extension to the) path are disjoint from the existing arguments.} Improving the ranking of an argument by \textbf{Strict Addition of a Defense Path}\footnote{We use the term `path' rather than the term `branch' used in  \cite{bonzon16comparative}.} (\textbf{None}) is clearly not satisfied by graded rankings. Indeed, the reverse is the case: $a$ is ranked higher when un-attacked than when attacked and then defended (as discussed in Example \ref{ExReinstatement}). Restriction of this property to the case where a defense path is added only to an argument that is already attacked, is satisfied only by \textbf{CL}, and is not satisfied by graded rankings.

%by any
% of the approaches studied in \cite{bonzon16comparative}, apart from \cite{CLS04} who satisfy only the restricted case. An example of the latter, if satisfied when understood as applying to distinct arguments $a1$ and $a2$ in the $AF$ $\langle c_1 \rightarrow b_1 \rightarrow a1$, $c_2 \rightarrow b_2 \rightarrow a1$, $c \rightarrow b \rightarrow a \rangle$, would mean that $a1$ is ranked strictly higher than $a$. Graded rankings would yield the reverse strict ranking.

Satisfaction of \textbf{Addition of an Attack Path} (\textbf{All}) means that the ranking of an argument $a$ is degraded when adding an attack path terminating in $a$, and
 equates with $a$ being ranked strictly higher than $a1$ in the $AF$
 $\langle c \rightarrow b \rightarrow a,  c1 \rightarrow b1 \rightarrow a1, x_n \rightarrow \ldots \rightarrow x1 \rangle$, where $x_n \rightarrow \ldots \rightarrow x_1$ is a path of attacks s.t. $x_1 = a_1$ and $n \in 2 \mathbb{N}$. This property is in general not satisfied by graded rankings. Consider the example $\Delta$ in Figure \ref{SCTCounterExample}iii). The strongest iterated defense from $\emptyset$ (and hence the `strongest' $lmn$-grounded extension) that includes $a$ is when $l = m = n = 3$. This defense also includes $a1$. Observe that: $\cfmn{3}{3}(\emptyset)$ = $E_0 = \{k1,k2,h1,h2,h3,a,v,m1,m2,j1,j2,j3,x,w\}$; $\cfmn{3}{3}(E_0)$  = $E_1 = E_0 \cup \{a1\}$; $\cfmn{3}{3}(E_1)$ = $E_1$ is the $333$-grounded extension. Indeed, there is no $lmn$-grounded extension that contains $a$ and not $a1$. Intuitively, if $m = 2$ ($n \geq 2$), then the defense of $a$ requires at least two of $h1,h2$ and $h3$ (since $v$ is unattacked and so $f$ must be defended against by $n \geq 2$ arguments), but then a defense of any $h_i$ requires that $m = 3$ (since each $h_i$ is attacked by  two unattacked arguments). Hence inclusion of $a$ in an $lmn$ grounded extension requires that $m \geq 3$. But then any such defense allows an additional attack path terminating in $a1$, while accommodating $a1$ under any such standard of defense.
 Put briefly, and recalling our discussion of strict counter-transitivity and defense precedence,
 it is the standard of defense met by $h1,h2$ or $h3$ that determines the strongest defense accommodating $a$, and this standard of defense allows for an additional attack path on $a1$.

 %However, notice that  $a \succ^{preferred}_{AF}  a1$ \footnote{We leave to the reader to verify that all $333$-preferred extensions contain $a$, but there is at least one $333$ preferred extension that does not contain $a1$, namely: $\{\}$), and a conjecture we will explore in future work, is that addition of an attack path is satisfied by graded rankings under the preferred semantics.

\smallskip

 The two properties, \textbf{Increase of an Attack Path} (\textbf{All} except \textbf{MT}) and \textbf{Increase of a Defense Path} (\textbf{All} except \textbf{MT}) are distinctive of the propagation based approaches whereby the ranking of arguments is propagated down chains of attacks.  Neither is satisfied by graded rankings. The former equates with  $b1$ being ranked strictly higher than $b$, and  the latter equates with $a1$ being ranked strictly lower than $a$, in the $AF$ in Figure \ref{SCTCounterExample}iv).
 %$\langle c \rightarrow b \rightarrow a,   e1 \rightarrow d1 \rightarrow c1 \rightarrow b1 \rightarrow a1  \rangle$.
We have that   $b \approx^{S}_{AF} b1$ and $a \approx^{S}_{AF} a1$. Once again,
the strongest standard of defense needed to accept $a1$ is that needed to defend $c1$, which in turn has the same ranking as $a$.

%However, we again suggest that graded rankings can be further refined so as to satisfy these properties, e.g., %by noting that $b1$'s immediate attacker $c1$, is ranked strictly lower than $b$'s immediate attacker $c$, %implying a refined ranking of $b1$ strictly higher than $b$, which in turn means that $a1$'s immediate %attacker ($b1$) is now ranked strictly higher than $a$'s immediate attacker ($b$), so implying that $a1$ is %now strictly lower than $a$.

 \smallskip

 To the above discussed properties, \cite{bonzon16comparative} also proposes that all arguments can be compared according to their ranking, which is clearly not satisfied by graded rankings (cf. discussion in Section \ref{Sec:ExtendingPartialOrder}), and that all non-attacked arguments have the same ranking, which is obviously satisfied by our approach.
 %\footnote{\cite{bonzon16comparative} also propose the `attack versus full defense' property, but only for %acyclic graphs.}.

%%%%%%%%%%%%%%%

\subsubsection{Discussion}

We briefly comment on the above analysis and its positioning of our proposal within the growing field of ranking-based semantics.
%There are a few conclusions that can be drawn from the analysis.
%It is worth briefly commenting on it.
Like all existing approaches, our rankings satisfy key properties such as abstraction and independence, as well as void precedence, albeit with some interesting caveats concerning graded stable extensions and the finitariness condition. With respect to other postulates, rankings based on graded acceptability tend to behave differently from approaches based on the propagation idea (and appear to behave more closely to \textbf{MT} which is also underpinned by intuitions different from propagation), for reasons inherent to the way graded semantics generalise standard Dung acceptability. To recap, we recall
the discussion in Section \ref{Sec:GradedAcceptabilityIntuitions} where we emphasise that our graded theory of acceptability aims at capturing a notion of graduality while at  the same time retaining the Dungian focus on  {\em sets of} arguments, rather than individual arguments, as the units of analysis. Hence, as our above analysis repeatedly shows, it is the defense of co-acceptable defenders of an  argument $x$ that determines the strongest defense needed to accept (and hence determine the ranking of) $x$. We further illustrate this point with a variation of Section \ref{SecGradedSchCQ} 's examples of graded semantics applied to frameworks instantiated by schemes and critical questions.

\begin{figure}[t]
\centering
\includegraphics[width=3.7in]{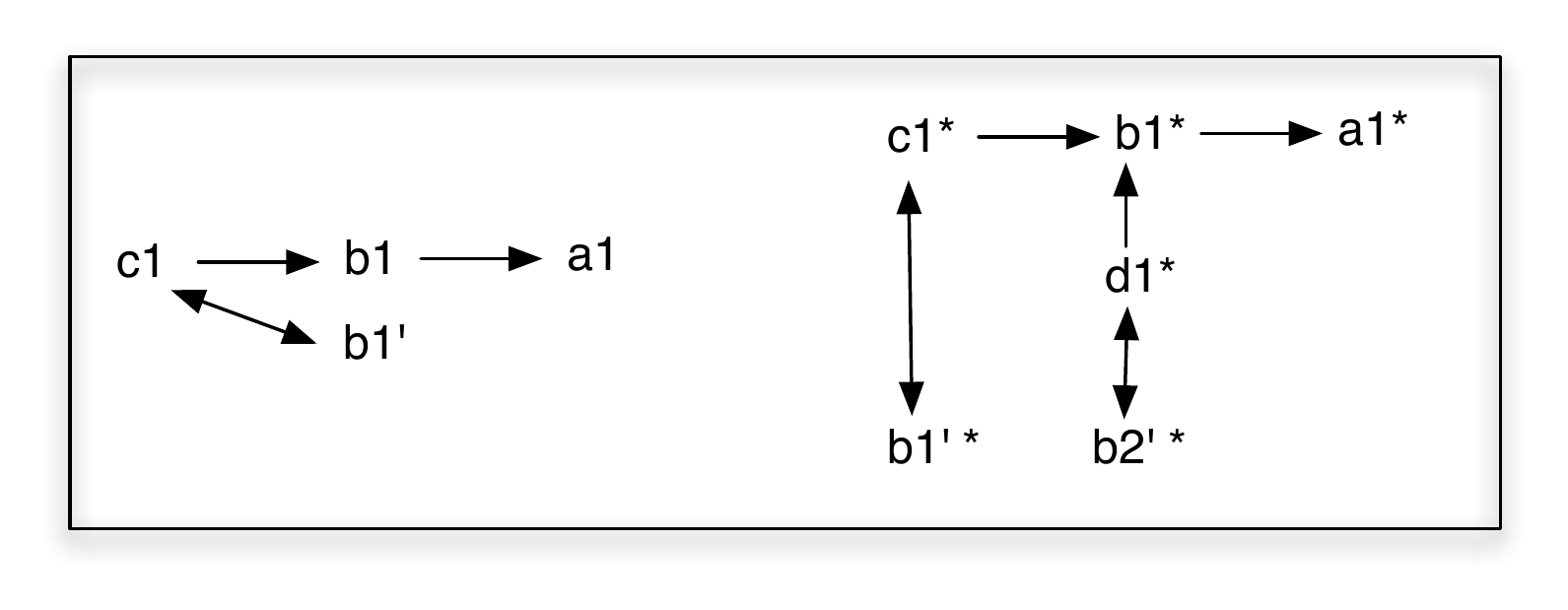}

\caption{$AF$  in which $a1^*  \approx^{\emptyset} a1$.  \label{SchCQAFs2}}

\end{figure}

\begin{example}\label{ExReinstatement2Diff}
Suppose the $AF$ $\Delta$ that includes the arguments $A_1, B_1$ and $C_1$  in Figure \ref{Arguments}, and the $AF$ $\Delta'$  that includes the additional argument $D_1$ that concludes $\neg h$ (i.e., an argument addressing \emph{AEO}'s critical question AEO2, by claiming that  Chilcott is not an expert in the domain of Blair's conduct in the Iraq war\footnote{For example, because Chilcott was not given access to the relevant documents reporting on Blair's deliberations.}). Notice that unlike Example \ref{ExReinstatement2Same}, $D_1$ undermine attack $B_1$ on a different sub-argument $B_2' = [h]$ to that (i.e., $B_1'$) undermine attacked by $C_1$. We now also have  $D_1 \leftrightarrow B_2'$. Now, $A_1$ is not then strengthened upon inclusion of $D_1$.  Abstractly, this equates with the ranking $a1^*  \approx^S a1$ in the AF in Figure \ref{SchCQAFs2}. Intuitively, the strongest defense needed to accept a defender (either  $c1^*$ or $d1^*$) of $a1^*$, and that then suffices
to defend $a1^*$, is when $m$ = $1$ and $n$ = $1$. But then this is also the strongest defense needed to accept $a1$. Contrast this with Example  \ref{ExReinstatement2Same} in which $C_1$ and $D_1$ accrue in support of the same conclusion $\neg k$ so that both  undermine attack $B_1$ on $B_1'$. Recall that this equates with $a1^*  \succ^S a1$ in the AF in Figure \ref{SchCQAFs}iii), in which the strongest defense needed to accept a defender (either  $c1^*$ or $d1^*$) of $a1^*$ that then suffices
to defend $a1^*$, is when $m$ = $1$ and $n$ = $2$. This standard of defense does not suffice to accept $a1$ in this example. Finally, notice that in the contextual approach to rankings, we do have that  $a1^*  \succ^{\set{c_1, c_1^*, d_1^*}} a1$ in the AF in Figure \ref{SchCQAFs2}. That is, if we commit to $\set{c_1, c_1^*, d_1^*}$ (the context), $a_1^*$ is then ranked higher than $a1$.
\end{example}

All in all, our analysis shows that rankings based on graded acceptability yield an original combination of postulates with respect to existing approaches, and show how a graded generalisation of Dung semantics can contribute a rich perspective to the ranking-based semantics programme.

\smallskip

Finally, we observe that in contrast to the above described graph propagation approaches, we have (in Section \ref{Sec:Applications} and in the above example) studied our notions of graduality and rankings as they apply to instantiated argumentation.
In particular, we have studied well-known instantiations of Dung's theory yielded by the use of schemes and critical questions, and classical logic instantiations that yield dialectical characterisations of non-monotonic inference. We believe that such a study is an important complement to the use of postulates in evaluating the intuitions captured by proposals for graduality and rankings. It is with this aim in mind that Section \ref{Sec:Applications} seeks to substantiate the intuitions captured by our graded generalisation of Dung's theory.

%
%\sm{In contrast with graph propagation approaches, we rank arguments through
%a generalisation of the body of notions and techniques of Dung?s theory, such as
%defense/acceptability and extensions under all the standard semantics. Hence, unlike the graph propagation approaches, we have been able to study
%standard instantiations
%of Dung's theory, but now under the graded semantics, and show how these instantiations substantiate the intuitions captured by the graded rankings. Moreover, as suggested by application of the  graded semantics to argumentative formalisations of preferred subtheories (Section \ref{Sec:GradedPrefSubtheories}), and as
%discussed in Section \ref{Sec:Future}, the more general graded semantics offers novel insights and directions for future development of
%argumentative characterisations of non-monotonic inference relations.}

%%%%%%%%%%%%%%%%%%%%%%%%%%%%%%%%%%%%%%%%%%%%%%%%%%%%%%

\section{Conclusions and Future Work}\label{Sec:Future}

\paragraph{Summary}

This paper has presented three contributions. First, we have generalised Dung's standard semantics by providing graded variants of the classic extensions studied in abstract argumentation, and studied their fixpoint-theoretic behavior. In doing so, the paper also provided a comprehensive exposition of the fixpoint theory of standard Dung's semantics (Section \ref{Sec:Background}) which, to the best of our knowledge, was not yet available in the literature.

The resulting graded semantics for abstract argumentation has then been shown to enable a simple way of ranking arguments according to how strongly they are justified under different graded semantics. We showed how this enables an arguably natural way of arbitrating, in abstract argumentation, among justified arguments that are credulously justified under the standard Dung semantics, without recourse to exogenous information. Such rankings have then also been applied to instantiated Dung frameworks, specifically via \emph{ASPIC+} formalisations of `human orientated' argumentation encoded through the use of argument schemes and critical questions, and dialectical characterisations of non-monotonic inference defined by Brewka's Preferred Subtheories. In so doing, we have also shown how the graded semantics accounts for the mutual strengthening of arguments with the same conclusion (i.e., the accrual of arguments) within the Dung paradigm.

Finally, the novel rankings have been thoroughly compared with existing approaches to ranking-based semantics, allowing us to highlight the similarities and core differences between rankings based on graded semantics and other influential approaches.

%%%%%%%%%%%%%%%%%%%%%%%%%%%%%%%%%%%

\paragraph{Future work}
The paper has aimed at establishing foundations for a graded generalisation of Dung's argumentation theory. Based on our results, many natural avenues for future research present themselves. We confine ourselves to mentioning those that in our opinion are the most promising.

%%%%%%

\smallskip

We observed in Section \ref{Sec:ExtendingPartialOrder} that the rankings enabled by our graded semantics are, without further assumptions, partial. The ensuing theory of graded rankings is therefore less committal than existing proposals in the ranking semantics literature, and could therefore be extended to accommodate further pre-theoretical intuitions. For example, some application domains may warrant distinguishing amongst arguments justified under the standard Dung semantics, by assigning a higher ranking to those that have a higher number of attacks. For example, consider that scientific theories establish their credibility to the extent that they successfully defend themselves against arguments attempting to refute (attack) them. Exploring directions in which graded rankings could be extended to capture this alternative intuition
in a principled manner, is clearly a natural direction of research.
% to further strengthen the significance of the results presented here.

%%%%%%%

\smallskip

Dung's original semantics were born out of an attempt to systematize patterns underpinning the semantics of all the main non-monotonic logics. Section \ref{Sec:GradedPrefSubtheories}'s application of graded semantics to dialectical characterisations of  Preferred Subtheories strongly suggests that graded semantics could offer a similar tool for systematizing paraconsistent non-monotonic logics. By relaxing the conflict-freeness and self-defence requirements, paraconsistent non-monotonic inference relations could be defined akin to the way in which non-monotonic inferences are defined by Dung's argumentation theory.
As graded semantics enable a 'controlled' relaxation of conflict-freeness, these inference relations should be able to keep the explosivity of the ensuing logic at bay. The development of these ideas can lead, we argue, to novel interesting insights at the interface of paraconsistent logic and argumentation theory.
This in turn may motivate the study of rationality postulates for argumentation \cite{Caminada2007286} that are reformulated to accommodate paraconsistent inference relations.

Notice also, that in Section \ref{Sec:GradedPrefSubtheories} we applied graded semantics to defeat graphs that employ exogenous preferences to decide which attacks succeed as defeats, and highlighted how these explicit preferences take precedence over the implicit preferences yielded by our rankings. The interaction between exogenous and endogenously defined preferences is also an obvious direction for future research.

%%%%%%%%%

\smallskip

The theory of graded argumentation as developed in this paper has focused on extracting graduality from information endogenous to standard argumentation frameworks, essentially by counting the attackers of arguments. This intuition can however be extended along lines explored by weighted argumentation frameworks \cite{dunne11weighted}. We briefly sketch how this could be done, assuming now each attack is assigned a value in $\mathbb{R}^+$ by a weighting function $w$.  Recall Definition \ref{definition:sensitive} and consider the following alternative definition of graded defense:
\begin{align}
\cfmn{m}{n}(X) & = \set{x \in \A \mid \rho( \set{(y,x) \in \overline{x} \mid \rho(y^- \cap X) < n}) < m} \label{eq:weighted}
% \nexistsn{y}{m}:  [~y \ar x \AND \nexistsn{z}{n}:  [~z \ar y \AND z \in X~]~]}
\end{align}
where, for $R \subseteq \ar$, $\rho(R) =  \sum_{(x,y) \in R} w(x,y)$. Clearly the definition of graded defense given in Definition \ref{definition:sensitive} is the special case of \eqref{eq:weighted} where $w$ assigns the weight of $1$ to each attack. This generalization suggests that the logic behind graded semantics can be leveraged beyond the standard case explored here, and therefore  also applied to existing proposals based on weighted argumentation (e.g., \cite{dunne11weighted}).

\smallskip

In Section \ref{Sec:RelatedWork} we discussed differences between rankings defined by graded semantics and propagation-based approaches. Although, as we have shown, the two approaches build on different underpinning intuitions, it should be stressed that they remain compatible. In fact an interesting direction for future research would be to explore how features of the latter might be integrated with our graded approach. For example, recalling the example illustrating that strict counter-transitivity is not satisfied, one might note that since $b$'s attacker $c$ is ranked above $a$'s attacker $d$ by the graded
ranking in Figure \ref{SCTCounterExample}i, then the ranking could be refined to favour $a$ over $b$. Similarly, defense precedence could be enforced by noting that since $b$'s attacker $f$ is ranked strictly higher
than $a$'s attacker $c$ (according to the graded ranking in Figure \ref{SCTCounterExample}ii, then the ranking can be refined to rank $b$  strictly below $a$. By the same reasoning, the ranking in Figure \ref{SCTCounterExample}iii could be refined to rank $b1$ above $b$ and so $a$ above $a1$ (so enforcing satisfaction of `increase of an attack path'  and `increase of a defense path'), and the ranking $a1^*  \approx^S a1$ in Example \ref{ExReinstatement2Diff} could be refined to obtain
$a1^*  \succ^S_\Delta a1$ by noting that
  $b1 \succ^S_{\Delta} b1^*$.

%%%%%%%%%

\smallskip

Finally,  Dung's standard semantics lend themselves to natural dialectical interpretations via argument games (cf. \cite{modgil09proof} for an overview). Given that graded semantics retain many of the key fixpoint-theoretic properties of standard Dung's semantics, we believe that standard argument games---specifically the game for the grounded extension and for the credulous preferred semantics---could be appropriately modified to obtain game-theoretic characterizations of (at least some of) our semantics.\footnote{Games for all semantics could be obtained via a detour through suitable logic games. Cf. \cite{grossi13abstract}.} These types of games would shed novel light on the under-investigated link between dialectical approaches to argumentation and the younger literature on ranking-based semantics.

%%%%%%%%%%%%%%%%%%%%%%%%%%%%%%%%%%%%%%%%%%%%%%%%%

%Our approach is flexible towards the issue of whether a higher vs. lower number of attackers should %constitute a higher vs. lower quality for the argument at issue. Can we define a version that captures %scientific argumentation ?

%%%%%%%%%%%%%%%%%%%%%%%%%%%
%%%%%  APPENDIX
%%%%%%%%%%%%%%%%%%%%%%%%%%%%%

\appendix

\section*{Appendix. Proofs of Section \ref{Sec:Background}} \label{appendix:proofs}

\begin{proof}[Proof of Fact \ref{fact:continuous}]
\fbox{\eqref{eq:upward}}
We prove the two directions.
\rightleft The claim follows from the monotonicity of $\cf{\Frame}$. For any $X$, $\cf{\Frame}(X) \subseteq \cf{\Frame}(\bigcup_{X \in D} X)$, hence $\bigcup_{X \in D} \cf{\Frame}(X) \subseteq \cf{\Frame}(\bigcup_{X \in D} X)$.
\leftright Assume $x \in \cf{\Frame}(\bigcup_{X \in D} X)$. By finitariness, and since $D$ is upward directed, $\exists X \in D$ s.t. if $y \rightarrow x$, $\exists z \in X$ s.t. $z \rightarrow y$ Hence $x \in \bigcup_{X \in D} \cf{\Frame}(X)$.
\fbox{\eqref{eq:downward}}
We prove the two directions.
\leftright The claim follows again from the monotonicity of $\cf{\Frame}$. For any $X$, $\cf{\Frame}(X) \supseteq \cf{\Frame}(\bigcap_{X \in D} X)$, hence $\bigcap_{X \in D} \cf{\Frame}(X) \supseteq \cf{\Frame}(\bigcap_{X \in D} X)$.
\rightleft Assume $x \in \bigcap_{X \in D} \cf{\Frame}(X)$. By finitariness there exists a finite set $X \in D$ defending $x$. By the assumption $X$ is contained in $\bigcap_{X \in D} X$, for otherwise there would exist a $Y \in D$ such that $X \supset Y$ and $x \not\in \cf{\Frame}(Y)$, against the assumption. We conclude that $x \in \cf{\Frame}\left(\bigcap_{X \in D} X\right)$.\\
%\sm{You call, but for the proof of (2), an alternative might be: Let $X$ be the smallest (and by finitariness) %%%\emph{finite} set $X \in D$ defending $x$. Since $D$ is downward directed, $X \in \bigcap_{X \in D} X$. %Hence $x \in \cf{\Frame}\left(\bigcap_{X \in D} X\right)$. }
\end{proof}

\begin{proof}[Proof of Lemma \ref{lemma:preserve_cf}]
\fbox{$X \subseteq \cf{\Frame}^n(X) $} holds by the assumption that $X$ is admissible and by the monotonicity of $\cf{}$ (Fact \ref{fact:simple}).
\fbox{$\cf{\Frame}^n(X) \subseteq \cff{\Frame}(\cf{\Frame}^n(X))$} is proven by induction over $n$. The base case holds by assumption as $\cf{\Frame}^0(X) = X$ is admissible. For the induction step assume (IH) that $\cf{\Frame}^n(X) \subseteq \cff{\Frame}(\cf{\Frame}^n(X))$. We show that $\cf{\Frame}^{n+1}(X) \subseteq \cff{\Frame}(\cf{\Frame}^{n+1}(X))$. Suppose towards a contradiction that is not the case. Then there exists $x, y \in \cf{\Frame}^{n+1}(X) = \cf{\Frame}(\cf{\Frame}^{n}(X))$ such that $x \al y$. By the definition of $\cf{}$, there exists $z \in \cf{\Frame}^{n}(X)$ such that $y \al z$. But as  $y \in \cf{\Frame}^{n+1}(X)$ there exists also $w \in \cf{\Frame}^{n}(X)$ such that $z \al w$. From this we conclude that $\cf{\Frame}^{n}(X)$ is not conflict free, against IH.
\fbox{$\cff{\Frame}(\cf{\Frame}^n(X)) \subseteq \cff{\Frame}(X)$} follows from the first claim by the antitonicity of $\cff{}$ (Fact \ref{fact:simple}).
\end{proof}

\begin{proof}[Proof of Lemma \ref{theorem:approxim8}]

\fbox{\eqref{eq:below}}
\fbox{First}, we prove that $\bigcup_{0 \leq n < \omega} \cf{\Frame}^n(X)$ is a fixpoint by the following series of equations:
\begin{align*}
\cf{\Frame} \left( \bigcup_{0 \leq n < \omega} \cf{\Frame}^n(X) \right) & =  \bigcup_{0 \leq n < \omega}\cf{\Frame}(\cf{\Frame}^n(X)) \\
                                                                                                                                                %& = & \bigcup_{1 \leq n < \omega} \cf{\Frame}^n(X) \\
                                                                                                                                                & =  \bigcup_{0 \leq n < \omega} \cf{\Frame}^n(X)
\end{align*}
where the first equation holds by the continuity of $\cf{\Frame}$ (Fact \ref{fact:continuous}) and the second by the fact that, since $X$ is admissible, $X = \cf{\Frame}^0(X) \subseteq \cf{\Frame}^n(X)$ for any $0 \leq n < \omega$.
\fbox{Second}, we prove that $\bigcup_{0 \leq n < \omega} \cf{\Frame}^n(X)$ is indeed the least fixpoint containing $X$. Suppose, towards a contradiction that there exists $Y$ s.t.: $X \subset Y = \cf{\Frame}(Y) \subset \bigcup_{0 \leq n < \omega} \cf{\Frame}^n(X)$. It follows that $X \subset Y =  \cf{\Frame}(Y) \subset \cf{\Frame}^n(X)$ for some $0 \leq n < \omega$. But, by monotonicity, we have that $\cf{\Frame}^n(X) \subseteq \cf{\Frame}^n(Y)$. Contradiction.
%\fbox{Third}, we prove the conflict-freenes of the fixpoint: $\lfp_X.\cf{\Frame} \subseteq \cff{\Frame}(\lfp_X.%\cf{\Frame})$

\fbox{\eqref{eq:above}} The proof is similar to the previous case, but involves a few extra subtleties. \fbox{First}, we prove that $\bigcap_{0 \leq n < \omega} \cf{\Frame}^n(\cff{\Frame}(X))$ is a fixpoint, through the series of equations
\begin{align*}
\cf{\Frame} \left( \bigcap_{0 \leq n < \omega} \cf{\Frame}^n(\cff{\Frame}(X)) \right) & =  \bigcap_{0 \leq n < \omega}\cf{\Frame}(\cf{\Frame}^n(\cff{\Frame}(X))) \\
                                                                                                                                                %& = & \bigcup_{1 \leq n < \omega} \cf{\Frame}^n(X) \\
                                                                                                                                                & =  \bigcap_{0 \leq n < \omega} \cf{\Frame}^n(\cff{\Frame}(X))
\end{align*}
which hold by Fact \ref{fact:continuous} and by the fact that $\cff{\Frame}(X) = \cf{\Frame}^0(\cff{\Frame}(X)) \supseteq \cf{\Frame}^n(\cff{\Frame}(X))$ for any $0 \leq n < \omega$. The latter property holds because $X$ is assumed to be admissible, and hence $X \subseteq \cf{\Frame(X)}$.
By the antitonicity of $\cff{\Frame}$ (Fact \ref{fact:simple}) we therefore have that $\cff{\Frame}(\cf{\Frame}(X)) \subseteq \cff{\Frame}(X)$ and, since $\cf{\Frame} = \cff{\Frame} \circ \cff{\Frame}$ (Fact \ref{fact:simple} again), it follows that $\cf{\Frame}(\cff{\Frame}(X)) \subseteq \cff{\Frame}(X)$ as desired.
\fbox{Second}, it remains to be proven that $\bigcap_{0 \leq n < \omega} \cf{\Frame}^n(\cff{\Frame}(X))$ is indeed the largest fixpoint of $\cf{\Frame}$ contained in $\cff{\Frame}(X)$. Like in the previous case we proceed towards a contradiction. Suppose there exists $Y$ s.t.: $\bigcap_{0 \leq n < \omega} \cf{\Frame}^n(\cff{\Frame}(X)) \subset Y = \cf{\Frame}(Y) \subseteq \cff{\Frame}(X)$. There must therefore exist an integer $k$ such that, as a consequence of Lemma \ref{lemma:preserve_cf}, $X \subseteq \cf{\Frame}^k(X) \subseteq  \cff{\Frame}(\cf{\Frame}^k(X)) \subset Y$. By the antitonicity of $\cff{\Frame}$ and the fact that $\cf{} = \cff{} \circ \cff{}$ (Fact \ref{fact:properties_dn}), and since $Y$ is taken to be a fixpoint of $\cf{\Frame}$, it follows that $\cff{\Frame} (Y) = \cff{\Frame}(\cf{\Frame}(Y)) \subset \cf{\Frame}(\cf{\Frame}^k(X))$. So $\cff{\Frame} (Y)$ is also a fixpoint of $\cf{\Frame}$, it contains $X$ and it is included in $\bigcup_{0 \leq n < \omega} \cf{\Frame}^n(X)$, which, by the previous claim, is the smallest fixpoint of $\cf{\Frame}$ containing $X$. Contradiction.
\end{proof}

\begin{proof}[Proof of Theorem \ref{lemma:smallest}]
By Lemma \ref{theorem:approxim8}, $\lfp_X.\cf{\Frame} = \bigcup_{0 \leq n < \omega} \cf{\Frame}^n(X)$. So $\bigcup_{0 \leq n < \omega} \cf{\Frame}^n(X)$ denotes a fixpoint of $\cf{\Frame}$, and more specifically the smallest such fixpoint that contains the admissible set $X$. By Lemma \ref{lemma:preserve_cf} we also know that such set is conflict-free. We therefore conclude that $\bigcup_{0 \leq n < \omega} \cf{\Frame}^n(X)$ is a conflict-free fixpoint of $\cf{\Frame}$, that is, a complete extension, and that this complete extension is the smallest such set containing $X$, as claimed.
%Consider now the function $\cf{\Frame}(X \cup \cdot)$.
%Such function is clearly monotonic (by Fact \ref{fact:simple}) and therefore the Knaster-Tarski theorem %applies, guaranteeing the existence of such fixpoint: $\lfp.\cf{\Frame}(X \cup %\cdot)$.
%\fbox{{\em i)}} We first show that $\lfp.\cf{\Frame}(X \cup \cdot) = \lfp_X.\cf{\Frame}$. It is clear that $X \subseteq \lfp_X.\cf{\Frame}$.
%Now we have that $\lfp.\cf{\Frame}(X \cup \cdot) = \cf{\Frame}(X \cup (\lfp.\cf{\Frame}(X \cup \cdot)) = \cf{\Frame}(\lfp.\cf{\Frame}(X \cup \cdot))$, hence $\lfp.\cf{\Frame}(X \cup \cdot)%$ is a fixpoint of $\cf{\Frame}$. It is the smallest such fixpoint for the following reason. We established that $\lfp_X.\cf{\Frame} \subseteq Y$ for all $Y \supseteq X$ s.t. $\cf{\Frame}(X %\cup Y) = Y$.
%For such a $Y$ it follows that  $\cf{\Frame}(X \cup Y) = \cf{\Frame}(Y)$ and hence $\lfp.\cf{\Frame}(X \cup Y) \subseteq Y$ for all $Y \supseteq X$ s.t. $\cf{\Frame}(Y) = Y$, which %proves the claim.
%\fbox{{\em ii)}} We need to show that $\lfp_X.\cf{\Frame}$ is conflict-free.
%The claim is a consequence of Lemma \ref{lemma:preserve_cf}.
%, that is: $\lfp_X.\cf{\Frame} \subseteq \cff{\Frame}(\lfp_X.\cf{\Frame})$. Suppose towards a contradiction %that is not the case. Then there exists $x, y \in \lfp_X.\cf{\Frame}$ such that $x \al y$. By Lemma %%%%%%%\ref{theorem:approxim8} there exists a $k$ such that $x, y \in \cf{\Frame}^k(X)$. But, by Lemma %%%%%%%%%\ref{lemma:preserve_cf}, $\cf{\Frame}^k(X)$ is conflict free. Contradiction.
\end{proof}

%%%%%%%%%%%%%%%%%%%%%%%%%%%%%%%%%%%%%%%%%%%%%%%%%

\section*{References}
%\bibliographystyle{plain}
%\bibliography{AccrualBib}

\end{document}